\documentclass{article}
\usepackage[ruled,vlined]{algorithm2e}
\usepackage{microtype}
\usepackage{graphicx}
\usepackage{subfigure}
\usepackage{booktabs} 
\usepackage{comment}
\usepackage{verbatim}
\usepackage{subcaption}
\usepackage[table,xcdraw]{xcolor}
\usepackage[accepted]{icml2025}

\usepackage{natbib} 
\usepackage{colortbl} 
\usepackage{etoolbox} 
\usepackage{multirow}
\usepackage{tabu}
\usepackage{array}

\usepackage{pifont}
\usepackage{threeparttable}
\usepackage{mathtools}

\usepackage[hidelinks]{hyperref}
\usepackage{url}

\usepackage{booktabs}

\usepackage[textsize=tiny]{todonotes}

\usepackage{soul}

\usepackage{amsmath}
\usepackage{amssymb}
\usepackage{mathtools}
\usepackage{amsthm}

\usepackage[capitalize,noabbrev]{cleveref}

%%%%% NEW MATH DEFINITIONS %%%%%

\usepackage{amsmath,amsfonts,bm,mathtools,amsthm}

% Mark sections of captions for referring to divisions of figures

% Highlight a newly defined term

\usepackage{todonotes}

% Figure reference, lower-case.
\def\dref#1{(\ref{#1})}

% Figure reference, capital. For start of sentence

% Section reference, lower-case.

% Section reference, capital.

% Reference to two sections.

% Reference to three sections.

% Reference to an equation, lower-case.
\def\eqref#1{equation~\ref{#1}}
% Reference to an equation, upper case

% A raw reference to an equation---avoid using if possible

% Reference to a chapter, lower-case.

% Reference to an equation, upper case.

% Reference to a range of chapters

% Reference to an algorithm, lower-case.

% Reference to an algorithm, upper case.

% Reference to a part, lower case

% Reference to a part, upper case

\def\1{\bm{1}}

% Random variables

% rm is already a command, just don't name any random variables m

% Random vectors

% Elements of random vectors

% Random matrices

% Elements of random matrices

% Vectors

% Elements of vectors

% Matrix

% Tensor
\DeclareMathAlphabet{\mathsfit}{\encodingdefault}{\sfdefault}{m}{sl}
\SetMathAlphabet{\mathsfit}{bold}{\encodingdefault}{\sfdefault}{bx}{n}

% Graph

% Sets

% Don't use a set called E, because this would be the same as our symbol
% for expectation.

% Entries of a matrix

% entries of a tensor
% Same font as tensor, without \bm wrapper

% The true underlying data generating distribution

% The empirical distribution defined by the training set

% The model distribution

% Stochastic autoencoder distributions

 % Laplace distribution

\newcommand{\E}{\mathbb{E}}

% Wolfram Mathworld says $L^2$ is for function spaces and $\ell^2$ is for vectors
% But then they seem to use $L^2$ for vectors throughout the site, and so does
% wikipedia.

 % See usage in notation.tex. Chosen to match Daphne's book.

\DeclareMathOperator*{\argmin}{arg\,min}

\DeclareMathOperator{\prox}{prox}

% Self-defined math commands.
\usepackage{amsmath}
\usepackage{amsfonts}
\usepackage{amssymb}
\newcommand{\hbwi}{{\tilde{\boldsymbol{w}}^{(i)}}}
\newcommand{\hbwg}{{\tilde{\boldsymbol{w}}^{(g)}}}
\newcommand{\bwi}{{{\boldsymbol{w}}^{(i)}}}
\newcommand{\bwg}{{{\boldsymbol{w}}^{(g)}}}
\newcommand{\sbwg}{{{\boldsymbol{w}}_{\star}^{(g)}}}
\newcommand{\sbwi}{{{\boldsymbol{w}}_{\star}^{(i)}}}
\newcommand{\bw}{{\boldsymbol{w}}}
\newcommand{\hDg}{\tilde{\Delta}^{(g)}_{\text{stat}}}
\newcommand{\hDi}{\tilde{\Delta}^{(i)}_{\text{stat}}}
\newcommand{\sdi}{\boldsymbol{\delta}_{\star}^{(i)}}
\newcommand{\hdi}{\tilde{\boldsymbol{\delta}}^{(i)}}

%% optimization notation
%\newcommand{\bw}{\boldsymbol{w}}
%\newcommand{\bwi}{\boldsymbol{w}_{i}}

\newcommand{\bswi}{\widetilde{\boldsymbol{w}}^{(i)}}

\newcommand{\bwlocalopttp}{\boldsymbol{w}^{(i)}_{t,\star}}

\newcommand{\bwlocaloptt}{\boldsymbol{w}^{(i)}_{t-1,\star}}

\newcommand{\bwtg}{\boldsymbol{w}_{t}^{(g)}}
\newcommand{\bwtpg}{\boldsymbol{w}_{t+1}^{(g)}}

\newcommand{\bswg}{\widetilde{\boldsymbol{w}}^{(g)}}

% lerning rate
%local 

%global

% setup
\newcommand{\bX}{\boldsymbol{X}}
\newcommand{\by}{\boldsymbol{y}}
%\newcommand{\bw}{\boldsymbol{w}}

% Colored Anotations

% Theorem
\newtheorem{assumption}{Assumption}
\newtheorem{theorem}{Theorem}
\newtheorem{remark}{Remark}
\newtheorem{lemma}{Lemma}

\newtheorem{corollary}{Corollary}

\DeclarePairedDelimiterX{\inp}[2]{\langle}{\rangle}{#1, #2}

\icmltitlerunning{Understanding the Statistical Accuracy-Communication Trade-off in Personalized Federated Learning with Minimax Guarantees}

\begin{document}

\twocolumn[
\icmltitle{Understanding the Statistical Accuracy-Communication Trade-off in Personalized Federated Learning with Minimax Guarantees}

\icmlsetsymbol{equal}{*}

\begin{icmlauthorlist}
\icmlauthor{Xin Yu}{equal,yy1}
\icmlauthor{Zelin He}{equal,yy1}
\icmlauthor{Ying Sun}{yy2}
\icmlauthor{Lingzhou Xue}{yy1}
\icmlauthor{Runze Li}{yy1}

\end{icmlauthorlist}

\icmlaffiliation{yy1}{Department of Statistics, The Pennsylvania State University, University Park, PA 16802, USA}
\icmlaffiliation{yy2}{School of Electrical Engineering and Computer Science, The Pennsylvania State University, University Park, PA 16802, USA}

\icmlcorrespondingauthor{Ying Sun}{ybs5190@psu.edu}
\icmlcorrespondingauthor{Lingzhou Xue}{lzxue@psu.edu}

\newcommand{\fix}{\marginpar{FIX}}
\newcommand{\new}{\marginpar{NEW}}

\icmlkeywords{Personalized Federated Learning, Statistical Complexity, Communication Complexity}

\vskip 0.3in
]

\printAffiliationsAndNotice{\icmlEqualContribution}

\begin{abstract}
Personalized federated learning (PFL) offers a flexible framework for aggregating information across distributed clients with heterogeneous data. This work considers a personalized federated learning setting that simultaneously learns global and local models. While purely local training has no communication cost, collaborative learning among the clients can leverage shared knowledge to improve statistical accuracy, presenting an accuracy-communication trade-off in personalized federated learning. However, the theoretical analysis of how personalization quantitatively influences sample and algorithmic efficiency and their inherent trade-off is largely unexplored. This paper makes a contribution towards filling this gap, by providing a quantitative characterization of the personalization degree on the tradeoff. The results further offers theoretical insights for choosing the personalization degree. As a side contribution, we establish the minimax optimality in terms of statistical accuracy for a widely studied PFL formulation. The theoretical result is validated on both synthetic and real-world datasets and its generalizability is verified in a non-convex setting.

\end{abstract}

\section{Introduction}
\label{sec: intro}

Federated Learning (FL) \cite{mcmahan2017communication} has emerged as a promising learning framework for aggregating information from distributed data, allowing clients to collaboratively train a shared global model in a communication-efficient manner. However, a critical challenge arising in the presence of various data across clients is data heterogeneity. In such cases, a single global model may fail to generalize well to all clients, motivating the need for model personalization. PFL enhances FL by learning  personalized models tailored to individual clients, and has demonstrated strong empirical performance in various applications, such as driver monitoring \cite{yuan2023federated}, and mobile computing \cite{zhang2024modeling}.

An important question in PFL is determining the  degree of personalization, which controls the transition between fully collaborative training and pure local training. A higher degree of collaboration (less personalization) typically requires more frequent information exchange, potentially improving learning accuracy when client data distributions are similar. Conversely, increasing personalization reduces communication costs by prioritizing localized training, but may lead to higher generalization errors due to the limited size of client datasets \cite{paragliola2022evaluation}. Understanding this trade-off is essential for optimizing model performance under communication constraints.

Most existing works focus purely on the algorithmic perspective of PFL \cite{lin2022personalized, li2024power, wang2022communication, wang2024fadas}. However, the statistical accuracy of the solutions obtained in PFL remains largely unexplored. As a result, the connections between statistical accuracy, communication efficiency, and their trade-offs are not well understood, leaving a theoretical gap in understanding how to select the optimal personalization degree. In this paper, we fill in this gap by providing a fine-grained theoretical analysis of a widely adopted PFL problem formulation given by~ (\ref{obj1:fedprox}) that trains simultaneously  local and global models. We quantitatively analyze the influence of personalization on both statistical and optimization convergence rates for \emph{each of the local models}. Specifically, our contribution can be summarized as follows:

$\bullet$ \textbf{Statistical Accuracy with Optimal Guarantee}. We provide a  non-asymptotic  statistical convergence rate of the solution of Problem (\ref{obj1:fedprox}), revealing how  personalization degree influences the statistical accuracy of each local model. In particular, as the personalization degree increases, the rate approaches to that of pure local training, $\mathcal{O}(1/n)$, where $n$ is the sample size per client. Conversely, decreasing the personalization degree allows the models to utilize information across all clients, achieving a rate closer to $\mathcal{O}(1/(mn)+R^2)$, where $mn$ is the total sample size of all clients and $R$ quantifies the statistical heterogeneity of the local datasets. We further establish the minimax optimality of the derived statistical rates, demonstrating the tightness of our analysis. To the best of our knowledge, this is the first work to achieve such an optimality.

$\bullet$ \textbf{Communication Efficiency under Personalization}. 
On the optimization aspect, we treat Problem (\ref{obj1:fedprox}) as a bi-level problem and propose (stochastic) algorithms with an explicit characterization of computation and communication complexity. We establish that the communication cost of finding an $\varepsilon$-solution is $\mathcal{O}(\kappa \frac{\lambda+\mu}{\lambda+L} \log\frac{1}{\varepsilon})$ and the computation cost in terms of gradient evaluations is $\mathcal{O}(\kappa \log\frac{1}{\varepsilon})$, where $\lambda$ is a parameter determining the personalization degree. This demonstrates that a smaller $\lambda$, corresponding to a higher degree of personalization, reduces communication complexity without incurring additional computational overhead.

$\bullet$ \textbf{Accuracy-Communication Tradeoff with Empirical Validation}. Building on our theoretical results for statistical and optimization convergence, we quantitatively characterize the trade-off between statistical accuracy and communication efficiency in PFL and discuss practical insights for selecting the personalization degree. We then conduct numerical studies on logistic regression with synthetic data and real-world data under our assumptions. The results corroborate our theoretical findings. We further test the numerical performance with CNN models, showing that the theoretical results, though established under convexity assumptions,  generalize to non-convex settings.

\section{Related Works}
\label{sec: related_work}

In recent years, FL has become an attractive solution for training models locally on distributed clients, rather than transferring data to a single node for centralized processing \citep{mammen2021federated,wen2023survey,beltran2023decentralized}.  As each client generates its local data, statistical heterogeneity naturally arises with data being non-identically distributed between clients  \citep{li2020federated,ye2023heterogeneous}. Given the variability of data in a network, model personalization is an appealing strategy used to improve statistical accuracy for each client. Formulations enabling model personalization have been studied independently from multiple fields. For example, meta-learning \citep{chen2018federated,jiang2019improving,khodak2019adaptive,fallah2020personalized} assumes all local models follow a common distribution. By minimizing the average validation loss, these methods aim to learn a meta-model that generalizes well to unseen new tasks. Representation learning \citep{zhou2020encoding,wang2024personalization} focuses on a setting where the local models can be represented as the composition of two parts, with one common to all clients and the other specific to each client. Our work, however, differs in that we focus on a personalized FL setting with a mixture of global and local models. Closely related are multi-task learning methods \citep{liang2020think} and transfer learning methods \citep{li2022transfer,he2024transfusion,he2024adatrans}, but the statistical rate is established for either the average of all models or only the target model. Therefore, it remains unclear how personalization influences the statistical accuracy of each individual local model in these settings.

In the FL community, personalized FL methods can be broadly divided into two main strands. Different from the works mentioned previously, studies here primarily focus on the properties of the iterates generated by the algorithms.
One line of work \citep{arivazhagan2019federated,liang2020think,singhal2021federated,collins2021exploiting} is based on the representation learning formulation.
%where both the algorithmic convergence and statistical accuracy of the iterates have been investigated. \he{However, as they focus on a different structure, there is no global model. Thus the trade-off under personalization cannot apply there.} 
Another line of work~\citep{smith2017federated,li2020federated,hanzely2020federated,hanzely2020lower,li2024convergence}  achieves personalization by relaxing the requirement of learning a common global model through regularization techniques. In particular, algorithms and complexity lower bounds specific to Problem (\ref{obj1:fedprox}) were studied in \citep{li2020federated,hanzely2020lower,t2020personalized,hanzely2020federated,li2021ditto}, see Table~\ref{table1:algorithm_comparison} for a detailed comparison.
These works study Problem (\ref{obj1:fedprox}) from a pure optimization perspective and have not provided how personalization influences statistical accuracy and, consequently, the trade-off between statistical accuracy and communication efficiency. 

There are only a few recent works we are aware of that study the statistical accuracy of regularization-based PFL. Specifically, \citet{cheng2023federated} investigates the asymptotic behavior of the (personalized) federated learning under an over-parameterized linear regression model; neither a finite-sample rate nor algorithms are provided. \citet{chen2021theorem} studies Problem (\ref{obj1:fedprox}) and establishes a non-asymptotic sample complexity. However, even with some overly strong assumptions, their analysis cannot match the statistical lower bound in most cases. Furthermore, neither of these works reveals the trade-off. While \cite{bietti2022personalization} studies the influence of personalization in PFL, their focus is on the trade-off between privacy and optimization error, leaving the accuracy-communication trade-off unexplored.

\textbf{Notation.}
Denote $[n]: = \{1,2,\cdots,n\}$.  $\|\cdot{}\|_2 $ denotes the $\ell_2$  norm for vectors and the Frobenius norm for matrices. For two non-negative sequences $\{a_n\}, \{b_n\}$, we denote $a_n \lesssim b_n$ if $a_n \leq C b_n$ for some constant $C>0$ when $n$ is sufficiently large. We also use $a_n=\mathcal{O}\left(b_n\right)$, whose meaning is the same as $a_n \lesssim b_n$. $\tilde{\mathcal{O}}(\cdot)$ hides logarithmic factors. For two real
numbers $a$, $b$, we let $a \wedge b = \min\{a, b\}$ and $a \vee b = \max\{a, b\}$.

\section{Preliminaries}

 We consider an FL setting with $m$ clients. Each client $i$ has a collection of  data $S_i = \{z_{ij}\}_{j \in [n_i]}$, where $n_i$ is the sample size of client $i$, with elements i.i.d. drawn from  distribution $\mathcal{D}_i$. The data distributions among the clients are possibly heterogeneous.
 For each client $i$, the ultimate goal is to find a model that minimizes its local risk at the population level, defined as:
\begin{align}
\label{def:ground_truth}
\sbwi\in \arg\min_{\bw} \: \mathbb{E}_{z \sim \mathcal{D}_i} \ell(\bw, z),
\end{align}
where $\ell(\bw,z)$ is a loss function measuring the fitness of $\bw$ to data point $z$. We call $\sbwi \in \mathbb{R}^{d}$ the ground truth model for client $i$. To understand the influence of model  personalization, we study the following widely adopted PFL problem \cite{hanzely2020federated,mishchenko2023convergence,li2020federated}:
\begin{equation}
\label{obj1:fedprox}
 \min_{\substack{\bwg \\ \{\bwi\}_{i=1}^m}}  \sum_{i\in[m]} p_i \left({L}_{i}(\bwi, S_{i})+\frac{\lambda}{2}\|\bwg-\bwi\|^{2} \right),
\end{equation} 
where  $\bwg$ and $\bwi$ are respectively the global  and $i$-th local model to be learned, $L_i(\bw, S_i):= \sum_{j=1}^{n_i}\ell(\bw,z_{ij})/n_i$ is the empirical risk of client $i$ on its local data $S_i$, and the last term is a regularization term with parameter $\lambda$ controlling the personalization degree. The set $\{p_{i}\}_{i \in [m]}$ is a collection of nonnegative weights with $\sum_{i=1}^m p_i = 1$.

In Problem (\ref{obj1:fedprox}), each client \( i \) learns a personalized model \( \bwi \) by fitting to its local data \( S_i \), while collaboration is achieved by shrinking the local models \( \bwi \) towards a common global model \( \bwg \). As \( \lambda \to 0 \), the influence of the global model diminishes, and Problem (\ref{obj1:fedprox}) increasingly behaves like the LocalTrain problem:
\begin{align}
\label{obj:local}
    \text{LocalTrain:} \quad \min_{\bwi}  \ L_i(\bwi, S_i), \quad \forall i \in [m],
\end{align}
where each client independently trains its model using only local data. In this case, Problem (\ref{obj1:fedprox}) is fully decoupled into \( m \) separate problems,
achieving the maximum degree of personalization. On the other hand, as \( \lambda \to \infty \), the regularization term shrinks all local models \( \bwi \)  towards the global model \( \bwg \). This corresponds to reducing the personalization degree and eventually pushes Problem (\ref{obj1:fedprox}) to another extreme given by:
\begin{align}
\label{obj:global}
    \text{GlobalTrain:} \quad \min_{\substack{\bwi=\bwg, \\\forall i \in [m]}}  \ \sum_{i=1}^{m} p_i L_i(\bwi, S_i),
\end{align}
where the knowledge from all clients is pooled to train a single global model. Adjusting \( \lambda \), therefore, controls the degree of personalization among clients and thus affects the statistical accuracy of the resulting solution. Noticeably, the choice of \( \lambda \) also potentially affects solving Problem (\ref{obj1:fedprox}) algorithmically in the FL setting. Intuitively, for large $\lambda$,  we anticipate more frequent communications among the clients to facilitate collaboration as the local models are more tightly coupled to the global model. In contrast, a smaller \( \lambda \)  encourages more independent updates and reduces the need for communication, with LocalTrain (cf.(\ref{obj:local})) being an edge case where all clients train independently without any communication.

We quantify both the statistical and optimization error of a PFL algorithm applied to Problem (\ref{obj1:fedprox}) by evaluating the Euclidean distance between the algorithm’s output after $t$ communication rounds, \( \bw_{t}^{(i)} \), and the ground truth local model \( \sbwi \) for each client  $i \in [m]$. Let \( \bswi \) denote the solution of $i$-th local model in Problem (\ref{obj1:fedprox}). We have the following decomposition:
$$
\|\bw_{t}^{(i)} - \sbwi\|^2 \leq 2\underbrace{\|\bw_{t}^{(i)} - \bswi\|^2}_{\text{optimization error}} + 2\underbrace{\|\bswi - \sbwi\|^2}_{\text{statistical error}}.
$$
In the rest of the paper, we first establish the statistical and optimization convergence rates independently. We then integrate these results to formally provide the trade-off between statistical accuracy and communication efficiency under different levels of personalization. 

\section{Convergence Rate Analysis}
\subsection{Effect of Personalization on Statistical Accuracy}
\label{sec: statistical error bound}
In this section, we analyze how the choice of \( \lambda \)  influences the statistical accuracy of the solution to Problem (\ref{obj1:fedprox}). Recall the ground truth local model $\sbwi \in \mathbb{R}^d$ defined in (\ref{def:ground_truth}). Next, we introduce the measure of statistical heterogeneity and define the parameter space for the statistical estimation problem. Specifically, we consider estimating $\sbwi$ from the following parameter space:

\begin{small}\begin{align}
\label{def:hetero_measure}
    \mathcal{P}(R) = \left\{  \{\sbwi\}_{i=1}^{m}  : \left\|\sbwi- \sum_{i=1}^m p_i \sbwi \right\|^2 \leq R^2, \forall i \in [m]\right\}.
\end{align}\end{small}
In (\ref{def:hetero_measure}), the statistical heterogeneity is measured as the Euclidean distance between the ground truth local models and their weighted average. A larger \( R \) indicates a larger difference among the clients’ local models, hence stronger statistical heterogeneity.
Such a measure is commonly imposed in the existing literature \citep{li2023targeting, chen2023efficient, chen2021theorem, duan2023adaptive}.  More discussion about the parameter space $\mathcal{P}(R)$ can be found in Appendix \ref{appendix: heter}. We assume the model dimension $d$ is finite.

Under such a parameter space, we aim to investigate the statistical accuracy of the solution to Problem (\ref{obj1:fedprox}) measured by $\mathbb{E}\| \bswi -\sbwi \|^2$, where the expectation is taken with respect to the joint distribution.

\begin{remark}
    Note that some existing literature adopts alternative metrics, such as %\textit{average excess risk} and 
\textit{individual excess risk} \citep{chen2021theorem} to measure the statistical error. These two metrics are equivalent under the strong convexity and smoothness conditions on the loss functions.
\end{remark}

We study Problem (\ref{obj1:fedprox}) under the following regularity conditions.

\begin{assumption}[Smoothness]
\label{assump:smooth}
The loss function $\ell(\cdot, z)$ is L-smooth, i.e. for any $x,y \in \mathbb{R}^{d}$:
\begin{align}
 \|\nabla \ell(x, z) - \nabla \ell(y,z)\| \leq L \|x - y\|, \ \forall z.
\end{align}
\end{assumption}

\begin{assumption}[Strong Convexity]
\label{assump:sc} The empirical loss  $L_i(\cdot,S_i)$ is  $\mu$-strongly convex, i.e., for any $x,y \in \mathbb{R}^d$  and $i \in [m]$:
\begin{align}
    L_i(x,S_i) \geq L_i(y,S_i) + \langle \nabla L_i(y,S_i), x - y \rangle + \frac{\mu}{2} \|x - y\|^2.
\end{align}
\end{assumption}
\begin{assumption}[Bound Gradient Variance at Optimum]
\label{assump:bg}
There exists a nonnegative constant $\rho$ such that $\mathbb{E}_{z\sim\mathcal{D}_{i}} \|\nabla \ell(\sbwi, z)\|^2 \leq \rho^2$ for all $i \in [m]$.
\end{assumption}

The strong convexity and smoothness assumptions are used to establish the estimation error and are widely adopted in the theoretical analysis of regularization-based PFL \citep{t2020personalized,deng2020adaptive,hanzely2020federated,hanzely2020lower} and  FL \citep{chen2021theorem,cheng2023federated}. Assumption \ref{assump:bg} is used to quantify the observation noise, and is also standard in the literature~\citep{duan2023adaptive,chen2021theorem}. 

For simplicity, we assume $p_i = 1/m$ and $n_i = n$ for all $i \in [m]$. The following theorem provides an upperbound of the estimation error  $\mathbb{E}\| \bswi -\sbwi \|^2$ as a function of $\lambda$.

\begin{theorem}    \label{thm:stat_accuracy}
 Suppose Assumption \ref{assump:smooth}, \ref{assump:sc} and \ref{assump:bg} hold and consider the parameter space $\mathcal{P}(R)$ given by (\ref{def:hetero_measure}).  The local models $\{\bswi\}_{i=1}^m$ obtained by solving Problem (\ref{obj1:fedprox}) satisfy 
    \begin{align}\label{stat:med_eq2}
    \begin{split}
    &\mathbb{E}\left\|\bswi -\sbwi \right\|^2 \\ 
    & \leq \min \Bigg\{ \frac{48 \rho^2}{\mu^2} \frac{1}{N} + \left(\frac{48 L^2}{\mu^2}+3\right)R^2 + \frac{1}{q_1(\lambda)}, \\
    & \frac{4 \rho^2}{\mu^2}\frac{1}{n} + \left[\left(\frac{4 \rho^2}{\mu^3}\frac{1}{n}  + \frac{4}{\mu}R^2 \right) \lambda + \frac{8}{\mu^2} R^2 \lambda^2 \right]\Bigg\},
    \end{split}
    \end{align}
    where $N = mn$ and $q_1(\lambda)$, defined in (\ref{qlambda}), is a monotonically increasing function of $\lambda$ with $\lim_{\lambda \to \infty} q_1(\lambda) = \infty$. 
\end{theorem}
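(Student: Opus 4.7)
The strategy is to derive two independent upper bounds on $\mathbb{E}\|\bswi - \sbwi\|^2$, one tight for small $\lambda$ (high personalization) and one tight for large $\lambda$ (low personalization), then combine them via $\min$. Both start from the first-order optimality conditions of Problem (\ref{obj1:fedprox}):
\begin{equation*}
\nabla L_i(\bswi, S_i) + \lambda(\bswi - \bswg) = 0, \qquad \bswg = \sum_{k=1}^{m} p_k \widetilde{\bw}^{(k)},
\end{equation*}
so the global parameter is exactly the weighted mean of the local solutions, a structural fact I use throughout.

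\textbf{Small-$\lambda$ branch.}
I introduce the pure local ERM $\hat{\bw}^{(i)} := \arg\min_{\bw} L_i(\bw, S_i)$ and apply the split $\|\bswi - \sbwi\|^2 \leq 2\|\hat{\bw}^{(i)} - \sbwi\|^2 + 2\|\bswi - \hat{\bw}^{(i)}\|^2$. The first piece is handled by a standard M-estimation argument: strong convexity together with $\nabla L_i(\hat{\bw}^{(i)}, S_i)=0$ yields $\mu \|\hat{\bw}^{(i)} - \sbwi\| \leq \|\nabla L_i(\sbwi, S_i)\|$, and i.i.d.\ averaging with Assumption \ref{assump:bg} gives $\mathbb{E}\|\nabla L_i(\sbwi, S_i)\|^2 \leq \rho^2/n$, producing the $O(\rho^2/(\mu^2 n))$ baseline term. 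The second piece is a regularization-induced perturbation: subtracting the optimality conditions for $\bswi$ and $\hat{\bw}^{(i)}$ and invoking strong convexity gives $\mu\|\bswi - \hat{\bw}^{(i)}\| \leq \lambda\|\bswg - \bswi\|$, and bounding $\|\bswg - \bswi\|$ through $\bar{\bw} := \sum_k p_k \bw_\star^{(k)}$ and the heterogeneity constraint $\|\sbwi - \bar{\bw}\| \leq R$ produces the linear and quadratic $\lambda$ contributions, once the self-referential $\|\bswi - \sbwi\|$ on the right-hand side is resolved.

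\textbf{Large-$\lambda$ branch.}
Here I use the alternative decomposition $\|\bswi - \sbwi\|^2 \leq 2\|\bswi - \bswg\|^2 + 2\|\bswg - \sbwi\|^2$. The first summand vanishes as $\lambda \to \infty$: the optimality condition gives $\|\bswi - \bswg\| = \|\nabla L_i(\bswi, S_i)\|/\lambda$, and a smoothness-based bound on this gradient in terms of global-scale errors produces the residual term $1/q_1(\lambda)$, with $q_1$ monotonically increasing in $\lambda$ by construction. For the second summand, I write $\bswg - \sbwi = (\bswg - \bar{\bw}) + (\bar{\bw} - \sbwi)$: the heterogeneity assumption directly controls $\|\bar{\bw} - \sbwi\| \leq R$, while $\bswg$ is analyzed as an approximate minimizer of the pooled empirical objective with effective sample size $N = mn$, yielding the pooled rate $O(\rho^2/(\mu^2 N))$ and an $O((L/\mu)^2 R^2)$ bias that arises from applying smoothness to the mismatch between $\bar{\bw}$ and the true pooled population minimizer.

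\textbf{Main obstacle.}
The principal technical difficulty is the coupling: because $\bswg$ is a function of all $\{\widetilde{\bw}^{(k)}\}$, both branches produce self-referential inequalities in the collection $\{\|\widetilde{\bw}^{(k)} - \bw_\star^{(k)}\|\}_{k=1}^m$. Resolving them while (i) preserving the sharp $1/N$ pooled rate in the large-$\lambda$ branch without incurring a condition-number blow-up, and (ii) deriving an explicit form of $q_1(\lambda)$ whose reciprocal decays monotonically to zero as $\lambda \to \infty$, requires careful exploitation of cross-client independence of the noise $\nabla L_i(\sbwi, S_i)$ together with spectral control of the matrix $(H_i + \lambda I)^{-1}$ arising from a linearization of the local gradient around $\sbwi$.
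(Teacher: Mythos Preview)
Your two-branch architecture matches the paper's, but the execution of each branch differs in ways worth flagging.

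\textbf{Small-$\lambda$ branch.} You route through the pure local ERM $\hat{\bw}^{(i)}$ and treat the PFL solution as a $\lambda$-perturbation of it. The paper never introduces $\hat{\bw}^{(i)}$; instead it works directly with the optimality of $(\bswg,\{\bswi\})$ in the full objective, applies strong convexity to obtain a quadratic inequality in $\|\hDi\|$ with forcing terms $\rho/\sqrt{n}$ and $\lambda R^2$, and solves it. Crucially, the paper does this in two passes: first it bounds the \emph{averaged} error $\sum_i p_i\mathbb{E}\|\hDi\|^2$ (yielding a global-model bound $U_0$), then re-runs the same argument for a single client $i$ with $U_0$ inserted to control $\|\bswg - \sbwi\|$. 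This global-first-then-local structure is what resolves the self-reference you identify, and it does so without linearization. Your ERM-bridge route is viable and arguably more transparent, but you will have to close the loop on $\|\bswg - \bswi\|$ somehow, and the cleanest closure is essentially the paper's averaged argument anyway.

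\textbf{Large-$\lambda$ branch.} Here there is a genuine gap. The paper's key move is to introduce the \emph{GlobalTrain} solution $\hbwGT := \arg\min_{\bw}\sum_i p_i L_i(\bw,S_i)$ as an explicit bridge and prove two separate lemmas: (i) $\mathbb{E}\|\hbwGT - \sbwg\| \lesssim \rho/(\mu\sqrt{N}) + LR/\mu$, which is where the $1/N$ pooled rate and the $(L/\mu)^2 R^2$ bias originate, and (ii) $\|\bswg - \hbwGT\|^2$ is controlled by a term that shrinks with $\lambda$ (this is the content of Lemma~\ref{lem:delta} combined with smoothness). These two pieces, plus $\|\bswi - \bswg\|\leq 2\|\nabla L_i(\bswg,S_i)\|/(\mu+\lambda)$, give the three-way split $\mathbb{E}\|\bswi - \sbwi\|^2 \leq 3\mathbb{E}\|\bswi - \bswg\|^2 + 3\mathbb{E}\|\bswg - \sbwg\|^2 + 3R^2$ and define $q_1(\lambda)$ explicitly. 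Your plan to treat $\bswg$ as an ``approximate minimizer of the pooled objective'' is the right intuition but is not operational without the $\hbwGT$ bridge; and your proposed resolution via spectral control of $(H_i + \lambda I)^{-1}$ and gradient linearization is both unnecessary and unlikely to cleanly produce the stated $q_1$. The paper's route is entirely elementary---strong convexity, smoothness, and triangle inequalities---with no Hessian arguments at all.
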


See Appendix \ref{proof_thm1} for the proof. The bound given by (\ref{stat:med_eq2}) consists of two terms.
The first term decreases as $\lambda$ increases. 
When $\lambda \to \infty$, the term $[q_{1}(\lambda)]^{-1} \to 0$  and the first term approaches $\mathcal{O}(1/N + R^2)$. Here, $\mathcal{O}(1/N)$ corresponds to the sample complexity one could obtain if the data distributions of the $m$ clients are  homogeneous, reflecting the benefit of collaborative training. Term $\mathcal{O}(R^2)$ reflects the negative impact due to the bias introduced by statistical heterogeneity. In contrast, the second term in (\ref{stat:med_eq2}) increases with $\lambda$, showing  that a smaller $\lambda$  leads to a rate closer to $\mathcal{O}(1/n)$ corresponding to that of pure local training. This rate is independent of $R$, making it robust to high data heterogeneity but at the expense of reduced sample efficiency. Together, these results provide a continuous and quantitative characterization of how the personalization degree, governed by $\lambda$, determines the statistical accuracy of the solution $\bswi$.

\textbf{Minimax-optimal Statistical Accuracy.} To demonstrate the tightness of our analysis, we now show that as a direct implication of Theorem \ref{thm:stat_accuracy}, the local models $\bswi$ are rate-optimal. Denoting $\kappa = L/\mu$ as the conditional number, we have the following corollary.
\begin{corollary}
\label{cor: minimax}
     Suppose Assumption \ref{assump:smooth}, \ref{assump:sc} and \ref{assump:bg} hold and consider the parameter space $\mathcal{P}(R)$ given by (\ref{def:hetero_measure}). If setting
    $\lambda  \geq \max \left\{64\kappa^2L, \left(2\kappa\vee5\right)\mu \frac{2L^2R^2 + \rho^2/n}{L^2R^2+\rho^2/N}\right\} -\mu$ when $R \leq \frac{1}{\sqrt{n}}$, and $ \lambda \leq\frac{\rho^2}{n\mu R^{2}} $ when $R > \frac{1}{\sqrt{n}}$,  then for all $i \in [m]$, the local model $\bswi$ obtained by solving Problem (\ref{obj1:fedprox}) satisfy %\dref{eq:stat aer} and \dref{eq:local_stat}, respectively:
    \begin{align}
    \label{eq:local_stat}
    \begin{split}
        \mathbb{E}\left\| \bswi -\sbwi \right\|^2 \leq C_{3}\frac{1}{N}+ C_{4}\left(R^2 \wedge \frac{1}{n} \right),\quad
    \end{split}
    \end{align}
where constants $C_3$ and $C_4$ are  defined in (\ref{const_stat}).

\end{corollary}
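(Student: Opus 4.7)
The plan is to derive the corollary as a direct consequence of Theorem~\ref{thm:stat_accuracy} by splitting into two regimes and choosing, in each regime, the appropriate branch of the minimum in~(\ref{stat:med_eq2}). Since $R^2 \wedge 1/n$ equals $R^2$ when $R \le 1/\sqrt{n}$ and equals $1/n$ when $R > 1/\sqrt{n}$, the two regimes align exactly with the two conditions imposed on $\lambda$.

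In the high-heterogeneity regime $R > 1/\sqrt{n}$, I would work with the second branch of the minimum, i.e.\ the bound
\begin{equation*}
\frac{4\rho^2}{\mu^2}\frac{1}{n} + \left(\frac{4\rho^2}{\mu^3 n} + \frac{4R^2}{\mu}\right)\lambda + \frac{8R^2}{\mu^2}\lambda^2.
\end{equation*}
Plugging in the prescribed upper bound $\lambda \le \rho^2/(n\mu R^2)$, the linear term splits as $\frac{4R^2}{\mu}\lambda \le \frac{4\rho^2}{\mu^2 n}$ and $\frac{4\rho^2}{\mu^3 n}\lambda \le \frac{4\rho^4}{\mu^4 n^2 R^2}$, and the quadratic term becomes $\frac{8R^2}{\mu^2}\lambda^2 \le \frac{8\rho^4}{\mu^4 n^2 R^2}$. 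Since $nR^2 > 1$, both of the last two quantities are bounded by a constant multiple of $\rho^4/(\mu^4 n)$, so everything collapses to a constant times $1/n$. Noting $1/N \le 1/n$ absorbs the trivial first term, this is exactly the form $C_3/N + C_4/n = C_3/N + C_4(R^2\wedge 1/n)$.

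In the low-heterogeneity regime $R \le 1/\sqrt{n}$, I would instead use the first branch, i.e.\
\begin{equation*}
\frac{48\rho^2}{\mu^2 N} + \left(\frac{48L^2}{\mu^2}+3\right)R^2 + \frac{1}{q_1(\lambda)}.
\end{equation*}
The first two terms already have the desired form $C_3/N + C_4 R^2$, so the task reduces to showing that the prescribed lower bound on $\lambda$ forces $1/q_1(\lambda) \le C\bigl(\rho^2/N + L^2R^2/\mu^2\bigr)$. This is where the precise definition of $q_1(\lambda)$ in~(\ref{qlambda}) enters: the factor $64\kappa^2 L$ in the threshold is a baseline ensuring $q_1(\lambda)$ is meaningfully large (comparable to $\mu^2$ up to scaling), while the second factor $(2\kappa\vee 5)\mu\,(2L^2R^2+\rho^2/n)/(L^2R^2+\rho^2/N)$ is tuned so that the resulting $1/q_1(\lambda)$ is of order $L^2R^2/\mu^2 + \rho^2/(\mu^2 N)$, matching the other two terms.

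The main obstacle is this last step: inverting the condition on $\lambda$ through the explicit form of $q_1(\lambda)$. Everything else is algebraic bookkeeping. Once the $1/q_1(\lambda)$ bound is established, collecting constants and defining $C_3, C_4$ as in~(\ref{const_stat}) completes both cases, and combining them gives the stated minimax-rate bound uniformly over $\mathcal{P}(R)$.
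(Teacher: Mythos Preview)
Your plan is correct and matches the paper's two-regime split exactly. One minor difference worth noting: the paper's proof of the corollary does not quote Theorem~\ref{thm:stat_accuracy} and then bound $1/q_1(\lambda)$; instead it returns to the intermediate inequalities from which $q_1(\lambda)$ was assembled (the global-model bound via $g(\lambda)$ and the decomposition through $\hbwGT$, equations~(\ref{thm1_g_small}) and~(\ref{thm1_local_smallr})) and imposes two separate conditions on $\lambda$ (one per layer), which are then merged into the stated threshold. Your route through $q_1(\lambda)$ is equivalent but a bit more cumbersome, since~(\ref{qlambda}) already packages several terms and you would have to unpack them again. Also, in Case~1 the paper solves the quadratic in $\mathbb{E}\|\hDi\|$ (cf.~(\ref{local: case1 3})--(\ref{local: case1 4})) rather than substituting directly into the squared bound as you do; both are valid, but they do not yield identical constants, so your claim that you recover exactly the $C_3,C_4$ of~(\ref{const_stat}) is slightly optimistic---you would get constants of the same order but not the same expressions.
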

See Appendix \ref{proof_thm1} for the proof. In addition, Theorem 8 in \citet{chen2021theorem} states for all $i \in [m]$ and any estimator $\hat{\boldsymbol{w}}^{(i)}$ that is a measurable function of data $\{S_{i}\}_{i=1}^m$, we have
\begin{align*}
    \begin{split}
         & \inf_{\hat{\boldsymbol{w}}^{(i)}}   \sup_{\{\sbwi\}_{i=1}^{m} \in \mathcal{P}(R)}\mathbb{E}\left\| \hat{\boldsymbol{w}}^{(i)} -\sbwi \right\|^2 \gtrsim \frac{1}{N} + R^2 \wedge \frac{1}{n}.
    \end{split}
\end{align*}
Therefore, Corollary \ref{cor: minimax} shows that with an appropriate choice of \( \lambda \), the established rate achieves the minimax lower bound. A detailed comparison with prior results can be found in Appendix \ref{stat_comparison}, along with a discussion of several novel techniques developed to derive our result. While this result focuses on statistical accuracy and is not directly tied to the accuracy-communication trade-off, it highlights the tightness of our analysis. Furthermore, the techniques introduced to obtain the results are of independent interest. To the best of our knowledge, this is the first work to establish minimax-optimality for the solutions of Problem (\ref{obj1:fedprox}).

\begin{remark}[Technical Novelty of The Analysis]
Unlike previous results \citep{chen2021theorem} which are algorithm-dependent, our analysis does not rely on any algorithm but directly tackles the objective function, establishing rates using purely the properties of the loss. Specifically, using the strong convexity of the loss function, we first proved the estimation error is bounded by the gradient norm and an extra statistical heterogeneity term controlled by $\lambda$, as detailed in Equation (\ref{global: case1 con4}). This allows us to show for large $R$, a small $\lambda$ can be chosen to yield rate $O(1/n)$ and match one of the worst cases in the lower bound. For the complementary case $R \leq 1/\sqrt{n}$, we leveraged the GlobalTrain solution $\widetilde{\boldsymbol{w}}_{GT}$ as a bridge and proved the solutions of Problem (\ref{obj1:fedprox}) to $\widetilde{\boldsymbol{w}}_{GT}$ are bounded by a term inversely proportional to $\lambda$ (cf. Lemma 
\ref{lem:global_training}), implying that they can be made arbitrarily close to $\widetilde{\boldsymbol{w}}_{GT}$  by setting $\lambda$ small. This, together with the rate of $\widetilde{\boldsymbol{w}}_{GT}$, yields a rate of $O(1/N + R^2)$. Combining the two cases, we proved minimax optimality. Not only each piece of the above results are new, but more importantly, identifying they are the key ingredients to show the solution of (\ref{obj1:fedprox}) is minimax optimal, are the technical novelties of this proof. 
\end{remark}

Notice that Theorem \ref{thm:stat_accuracy} and Corollary \ref{cor: minimax} focus on establishing the statistical rate for the solution in Problem (\ref{obj1:fedprox}), $\bswi$. To fully quantify the error of the output of PFL algorithms, we need to establish the optimization error from the algorithm as well, as detailed in the next section.

\subsection{Effect of Personalization on Communication Efficiency}
\label{sec: optimizing error bound}
In this section, we  provide an FL algorithm for solving Problem (\ref{obj1:fedprox}) along with its  convergence analysis, showing the impact of the personalization degree on communication and computation complexity. Notice that if we define the local objective of each client $i$ as
$$h_i(\bwi, \bw^{(g)}) :=  {L}_{i}(\bwi, S_{i})+\frac{\lambda}{2}\|\bwg-\bwi\|^{2},$$ then Problem \dref{obj1:fedprox} can be rewritten in the following bilevel (iterated minimization) form: 
\begin{equation}
\label{obj2:fedprox}
\begin{split}
& \min_{\bw^{(g)}} F(\bw^{(g)})  :=  \frac{1}{m} \sum_{i=1}^{m}  F_i(\bw^{(g)}), \\
& \text{where} \quad F_i(\bw^{(g)}): = \min_{\bwi}~h_i(\bwi, \bw^{(g)}).
\end{split}
\end{equation}
The reformulation (\ref{obj2:fedprox}) has a finite-sum minimization structure, with each component $F_i$ being the Moreau envelope~\citep{moreau1965proximite,yosida2012functional} of $L_i$. Let $ \bw_{\star}^{(i)} (\bw^{(g)})$ be the minimizer of $h_i(\, \cdot\, , \bw^{(g)})$, i.e.,
\begin{align}\begin{split}\label{p:client-subprob}
      \bw_{\star}^{(i)} (\bw^{(g)}) &=  \prox_{L_i/\lambda} (\bw^{(g)})\\
     &:=\argmin_{\bwi}~h_i(\bwi, \bw^{(g)}).
\end{split}\end{align}

The following lemmas provide properties of $F_i$, $h_i$ and $\bw_\star^{(i)}$, instrumental to the algorithm design and rate analysis.
The proof can be found in Appendix \ref{proof:lemma}.
\begin{lemma}[]
     \label{opt_f_p}
    Under Assumption \ref{assump:smooth} and \ref{assump:sc}, $F_i$ is $\mu_g$-strongly convex and $L_g$-smooth, with $\mu_g = \frac{\lambda \mu}{\lambda + \mu}$ and $L_g = \frac{\lambda L}{\lambda + L}$, each $h_i$ is $\mu_\ell$-strongly convex and $L_\ell$-smooth, with $\mu_\ell = \mu + \lambda$ and $L_\ell = L + \lambda$, and the mapping $\boldsymbol{w}_{\star}^{(i)}: \mathbb{R}^d \to \mathbb{R}^d$ is $L_w$-Lipschitz with $L_w = \frac{\lambda}{\lambda + \mu}$.
\end{lemma}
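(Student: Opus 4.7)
The plan is to dissect the three claims of the lemma and dispatch them sequentially using standard convex-analytic tools. The claim on $h_i$ is immediate: viewed as a function of $\bwi$ with $\bwg$ fixed, $h_i(\bwi,\bwg)$ is the sum of $L_i(\cdot,S_i)$ and the quadratic $\frac{\lambda}{2}\|\bwg-\cdot\|^2$. The quadratic has Hessian $\lambda \boldsymbol{I}$, so it is both $\lambda$-strongly convex and $\lambda$-smooth; Assumption~\ref{assump:sc} gives $\mu$-strong convexity of the empirical loss, and Assumption~\ref{assump:smooth}, transferred to the average over the $n_i$ samples, gives $L$-smoothness of $L_i(\cdot,S_i)$. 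Adding the respective moduli directly yields $\mu_\ell = \mu + \lambda$ and $L_\ell = L + \lambda$.

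For the Lipschitz property of the mapping $\sbwi$ defined in \dref{p:client-subprob}, I would invoke the first-order optimality condition
\begin{equation*}
\nabla L_i\bigl(\sbwi(\bwg),S_i\bigr) + \lambda\bigl(\sbwi(\bwg) - \bwg\bigr) = 0,
\end{equation*}
written at two queries $\bw_1^{(g)}$ and $\bw_2^{(g)}$. Subtracting, abbreviating $u := \bw_1^{(g)} - \bw_2^{(g)}$ and $v := \sbwi(\bw_1^{(g)}) - \sbwi(\bw_2^{(g)})$, produces the identity $\nabla L_i(\sbwi(\bw_1^{(g)})) - \nabla L_i(\sbwi(\bw_2^{(g)})) = \lambda(u - v)$. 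Taking an inner product with $v$ and invoking strong monotonicity of $\nabla L_i$ (from Assumption~\ref{assump:sc}) together with Cauchy--Schwarz on the remaining term yields $(\mu + \lambda)\|v\|^2 \leq \lambda\|v\|\|u\|$, hence $L_w = \lambda/(\lambda+\mu)$. Equivalently, one may read this off from the fact that $\sbwi = \prox_{L_i/\lambda}$ is the proximal map of a $(\mu/\lambda)$-strongly convex function.

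For the strong convexity and smoothness of $F_i$, I would first apply Danskin's envelope theorem to the inner minimization to obtain $\nabla F_i(\bwg) = \lambda(\bwg - \sbwi(\bwg))$. The cleanest route to the tight constants $\mu_g = \lambda\mu/(\lambda+\mu)$ and $L_g = \lambda L/(\lambda+L)$ is via Fenchel conjugation: $F_i$ is the infimal convolution $L_i \,\square\, \tfrac{\lambda}{2}\|\cdot\|^2$, so $F_i^\ast = L_i^\ast + \tfrac{1}{2\lambda}\|\cdot\|^2$. Under the standard conjugate correspondence, $\mu$-strong convexity and $L$-smoothness of $L_i$ translate to $L_i^\ast$ being $1/\mu$-smooth and $1/L$-strongly convex; adding $\tfrac{1}{2\lambda}\|\cdot\|^2$ then makes $F_i^\ast$ both $(1/\mu + 1/\lambda)$-smooth and $(1/L + 1/\lambda)$-strongly convex, and inverting back through duality delivers the stated constants on $F_i$.

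The main obstacle I anticipate is securing the \emph{tight} smoothness constant $L_g = \lambda L/(\lambda+L)$, since the naive bound from $\nabla F_i = \lambda(\mathrm{id} - \sbwi)$ combined with the firm nonexpansivity of $\mathrm{id} - \prox$ only delivers the loose value $\lambda$. The conjugate-duality route above handles both the strong-convexity and smoothness sides in one stroke; a purely primal alternative is to start from the gradient identity $\lambda(u - v) = \nabla L_i(\sbwi(\bw_1^{(g)})) - \nabla L_i(\sbwi(\bw_2^{(g)}))$, apply co-coercivity $\|\nabla L_i(a) - \nabla L_i(b)\|^2 \leq L\,\langle \nabla L_i(a) - \nabla L_i(b),\, a - b\rangle$, and rearrange to obtain $(\lambda + L)\|u - v\| \leq L\|u\|$, so that $\|\nabla F_i(\bw_1^{(g)}) - \nabla F_i(\bw_2^{(g)})\| = \lambda\|u - v\| \leq L_g\|u\|$. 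Either path is routine once the right identity is in hand.
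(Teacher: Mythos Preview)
Your proposal is correct. For the properties of $h_i$ and the Lipschitz constant of the proximal map $\bw_\star^{(i)}$, your argument is essentially identical to the paper's: the paper also writes the first-order condition at two points and invokes strong convexity to extract $L_w = \lambda/(\lambda+\mu)$, and dispatches $h_i$ as an immediate sum of moduli.

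The only genuine difference concerns the constants $\mu_g$ and $L_g$ for $F_i$. The paper does not prove these at all; it simply cites \cite{mishchenko2023convergence} for the analytic properties of the Moreau envelope. Your Fenchel-duality route through the infimal-convolution identity $F_i^\ast = L_i^\ast + \tfrac{1}{2\lambda}\|\cdot\|^2$, and your alternative primal route via co-coercivity of $\nabla L_i$, are both self-contained and deliver the tight constants directly. This buys you a proof that does not depend on an external black box, and in particular makes transparent \emph{why} the naive bound $L_g \leq \lambda$ (from firm nonexpansivity of $\mathrm{id}-\prox$) is loose while the conjugate or co-coercivity argument recovers the sharp $\lambda L/(\lambda+L)$. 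The paper's approach is shorter by deferring to a citation; yours is more informative.
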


\begin{lemma}[\citet{lemarechal1997practical}] \label{lem:gradient-prox}
Under Assumption~\ref{assump:sc},  
each $F_i: \mathbb{R}^d \to \mathbb{R}$ is  continuously differentiable, and the gradient is given by
$
    \nabla F_i(\bw^{(g)}) = \lambda (\bw^{(g)} -  \bw_{\star}^{(i)} (\bw^{(g)})).
$
\end{lemma}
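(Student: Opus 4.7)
The plan is to recognize $F_i$ as the Moreau envelope of $L_i(\cdot, S_i)$ with parameter $1/\lambda$, for which the stated gradient formula is the classical Moreau identity. The proof has two parts: (i) derive the gradient formula at an arbitrary point $\bw^{(g)}$, and (ii) establish continuous differentiability globally. Strong convexity of $L_i$ (Assumption~\ref{assump:sc}) is the key ingredient because it guarantees that the inner problem defining $\bw_{\star}^{(i)}(\bw^{(g)})$ in \eqref{p:client-subprob} has a \emph{unique} minimizer, which is exactly the condition needed to apply Danskin-type envelope arguments without set-valued subdifferentials.

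For the gradient formula, I would write the first-order optimality condition for the inner minimization: since $h_i(\cdot, \bw^{(g)})$ is differentiable and strongly convex in its first argument,
\begin{equation*}
\nabla L_i(\bw_{\star}^{(i)}(\bw^{(g)}), S_i) + \lambda \bigl(\bw_{\star}^{(i)}(\bw^{(g)}) - \bw^{(g)}\bigr) = 0.
\end{equation*}
Next, I would compute $\nabla F_i(\bw^{(g)})$ by a direct perturbation argument: for any fixed $\bw^{(g)}$ and any direction $\bm{d}$, use the two-sided inequalities $F_i(\bw^{(g)} + t\bm{d}) \le h_i(\bw_{\star}^{(i)}(\bw^{(g)}), \bw^{(g)} + t\bm{d})$ (since the inner minimum is at most the value at the unperturbed optimizer) and $F_i(\bw^{(g)} + t\bm{d}) \ge h_i(\bw_{\star}^{(i)}(\bw^{(g)} + t\bm{d}), \bw^{(g)} + t\bm{d})$, then expand $h_i$ in its second argument (which is quadratic and easy to differentiate). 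The squeeze, combined with the optimality condition above, yields
\begin{equation*}
\nabla F_i(\bw^{(g)}) = \nabla_2 h_i(\bw_{\star}^{(i)}(\bw^{(g)}), \bw^{(g)}) = \lambda\bigl(\bw^{(g)} - \bw_{\star}^{(i)}(\bw^{(g)})\bigr),
\end{equation*}
as claimed. Equivalently, one can invoke Danskin's theorem directly; the strong convexity guarantees the uniqueness hypothesis it requires.

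For continuous differentiability, I would appeal to Lemma~\ref{opt_f_p}, which already establishes that $\bw_{\star}^{(i)}$ is $L_w$-Lipschitz on $\mathbb{R}^d$. Since the map $\bw^{(g)} \mapsto \lambda(\bw^{(g)} - \bw_{\star}^{(i)}(\bw^{(g)}))$ is a composition of continuous maps, $\nabla F_i$ is continuous, completing the claim. I expect the main obstacle to be a purely cosmetic one: justifying the envelope-theorem step without appealing to an external black box, which the two-sided perturbation squeeze above handles cleanly using only that $h_i$ is $C^1$ in the outer variable and that the inner argmin is single-valued. No new analytic content beyond what is already encoded in Assumptions~\ref{assump:smooth}--\ref{assump:sc} and Lemma~\ref{opt_f_p} is needed.
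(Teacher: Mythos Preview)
The paper does not prove this lemma at all: it is stated with a citation to \citet{lemarechal1997practical} and used as a black box. Your self-contained argument via the Danskin/envelope squeeze is correct and is exactly the classical route to the Moreau-envelope gradient identity; there is nothing to compare against in the paper beyond the external reference.

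One minor remark: your proof of the gradient formula invokes the first-order optimality condition $\nabla L_i(\bw_{\star}^{(i)}(\bw^{(g)}), S_i) + \lambda(\bw_{\star}^{(i)}(\bw^{(g)}) - \bw^{(g)}) = 0$, which presupposes differentiability of $L_i$ (i.e.\ Assumption~\ref{assump:smooth}), whereas the lemma as stated only cites Assumption~\ref{assump:sc}. The cited classical result actually holds for any proper convex lsc $L_i$ (the prox is then characterized by the subdifferential inclusion, and the envelope is still $C^1$), so strong convexity alone suffices. In the paper's setting both assumptions are in force, so this is a cosmetic gap at worst; if you want the proof to match the hypotheses exactly, replace the gradient optimality condition by the subgradient inclusion $0 \in \partial L_i(\bw_{\star}^{(i)}(\bw^{(g)}), S_i) + \lambda(\bw_{\star}^{(i)}(\bw^{(g)}) - \bw^{(g)})$ and note the squeeze argument goes through unchanged since $h_i$ is smooth in its second argument regardless.
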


Lemma \ref{opt_f_p} and Lemma \ref{lem:gradient-prox} suggest applying a simple gradient algorithm to optimize $\bw^{(g)}$:
\begin{align}
\label{alg:GD-exact}
    \begin{split}
        \bw_{t+1}^{(g)} & = \bw_{t}^{(g)} - \gamma \cdot \nabla F (\bw_t^{(g)})\\
        & = \bw_{t}^{(g)} - \gamma \lambda \cdot 
 \frac{1}{m} \sum_{i = 1}^m  \left( \bw_t^{(g)} -  \bw_{\star}^{(i)} (\bw_t^{(g)}) \right),
    \end{split}
\end{align}
where $\bwtg$ is the update at the communication round $t$, $\gamma > 0$ is a step size to be properly set (cf. Theorem \ref{thm:aer}). To implement (\ref{alg:GD-exact}), in each communication round $t$ the server broadcasts $\bw_t^{(g)}$ to all clients, then each client $i$ updates its local model $\bw_{\star}^{(i)} (\bw_t^{(g)})$ and uploads to the server. 

Note that executing the update (\ref{alg:GD-exact}) requires each client $i$ computing the  minimizer $\bw_{\star}^{(i)} (\bw_t^{(g)})$. In general, the subproblem (\ref{p:client-subprob}) does not have a closed-form solution. Therefore, computing the exact gradient $\nabla F$ will incur a high computation cost as well as high latency for the learning process. Leveraging recent advancements in bilevel optimization algorithm design~\citep{ji2022will}, we address this issue by approximating $\bw_{\star}^{(i)} (\bw_t^{(g)})$ with a finite number of $K$ gradient steps. Specifically, we let each client $i$ maintain a local model $\bw_{t,k}^{(i)}$. Per communication round $t$, $\bw_{t,k}^{(i)}$ is initialized to be $\bw_{t,0}^{(i)} = \bw_{t-1,K}^{(i)}$ as a warm start, and updated according to 

\begin{align}
    \begin{split}
        \bw_{t,k+1}^{(i)}  = \bw_{t,k}^{(i)} - \eta \nabla h_i(\bw_{t,k}^{(i)}, \bw_t^{(g)}), 
    \end{split}
\end{align}
 for $k = 0, \ldots, K-1$, where $\nabla h_i(\bw_{t,k}^{(i)}, \bw_t^{(g)})$, for notation simplicity, denotes the partial gradient of $h_i$ with respect to $\bw^{(i)}$.
The overall procedure is summarized in Alg.~\ref{al:one stage PFedProx}, see Appendix \ref{appendix:algorithm}.

\begin{theorem}
\label{thm:aer}
Suppose Assumptions \ref{assump:smooth} and \ref{assump:sc} hold. Let $\{\bw_t^{(g)}\}_{t \geq 0}$ and $\{\bw_{t,K}^{(i)}\}_{t \geq 0}$ be the sequence generated by Algorithm \ref{al:one stage PFedProx} with 
     $\gamma < 1/L_g$, $\eta \leq 1/L_\ell$, and  the inner loop iteration number satisfying 
    \begin{align}
    \label{cond1:k}
    \Big(2 + 64 L_w^2 (1/\mu_g)^2 \lambda^2 \Big)(1 - \eta \mu_\ell)^{K} \leq (1 - \gamma L_g)^4,
    \end{align}
then $\bw_t^{(g)}$ converges to $\bswg$ linearly at rate $1 - (\gamma \mu_g)/2 - (\gamma \mu_g)^2/2$, and $\bw_{t,0}^{(i)}$ converges to $\bswi$ linearly at the same rate for any $i \in [m]$.
\end{theorem}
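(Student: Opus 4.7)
The plan is to treat Algorithm~\ref{al:one stage PFedProx} as an inexact gradient descent applied to the outer problem~\eqref{obj2:fedprox}, and to set up a coupled two-level contraction. By Lemma~\ref{opt_f_p}, $F$ is $\mu_g$-strongly convex and $L_g$-smooth, and by Lemma~\ref{lem:gradient-prox} the true outer gradient is $\nabla F_i(\bw^{(g)}) = \lambda(\bw^{(g)} - \bw_\star^{(i)}(\bw^{(g)}))$. Algorithm~\ref{al:one stage PFedProx} replaces $\bw_\star^{(i)}(\bw_t^{(g)})$ by the $K$-step inner iterate $\bw_{t,K}^{(i)}$, yielding an inexact gradient $\tilde\nabla F(\bw_t^{(g)}) = \lambda m^{-1}\sum_i(\bw_t^{(g)} - \bw_{t,K}^{(i)})$ whose error obeys $\|\tilde\nabla F(\bw_t^{(g)}) - \nabla F(\bw_t^{(g)})\| \le \lambda m^{-1}\sum_i \|\bw_{t,K}^{(i)} - \bw_\star^{(i)}(\bw_t^{(g)})\|$. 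The proof reduces to bounding this inner-loop error per round and propagating it through the outer gradient step.

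First I would handle the inner loop: since $h_i(\cdot,\bw_t^{(g)})$ is $\mu_\ell$-strongly convex and $L_\ell$-smooth, gradient descent with $\eta \le 1/L_\ell$ gives $\|\bw_{t,K}^{(i)} - \bw_\star^{(i)}(\bw_t^{(g)})\|^2 \le (1-\eta\mu_\ell)^K \|\bw_{t,0}^{(i)} - \bw_\star^{(i)}(\bw_t^{(g)})\|^2$, and the warm-start rule $\bw_{t,0}^{(i)} = \bw_{t-1,K}^{(i)}$, combined with the $L_w$-Lipschitz property of $\bw^{(g)}\mapsto\bw_\star^{(i)}(\bw^{(g)})$, gives $\|\bw_{t,0}^{(i)} - \bw_\star^{(i)}(\bw_t^{(g)})\| \le \|\bw_{t-1,K}^{(i)} - \bw_\star^{(i)}(\bw_{t-1}^{(g)})\| + L_w\|\bw_t^{(g)} - \bw_{t-1}^{(g)}\|$, where the drift is bounded by $\gamma\|\tilde\nabla F(\bw_{t-1}^{(g)})\| \le \gamma L_g\|\bw_{t-1}^{(g)} - \bswg\| + \gamma\|\tilde\nabla F - \nabla F\|_{t-1}$. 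For the outer loop, $\gamma<1/L_g$ makes the exact gradient map a $(1-\gamma\mu_g)$-contraction in norm; adding the gradient-error term via the triangle inequality, squaring, and applying a Young split with $\beta = 3\gamma\mu_g/[2(1-\gamma\mu_g)]$ gives $(1+\beta)(1-\gamma\mu_g)^2 = 1 - \gamma\mu_g/2 - (\gamma\mu_g)^2/2$, which is exactly the claimed rate.

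Setting $A_t := \|\bw_t^{(g)} - \bswg\|^2$ and $B_t := m^{-1}\sum_i\|\bw_{t,0}^{(i)} - \bw_\star^{(i)}(\bw_t^{(g)})\|^2$, the two steps above produce a coupled linear system of the schematic form
\begin{align*}
A_{t+1} &\le \bigl(1-\tfrac{\gamma\mu_g}{2}-\tfrac{(\gamma\mu_g)^2}{2}\bigr)A_t + c_1\gamma^2\lambda^2(1-\eta\mu_\ell)^K B_t, \\
B_{t+1} &\le 2(1-\eta\mu_\ell)^K B_t + c_2 L_w^2\gamma^2 L_g^2 A_t,
\end{align*}
with explicit constants $c_1,c_2$ coming from the two Young splits. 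The final step is to construct a Lyapunov function $V_t = A_t + c B_t$ and choose $c$ so that $V_{t+1}\le \rho V_t$ for $\rho = 1-\gamma\mu_g/2-(\gamma\mu_g)^2/2$, equivalently bounding the spectral radius of the $2\times 2$ system by $\rho$. Condition~\eqref{cond1:k}, which bounds $(2+64L_w^2(\lambda/\mu_g)^2)(1-\eta\mu_\ell)^K$ by $(1-\gamma L_g)^4$, is exactly the threshold that makes this work: the $64L_w^2(\lambda/\mu_g)^2$ factor absorbs the $B\to A$ coupling through the gradient error, the additional $2$ absorbs the $B\to B$ residual produced by the outer drift, and the $(1-\gamma L_g)^4$ slack accommodates the $\gamma L_g$-type factors that appear when the outer smoothness feeds into the warm-start recursion.

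The main obstacle will be this bookkeeping: the Young-inequality parameters in the two recursions must be balanced so that the inner-loop inexactness leaks into both directions of the coupled system at a rate small enough to preserve exactly the stated $\rho$, and one must verify that~\eqref{cond1:k} is precisely (not just sufficiently) the threshold that makes the $2\times 2$ matrix contract at rate $\rho$. Once $V_t\to 0$ at rate $\rho$, the linear convergence of $\bw_t^{(g)}$ to $\bswg$ is immediate, and the analogous claim for $\bw_{t,0}^{(i)}$ follows from $\bw_\star^{(i)}(\bswg) = \bswi$, which holds because, for fixed $\bw^{(g)} = \bswg$, the coordinate $\bswi$ of any minimizer of Problem~\eqref{obj1:fedprox} is exactly $\arg\min_{\bw^{(i)}} h_i(\bw^{(i)}, \bswg)$ by the first-order optimality conditions.
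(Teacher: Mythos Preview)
Your high-level plan coincides with the paper's: treat the algorithm as inexact gradient descent on $F$, bound the inner-loop error via the $\mu_\ell$-strong convexity contraction, track the warm-start drift through the $L_w$-Lipschitz property of the prox map, and close the loop with a Lyapunov combination of the outer error and the inner error. The local-model claim via $\bw_\star^{(i)}(\bswg)=\bswi$ is also exactly what the paper does.

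Where your plan diverges is the outer-loop analysis, and this creates a genuine gap. You propose to bound the outer step by the triangle inequality in norm, $\|\bw_{t+1}^{(g)}-\bswg\|\le (1-\gamma\mu_g)\|\bw_t^{(g)}-\bswg\|+\gamma\|\tilde\nabla F-\nabla F\|$, then square and Young-split with $\beta$ chosen so that $(1+\beta)(1-\gamma\mu_g)^2$ equals the target rate $\rho$ \emph{exactly}. This burns all the outer slack: in your coupled system the $A_t$ coefficient is already $\rho$, while the $B_{t+1}$ recursion still carries a strictly positive $A_t$ term (your $c_2L_w^2\gamma^2L_g^2 A_t$). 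A $2\times 2$ nonnegative matrix with diagonal entry $\rho$ and both off-diagonals positive has spectral radius strictly larger than $\rho$, so no choice of $c>0$ in $V_t=A_t+cB_t$ can give $V_{t+1}\le\rho V_t$. (Separately, the claim that exact GD on an $L_g$-smooth, $\mu_g$-strongly convex $F$ is a $(1-\gamma\mu_g)$-contraction \emph{in norm} is not correct in general for $\gamma<1/L_g$; the standard squared-norm rate is of order $1-\gamma\mu_g$, not $(1-\gamma\mu_g)^2$.)

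The paper avoids this by \emph{not} converting the drift into $L_g^2 A_t$. In its outer-loop lemma it expands $\|\bw_{t+1}^{(g)}-\bswg\|^2$ directly, uses strong convexity through the inner product $\langle \bw_t^{(g)}-\bswg,\nabla F(\bw_t^{(g)})\rangle$, and then invokes the descent lemma for $F$ to produce an explicit \emph{negative} term $-\gamma^2(1-\gamma L_g(1+\zeta))\|\nabla F(\bw_t^{(g)})\|^2$. The inner-loop drift is likewise left in terms of $\|\nabla F(\bw_t^{(g)})\|^2$ (via $\|\hat\nabla F\|^2\le 2\|\nabla F\|^2+2\lambda^2 e_t$), and the Lyapunov combination is arranged so that these two $\|\nabla F\|^2$ terms cancel precisely under condition~\eqref{cond1:k}. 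The free parameter that generates the stated rate is a Young parameter $\epsilon_g$ set to $\tfrac{\mu_g}{2}(1-\gamma\mu_g)$, giving $1-\gamma\mu_g+\gamma\epsilon_g = 1-\tfrac{\gamma\mu_g}{2}-\tfrac{(\gamma\mu_g)^2}{2}$. If you want to keep your route, you must either leave slack in $\beta$ (so the $A_t$ coefficient is strictly below $\rho$, leaving room to absorb $c\delta$) or, better, mimic the paper: keep $\|\nabla F\|^2$ as its own bookkeeping variable and use the descent lemma to cancel it.
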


The proof of Theorem \ref{thm:aer} can be found in Appendix~\ref{pf:thm3} and \ref{proof:opt_ier}. 
Theorem~\ref{thm:aer} shows that with sufficiently small step sizes, both the local and global models converge linearly. The personalization degree, controlled by $\lambda$, plays a critical role in influencing the algorithm's communication efficiency. A small $\lambda$ (indicating higher personalization) yields a small $L_g$, and thus permits a larger choice of the step size $\gamma$. By the expression of the convergence rate $1 - (\gamma \mu_g)/2 - (\gamma \mu_g)^2/2$, we can see less communication round would be required to reach an $\varepsilon$-optimal solution.
However, increasing $\gamma$ will also decrease the right hand side of (\ref{cond1:k}), implying that a larger number of local gradient steps \(K\) should be taken to fulfill the condition.  To provide a more concrete characterization of these dynamics, we present Corollary~\ref{cor:complexity}, which quantifies the communication and computation complexities under specific choices of tuning parameters. The proof of Corollary~\ref{cor:complexity} can be found in Appendix~\ref{pf:cor1}.

\begin{corollary}\label{cor:complexity}
In the setting of Theorem~\ref{thm:aer}, if we further choose the step size $\eta = (\lambda + L)^{-1}$, $\gamma = (\lambda + L)/(2\lambda L)$, and the inner loop iteration number  $K =\tilde{\mathcal{O}} ((\lambda + L)/(\lambda + \mu) )$, then $\bw_t^{(g)}$ converges to $\bswg$ linearly at rate $1-(\lambda+L)/(4\kappa(\lambda+\mu))$, and $\bw_{t,0}^{(i)}$ converges to $\bswi$ linearly at the same rate for any $i \in [m]$. Thus the communication and computation complexity for Algorithm~\ref{al:one stage PFedProx} to find an $\varepsilon$-solution (i.e., $\|\bw_T^{(g)} - \tilde{\bw}^{(g)}\|^2 \leq \varepsilon$ and $\|\bw_{T,0}^{(i)} - \tilde{\bw}^{(i)}\|^2 \leq \varepsilon$) are as follows ($\kappa: = L/\mu$):
\begin{align}
    \begin{split}
       &\text{(i) \# communication rounds}  
        = \mathcal{O} \Big( \kappa \cdot \frac{\lambda + \mu}{\lambda + L} \cdot  \log \frac{1}{\varepsilon} \Big)\nonumber,
    \end{split}\\
    \begin{split}
       &\text{(ii) \#  gradient evaluations}  = \tilde{\mathcal{O}} \Big( \kappa \cdot  \log \frac{1}{\varepsilon} \Big)\nonumber.
    \end{split}
\end{align}
\end{corollary}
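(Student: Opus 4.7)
\textbf{Proof proposal for Corollary~\ref{cor:complexity}.}
The plan is to instantiate the generic rate given in Theorem~\ref{thm:aer} with the specific tuning $\eta = (\lambda+L)^{-1}$, $\gamma = (\lambda+L)/(2\lambda L)$, and then translate the resulting contraction factor into the stated communication and computation complexities. First I would observe, using the formulas $L_g = \lambda L/(\lambda+L)$, $\mu_g = \lambda\mu/(\lambda+\mu)$, $L_\ell = L+\lambda$, $\mu_\ell = \mu+\lambda$, $L_w = \lambda/(\lambda+\mu)$ from Lemma~\ref{opt_f_p}, that the proposed choices satisfy $\eta = 1/L_\ell$ and $\gamma = 1/(2L_g)$, so the admissibility requirements $\eta \le 1/L_\ell$ and $\gamma < 1/L_g$ of Theorem~\ref{thm:aer} hold. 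In particular $\gamma L_g = 1/2$ so $(1-\gamma L_g)^4 = 1/16$, which is the target on the right-hand side of the inner-loop condition~(\ref{cond1:k}).

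Next I would compute the per-round contraction factor. Substituting the expressions for $\mu_g$ and $\gamma$ gives
\begin{equation*}
\gamma \mu_g \;=\; \frac{\lambda+L}{2\lambda L}\cdot\frac{\lambda\mu}{\lambda+\mu} \;=\; \frac{\lambda+L}{2\kappa(\lambda+\mu)},
\end{equation*}
so the contraction factor $1-(\gamma\mu_g)/2-(\gamma\mu_g)^2/2$ of Theorem~\ref{thm:aer} is upper bounded by $1-(\lambda+L)/(4\kappa(\lambda+\mu))$, which is the rate stated in the corollary. Applied to both sequences $\{\bwtg\}$ and $\{\bwtoi\}$ (the latter inheriting the same rate from Theorem~\ref{thm:aer}), this immediately yields that $\mathcal{O}(\kappa(\lambda+\mu)/(\lambda+L)\cdot\log(1/\varepsilon))$ communication rounds suffice to reach an $\varepsilon$-solution, giving claim~(i).

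I would then verify that the choice $K = \tilde{\mathcal{O}}((\lambda+L)/(\lambda+\mu))$ is large enough to satisfy~(\ref{cond1:k}). Direct substitution gives $L_w^2(1/\mu_g)^2\lambda^2 = \lambda^2/\mu^2$, so the left-hand prefactor in~(\ref{cond1:k}) is $2+64\lambda^2/\mu^2$. With $\eta\mu_\ell = (\lambda+\mu)/(\lambda+L)$, the inequality $(2+64\lambda^2/\mu^2)(1-\eta\mu_\ell)^K \le 1/16$ is implied by $K \ge \frac{\lambda+L}{\lambda+\mu}\log\!\bigl(16(2+64\lambda^2/\mu^2)\bigr)$ via the elementary bound $(1-x)^K \le e^{-xK}$, which is $\tilde{\mathcal{O}}((\lambda+L)/(\lambda+\mu))$ as claimed (the tilde absorbs the logarithmic dependence on $\lambda/\mu$). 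Multiplying the number of outer rounds from~(i) by $K$ local gradient evaluations per round then collapses the factor $(\lambda+\mu)/(\lambda+L)$, leaving $\tilde{\mathcal{O}}(\kappa\log(1/\varepsilon))$ gradient evaluations, which is claim~(ii).

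The main (minor) obstacle is bookkeeping: ensuring the chosen $\gamma$ is strictly below $1/L_g$ (so that the rate in Theorem~\ref{thm:aer} is a genuine contraction) while simultaneously keeping the prefactor $(1-\gamma L_g)^{-4}$ small enough that the logarithmic dependence of $K$ on $\lambda/\mu$ remains benign; the choice $\gamma L_g = 1/2$ achieves both cleanly. Beyond this, the argument is a direct substitution into the already-established Theorem~\ref{thm:aer}.
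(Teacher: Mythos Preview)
Your proposal is correct and follows essentially the same approach as the paper: verify that $\eta=1/L_\ell$ and $\gamma=1/(2L_g)$ meet the admissibility conditions of Theorem~\ref{thm:aer}, compute $\gamma\mu_g$ to read off the contraction rate, and then solve the inner-loop condition~(\ref{cond1:k}) for $K$. The only minor difference is in how the logarithmic prefactor in $K$ is controlled: the paper rewrites $\frac{\lambda^2}{\mu^2}\bigl(\frac{L-\mu}{\lambda+L}\bigr)^K \le \kappa^2\bigl(\frac{L-\mu}{\lambda+L}\bigr)^{K-2}$ so that the resulting bound $K \ge 2 + \frac{\lambda+L}{\lambda+\mu}\log(1056\kappa^2)$ depends only on $\kappa$ inside the logarithm, whereas your bound retains a $\log(\lambda^2/\mu^2)$ factor; both are absorbed by the $\tilde{\mathcal{O}}$ notation, so this is cosmetic.
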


Corollary~\ref{cor:complexity} clearly shows the influence of the personalization degree $\lambda$ on the communication cost. Specifically, as $\lambda$ increases from $0$ to $\infty$, the communication complexity increases from $\mathcal{O}(\log(1/\epsilon))$ to $\mathcal{O}(\kappa \log(1/\epsilon))$. This fact corroborates our intuition that a higher degree of  collaboration requires more communication resources. Moreover, the result given by  Corollary~\ref{cor:complexity} also indicates that the total computation cost is independent of $\lambda$. As such, we can see that model personalization can provably reduce the communication cost without any extra computation overhead. Notice that although personalization indeed enhances communication efficiency, doing so may deteriorate the statistical accuracy of the local models. In the next section, we will discuss in detail the trade-off between communication and statistical accuracy.

We also provide an  extension of Alg.~\ref{al:one stage PFedProx} to the stochastic setting using mini-batch stochastic gradients for the local updates. The proposed algorithm, termed \texttt{FedCLUP}, is given  in Appendix \ref{stoc_aer_proof}. Following similar argument to Theorem \ref{thm:aer}, Theorem \ref{thm:aer_stochastic} in Appendix \ref{stoc_aer_proof} establishes the convergence rate of \texttt{FedCLUP}, 
showing that it takes  $\mathcal{O}\left( \kappa \frac{\lambda + \mu}{\lambda + L} \cdot  \log \frac{1}{\varepsilon} \right)$ rounds of communication to obtain an output $\bw_{t}^{(g)}$ within an error ball of size $\left(\mathcal{O}(\varepsilon)+\frac{\sigma^2}{\mu^2Bm}\right)$ around $\bswg$. Here $\sigma^2$ is the variance of the stochastic gradient defined in Assumption \ref{assumption:stochastic} in Appendix \ref{stoc_aer_proof} and $B$ is the mini-batch size.
Therefore, the influence of $\lambda$ on communication complexity is consistent with the noiseless setting presented in Corollary \ref{cor:complexity}. Additionally, one may note that in the stochastic setting, the error due to the stochastic noise scales inversely with the number of participating clients, showing the advantage of client collaboration in reducing the variance of $\bw_{t}^{(g)}$.

\section{Communication-Accuracy Trade-off}
\label{sec: tradeoff}
Building on the results established in Section \ref{sec: statistical error bound} and \ref{sec: optimizing error bound}, we cast insights on the trade-off between communication efficiency and statistical accuracy, and discuss its implications on real-world practice.

\begin{corollary} \label{pro:overallconv}
    Under the conditions of Theorem \ref{thm:stat_accuracy} and Corollary \ref{cor:complexity}, $\bw_{T,K}^{(i)}$ generated by Algorithm \ref{al:one stage PFedProx} satisfies
    \begin{small}
    \begin{align}
    \label{eqm:overall_bound}
    \begin{split} 
        & \mathbb{E}\left\| \bw_{T,K}^{(i)} -\sbwi \right\|^2 \\
        & \leq 2 \left(1-\frac{1}{4\kappa} - \frac{L - \mu}{4\kappa(\lambda+\mu)}\right)^T\mathbb{E}\left\|\bw_{0,0}^{(i)}-\bswi\right\|^2\\
        & +\mathcal{O}\left[\left(\frac{1}{N} + R^2 + \frac{1}{q_1(\lambda)}\right) \wedge \left(\frac{1}{n} + q_2(\lambda)\right) \right]
        \end{split}
    \end{align}
    \end{small}
  
    for any $i \in [m]$, where $q_1(\lambda)$ and $q_2(\lambda)$, defined in (\ref{qlambda}) and (\ref{q_2}), respectively, are monotonically increasing functions of $\lambda$ with $\lim_{\lambda \to \infty} q_1(\lambda) = \infty$ and $\lim_{\lambda \to 0}q_2(\lambda) = 0$.
  
\end{corollary}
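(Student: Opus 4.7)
The plan is to combine the optimization error bound from Corollary~\ref{cor:complexity} with the statistical error bound from Theorem~\ref{thm:stat_accuracy} via the bias--variance style decomposition introduced just before Section~4, namely
\begin{equation*}
\left\|\bw_{T,K}^{(i)} - \sbwi\right\|^2 \;\leq\; 2\left\|\bw_{T,K}^{(i)} - \bswi\right\|^2 \;+\; 2\left\|\bswi - \sbwi\right\|^2,
\end{equation*}
taking expectations of both sides and then upper-bounding the two terms separately. Since the statement of Theorem~\ref{thm:stat_accuracy} and Corollary~\ref{cor:complexity} both apply for every client $i$, no further per-client arguments are needed.

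First, I would handle the optimization term. By Corollary~\ref{cor:complexity}, with the prescribed choices $\eta=(\lambda+L)^{-1}$, $\gamma=(\lambda+L)/(2\lambda L)$, and $K=\tilde{\mathcal{O}}((\lambda+L)/(\lambda+\mu))$, the iterate $\bw_{t,0}^{(i)}$ contracts to $\bswi$ at rate $1-(\lambda+L)/(4\kappa(\lambda+\mu))$. A short algebraic rewrite
\begin{equation*}
\frac{\lambda+L}{4\kappa(\lambda+\mu)}\;=\;\frac{(\lambda+\mu)+(L-\mu)}{4\kappa(\lambda+\mu)}\;=\;\frac{1}{4\kappa}+\frac{L-\mu}{4\kappa(\lambda+\mu)}
\end{equation*}
recasts the rate into the form appearing in \eqref{eqm:overall_bound}. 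Because $\bw_{T,K}^{(i)} = \bw_{T+1,0}^{(i)}$ by construction of Algorithm~\ref{al:one stage PFedProx} (warm start of the next round), the same linear rate applies to $\bw_{T,K}^{(i)}$; after $T$ rounds we obtain
\begin{equation*}
\mathbb{E}\|\bw_{T,K}^{(i)}-\bswi\|^2 \;\leq\; \left(1-\tfrac{1}{4\kappa}-\tfrac{L-\mu}{4\kappa(\lambda+\mu)}\right)^{T}\mathbb{E}\|\bw_{0,0}^{(i)}-\bswi\|^2,
\end{equation*}
which yields the first summand on the right-hand side of \eqref{eqm:overall_bound} after multiplying by $2$.

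Next, for the statistical term I would plug in Theorem~\ref{thm:stat_accuracy} directly. Define
\begin{equation*}
q_2(\lambda) \;:=\; \left(\tfrac{4\rho^2}{\mu^3}\tfrac{1}{n}+\tfrac{4}{\mu}R^2\right)\lambda \;+\; \tfrac{8}{\mu^2}R^2\lambda^2,
\end{equation*}
which is manifestly monotonically increasing in $\lambda$ with $\lim_{\lambda\to 0}q_2(\lambda)=0$, matching the properties asserted in the statement. The first branch of the minimum in Theorem~\ref{thm:stat_accuracy} is, up to absolute constants depending only on $L,\mu,\rho$, of order $1/N + R^2 + 1/q_1(\lambda)$, and the second branch is of order $1/n + q_2(\lambda)$. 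Absorbing constants into $\mathcal{O}(\cdot)$ and taking the minimum yields the second summand of \eqref{eqm:overall_bound}.

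There is no real obstacle in this proof since the heavy lifting was already done in Theorem~\ref{thm:stat_accuracy} and Corollary~\ref{cor:complexity}; the only delicate steps are bookkeeping. Specifically, one must (i) verify that the linear-rate bound on $\bw_{t,0}^{(i)}$ transfers to $\bw_{T,K}^{(i)}$ via the warm-start identity, (ii) perform the algebraic rewriting of the contraction rate so that its dependence on $\lambda$ is transparent, and (iii) choose $q_2(\lambda)$ so that the monotonicity and limit claims made in the corollary statement hold. Once these are in place, the decomposition and term-by-term substitution deliver \eqref{eqm:overall_bound} immediately.
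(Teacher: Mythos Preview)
Your proposal is correct and follows exactly the approach the paper takes: the paper's own proof (Appendix~\ref{app:tradeoff_proof}) is a one-liner stating that the corollary is a direct consequence of Theorem~\ref{thm:stat_accuracy} and Theorem~\ref{thm:aer}/Corollary~\ref{cor:complexity}, and you have simply spelled out the decomposition, the warm-start identification, and the algebraic rewriting of the rate that this entails. One minor bookkeeping remark: your definition of $q_2(\lambda)$ includes the extra term $\tfrac{4\rho^2}{\mu^3 n}\lambda$ from the second branch of~\eqref{stat:med_eq2}, whereas the paper's $q_2$ in~\eqref{q_2} omits it; both versions are monotone increasing with limit~$0$ as $\lambda\to 0$, so this discrepancy does not affect the argument.
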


\begin{table*}[h]
\centering
\small 
\resizebox{1.0\linewidth}{!}{ 
\begin{threeparttable}
\caption{Comparison of the existing works analyzing Problem \dref{obj1:fedprox} (cvx denotes convexity and s-cvx denotes strong convexity).}
\label{table1:algorithm_comparison}
\begin{tabular}{cccccc}
\toprule 
\textbf{Methods} & \textbf{Convexity} & \textbf{Bounded Domain} & \textbf{Computation Cost per Round} & \textbf{Communication Cost} & \textbf{Statistical Error} \\
\midrule 
FedProx \citep{li2020federated} & cvx  & \ding{56} & --\tnote{(1)} & $\mathcal{O}(\Delta/(\rho\varepsilon))$ & \ding{56} \\
pFedMe \citep{t2020personalized} & s-cvx  & \ding{56} & --\tnote{(2)} & $\mathcal{O}(1/{(mR\varepsilon)})$ & \ding{56} \\

L2GD \citep{hanzely2020federated} & s-cvx & \ding{56} & $\mathcal{O}(1+\frac{L}{\lambda})$ & $\mathcal{O}({\frac{2\lambda \kappa}{\lambda+L}\log(\frac{1}{\varepsilon})})$ & \ding{56} \\
AL2SGD \citep{hanzely2020lower} & s-cvx & \ding{56} &   $\mathcal{O} \left( \frac{( m + \sqrt{m (L + \lambda)/\mu} )}{\sqrt{\min\{L, \lambda\}/\mu}} \right)$ & $\mathcal{O} ( \sqrt{\min\{L, \lambda\}/\mu}\log \frac{1}{\varepsilon} )$  & \ding{56} \\
Algorithm 3 \citep{chen2021theorem} & s-cvx & \ding{52} & $\mathcal{O}(\lambda/\varepsilon)$ & $\mathcal{O}((\lambda\vee1)/\varepsilon)$ & \ding{52}\\
\textbf{FedCLUP} & s-cvx  &  \ding{56} & $\mathcal{O}(\frac{\lambda+L}{\lambda+\mu})$ & $\mathcal{O}(\kappa \frac{\lambda+\mu}{\lambda+L}\log{\frac{1}{\varepsilon}})$ & \ding{52} \\
\bottomrule
\end{tabular}
\begin{tablenotes}
\footnotesize
    \item[(1)] Controlled by the precision of inexact solution. $\rho>0$ measures the subproblem solution accuracy.
    \item[(2)] Related to the subiteration number $R$ and the precision $v$.   
\end{tablenotes}
\end{threeparttable}
}
\end{table*}

See Appendix \ref{app:tradeoff_proof} for the proof. When collaborative learning is beneficial, i.e., when \( R^2 \lesssim 1/n \), in this case, Corollary \ref{cor: minimax} establishes that the minimax-optimal statistical rate is \( \mathcal{O}(1/N + R^2) \). As \( \lambda \) increases, \( [q_{1}(\lambda)]^{-1} \) monotonically decreases to zero, leading to improved statistical accuracy, approaching the optimal accuracy. However, since the optimization error in (\ref{eqm:overall_bound}) is monotonically increasing with \( \lambda \), a higher degree of collaboration will also results in slower convergence over communication rounds. Consequently, Corollary \ref{pro:overallconv} implies that when the data heterogeneity is relatively low, increasing personalization will improve communication efficiency but at the expense of lower statistical accuracy, and vice versa. Such an opposite effect of the personalization degree on the statistical accuracy and communication efficiency leads to an accuracy-communication trade-off.

As the optimization error vanishes when \( T \to \infty \), the statistical error dominates the total error in Corollary \ref{pro:overallconv}. However, the associated communication cost becomes prohibitively large to achieve such an error. Therefore, balancing the optimization and statistical errors is crucial for achieving a target total error with efficient resource usage. Under the influence of the personalization degree, a practical guideline emerges: when collaborative learning is beneficial (i.e., \( R^2 \leq 1/\sqrt{n} \)), to achieve a target magnitude of total error, we control the optimization errors at the same magnitude as the statistical error. Specifically, we increase \( \lambda \) to reduce the statistical error to the same magnitude of the total error, and then gradually increase the communication rounds $T$ until the optimization error reaches the same magnitude as well. Since the communication efficiency will decrease will an increasing $\lambda$, the above strategy can achieve a given target total error at the highest possible communication efficiency. 

 Later, we will also show that, with the optimal choice of personalization degree, FedCLUP improves communication efficiency significantly compared with GlobalTrain and achieve a total error  smaller than LocalTrain in Section \ref{sec_exp}.

\textbf{Comparison with Existing Literature.} As reported in Table \ref{table1:algorithm_comparison}, our work is the first to quantitatively characterize how changing the personalization degree leads to the trade-off between communication cost and statistical accuracy. In contrast, most prior studies either do not explicitly analyze statistical convergence or fail to provide a tight convergence guarantee for optimization and statistical error. For example, \citet{chen2021theorem} attempted to establish both optimization and statistical rates simultaneously; however, when $\lambda \to \infty$, Problem (\ref{obj1:fedprox}) reduces to GlobalTrain, yet their results incorrectly imply that the communication and computation costs diverge to infinity.

\section{Empirical Study}\label{sec_exp}

In this section, we provide empirical validation for our theoretical results, evaluating Problem \dref{obj1:fedprox} on both synthetic and real datasets with convex and non-convex loss functions. We first present the experimental setup, and then analyze the impact of personalization on statistical accuracy and communication efficiency, and conclude by analyzing the trade-off between the two. Details about the experimental setup are available in Section \ref{sec: empirical_setup} and Appendix \ref{sec:additional_exp}, and the complete anonymized codebase is accessible at https://github.com/ZLHe0/fedclup.

\subsection{Experimental Details}
\label{sec: empirical_setup}
\textbf{Synthetic Dataset.}  We generate two cases of synthetic dataset: (1)As our theoretical analysis is established under strong convexity, first, we consider an overdetermined linear regression task, where the choice of hyparamter is strictly followed Corollary \ref{pro:overallconv}. (2) We follow a similar procedure to prior works \citep{li2020federated} but with some modifications to align with the setup in this paper. Specifically, for each client we generate samples \((\bX_k, \by_k)\), where the labels \( \by_k \) are produced by a logistic regression model \( \by_k = \text{argmax}(\sigma(\bw_k^\top \bX_k)) \), with \( \sigma \) being the sigmoid function. The feature vectors \( \bX_k \) are drawn from a multivariate normal distribution \( \mathcal{N}(\boldsymbol{v}_k, \boldsymbol{\Sigma}) \), where \( v_k \sim \mathcal{N}(0, 1) \) and the covariance matrix \( \boldsymbol{\Sigma} \) follows a diagonal structure \( \boldsymbol{\Sigma}_{j,j} = j^{-1.2} \).  The heterogeneity is introduced by sampling the model weights \( \bw_k \) for each client from a normal distribution \( \mathcal{N}(0, R) \), where \( R \in [0,3] \) controls the statistical heterogeneity across clients' data.

\begin{table*}[htbp]
\centering
\renewcommand{\arraystretch}{1.2} 
\setlength{\tabcolsep}{6pt} 
\scalebox{0.7}{
\begin{tabular}{cccccccccc}
\toprule
Dataset & LocalTrain & \multicolumn{3}{c}{FedCLUP} & \multicolumn{3}{c}{pFedMe} & GlobalTrain \\
\cmidrule(lr){2-2} \cmidrule(lr){3-5} \cmidrule(lr){6-8} \cmidrule(lr){9-9}
 & & High & Medium & Low & High & Medium & Low & \\
 \midrule
MNIST Logit & 0.7828 & 0.8123 & 0.8213 & 0.8312 & 0.8001 & 0.8239 & 0.8291 & 0.8391\\
\midrule
MNIST CNN2 & 0.8194 & 0.8753 & 0.8893 & 0.9032 & 0.8756 & 0.8771 & 0.8749 & 0.9098 \\
\midrule
MNIST CNN5 & 0.7633 & 0.9091 & 0.9102 & 0.9421 & 0.9340 & 0.9385 & 0.9440 & 0.9433 \\
\midrule
EMNIST CNN5 & 0.6111 & 0.6278 & 0.6415 & 0.6672 & 0.6345 & 0.6532 & 0.6717 & 0.6611 \\
\midrule
CIFAR10 CNN3 & 0.6238 & 0.6102 & 0.6712 & 0.7302 & 0.6744 & 0.7443 & 0.7732 & 0.8041 \\
\bottomrule
\end{tabular}
}
\caption{Test accuracy of algorithms across datasets under varying personalization degrees. “Logit” refers to logistic regression, while “CNN-k” denotes a convolutional neural network with k convolutional layers. “Low,” “Medium,” and “High” indicate low, moderate, and high personalization levels, respectively. As the personalization degree decreases, the statistical accuracy of \texttt{FedCLUP} and \texttt{pFedMe} transitions from resembling LocalTrain to GlobalTrain.}\label{tab:acc_comp}
\end{table*}

\textbf{Real Dataset.} We use the MNIST, EMNIST, CIFAR10, Sent140 an CelebA datasets for real data analysis. For MNIST, EMNIST and CIFAR10, they aren't naturally partitioned datasets. Therefore, following \citet{li2020federated}, to impose statistical heterogeneity, we distribute the data across clients in a way that each client only has access to a fixed number of classes. The fewer classes each client has access to, the higher the statistical heterogeneity. Details on data preprocessing and heterogeneity settings for each dataset are provided in Appendix \ref{sec:additional_exp}. For Sent140 and CelebA datasets, as they are naturally partitioned, we don't need to create the data heterogeneity and directly apply our algorithm to these datasets with advanced models, like CNN and LSTM model \cite{sak2014long}. We strictly follow the previous work \cite{li2021ditto} and \cite{duan2021fedgroup} to pre-process the Sent140 and CelebA dataset, respectively.

\textbf{Implementation and Evaluation.} For each setup, we evaluate \texttt{FedCLUP} with three different degree of personalization: low, medium, and high, with specific personalization degree varying based on the dataset (details provided in Appendix \ref{sec:additional_exp}). For the synthetic dataset, in alignment with theorem \ref{thm:aer}, we implement \texttt{FedCLUP}  using a global step size \( \gamma = (\lambda + L)/( \lambda L)\) and a local step size \(  \eta = (L + \lambda)^{-1} \). For the real dataset, since $L$ is unknown, we implement the same algorithm with a global step size $\gamma$ set to \( 1/\lambda \), while the local step size is determined via grid search. In terms of evaluation, for the synthetic dataset,  we evaluate by tracking the ground truth models and measuring error as the distance from these ground true models. For the real datasets, we report training loss and testing accuracy. Additional details, including hyperparameter settings and evaluation metric definition, can be found in Appendix \ref{sec:additional_exp}.

\begin{figure}[htbp]
    \centering
    \includegraphics[width=0.38\textwidth]{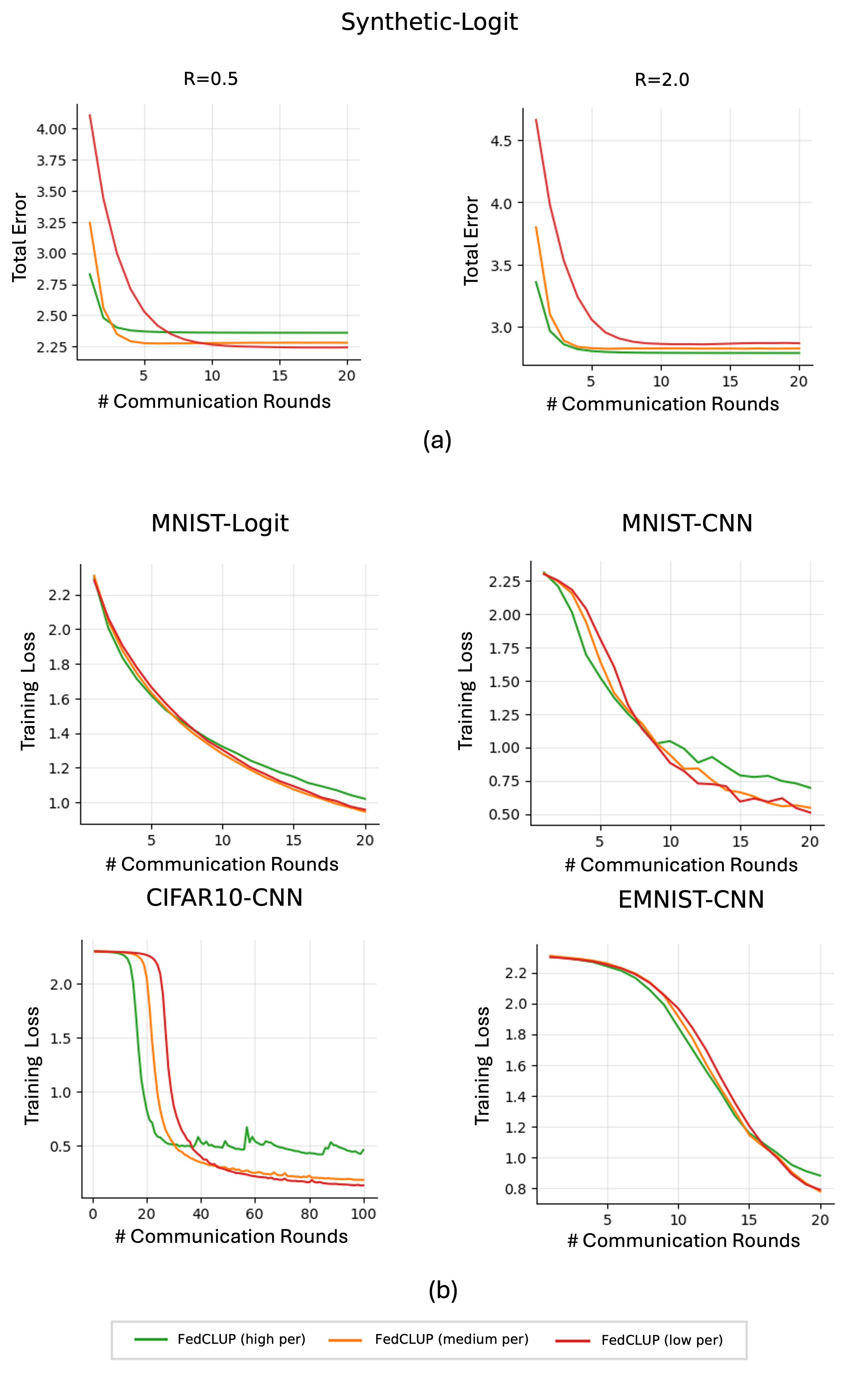}
    \caption{Total error and training loss of \texttt{FedCLUP} with varying personalization degrees. Total error (y-axis in (a)) quantifies the distance between the model at each communication round and the ground truth model, i.e. $\| \bw_{T,K}^{(i)} -\sbwi \|^2$. (a) compares total error under low ($R = 0.5$) and high ($R = 2.0$) heterogeneity. (b) presents training loss across different datasets and models under low heterogeneity. Synthetic-Logit and MNIST-Logit represent logistic regression on synthetic and MNIST data, respectively, while MNIST-CNN, CIFAR10-CNN, and EMNIST-CNN represent CNN models trained on MNIST, CIFAR-10, and EMNIST datasets. In a low-heterogeneity setting, higher personalization (high per) accelerates convergence, while lower personalization (low per) improves final error. In a high-heterogeneity setting, higher personalization achieves both smaller error and faster convergence.}
   \label{fig:stat}
\end{figure}

\subsection{Results}
\label{sec:empirical_stat}
\textbf{Effect of Personalization on Statistical Accuracy.} We compare FedCLUP, under different personalization degrees, with GlobalTrain, LocalTrain and \texttt{pFedMe} \cite{t2020personalized}, an alternative algorithm for solving Problem (\ref{obj1:fedprox}). Implementation details are provided in Appendix \ref{sec:additional_exp}. All methods are run until stable convergence, and their test accuracy is reported in Table \ref{tab:acc_comp} in the low heterogeneity setting. Table \ref{tab:acc_comp} shows that as the personalization degree decreases and collaborative learning increases, the solution of Problem (\ref{obj1:fedprox}) becomes closer to GlobalTrain, leading to improved statistical accuracy due to increased information sharing across clients.  A similar trend is observed for both \texttt{FedCLUP} and \texttt{pFedMe} across different datasets and models, showing that under low heterogeneity, \textit{increased collaboration improves statistical accuracy}.

\begin{figure*}[htbp] 
    \centering
    \includegraphics[width=0.7\linewidth]{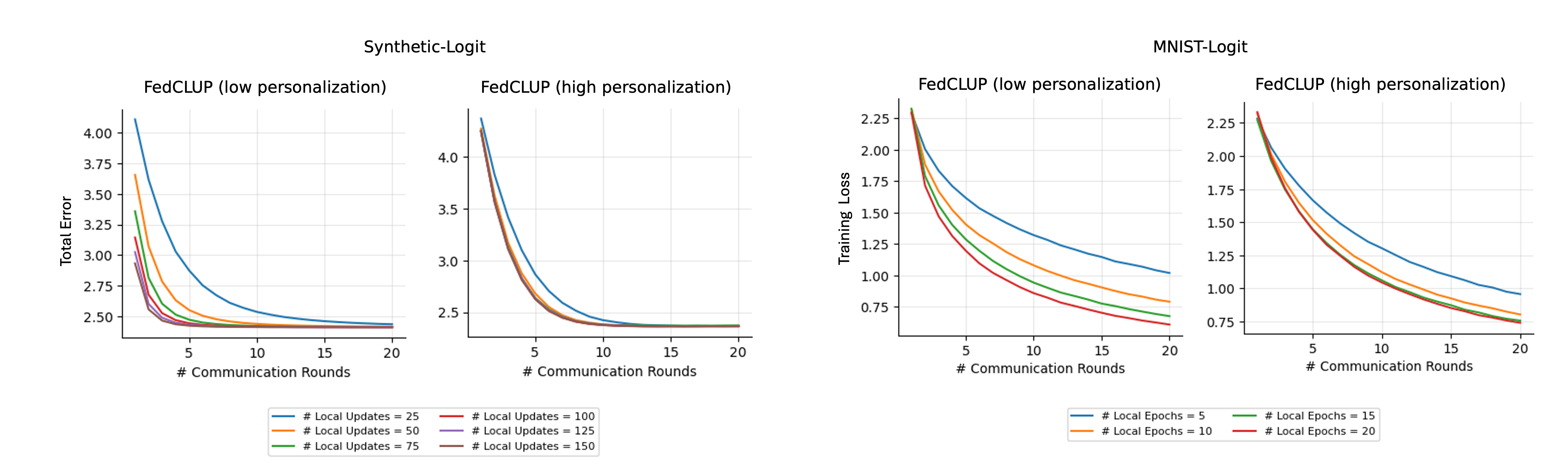}
    \caption{Effect of personalization on communication efficiency in \texttt{FedCLUP} for Synthetic-Logit (left) and MNIST-Logit (right). Each subfigure compares the impact of varying the number of local updates per communication round under different personalization levels.}
    \label{fig:opt_comp}
\end{figure*}

\textbf{Effect of Personalization on Communication Efficiency.} 
Figure \ref{fig:opt_comp} investigates how personalization impacts communication efficiency in \texttt{FedCLUP} by analyzing the benefit of increasing local updates under different personalization levels. In the low-personalization setting (left column), more local updates significantly accelerate convergence, reducing the reliance on frequent communication. However, in the high-personalization setting (right column), increasing local updates has a limited effect on convergence, indicating that frequent communication is essential for effective learning. This demonstrates that higher personalization requires more communication rounds to achieve comparable performance, aligning with our theoretical findings, showing that \textit{increased personalization improves communication efficiency}.

\textbf{The Trade-off under Personalization.}
The above two observations imply that there shall be a accuracy-communication trade-off in choosing the personalization degree to achieve the smallest total error under given communication budget (See Figure \ref{fig:stat}). To further demonstrate this point, in the left part of Figure~\ref{fig:9}, we compare the total error over iterations among LocalTrain, GlobalTrain, and FedCLUP under varying levels of personalization. Once the algorithms converge, for any given total error, there exists a specific degree of personalization in FedCLUP that incurs the least communication cost among all considered methods. This observation supports the conclusion in Corollary~\ref{pro:overallconv} that a unique personalization degree exists in FedCLUP to achieve communication efficiency for a fixed total error. Furthermore, we observe that no fixed personalization degree consistently outperforms others across all error levels. This implies that adaptively adjusting the personalization degree is essential for FedCLUP to maintain communication efficiency across different total error regimes. To further validate this finding, we conduct the same experiment on real-world datasets. Results on the CelebA and Sent140 datasets are presented in Figures~\ref{fig:10} and~\ref{fig:11}, respectively. Additionally, we observe a similar phenomenon in pFedMe, another method solving Problem~\ref{obj1:fedprox}, when we vary its personalization degree. Witnessing the insightful results, it motivates us to adaptively change the personalization degree in FedCLUP for achieving communication efficiency over different total errors. 

\textbf{Potential Solution in Practice.} After understanding the trade-off under personalization, we propose a dynamic tuning strategy for the optimal $\lambda$ with communication efficiency(See Figure \ref{fig:9} Right ). Beginning with a small $\lambda$ for efficiency, as soon as the validation performance plateaus or the statistical error stops improving significantly, we gradually increase $\lambda$. This allows the model to benefit from enhanced generalization through increased collaboration at the expense of slightly higher communication costs. By progressively adjusting in this way, one can finally identify the optimal personalization degree that balances the trade-off and minimizes the total communication cost required to meet a desired performance threshold.

\begin{figure}[htbp]
    \centering
    \includegraphics[width=0.8\linewidth]{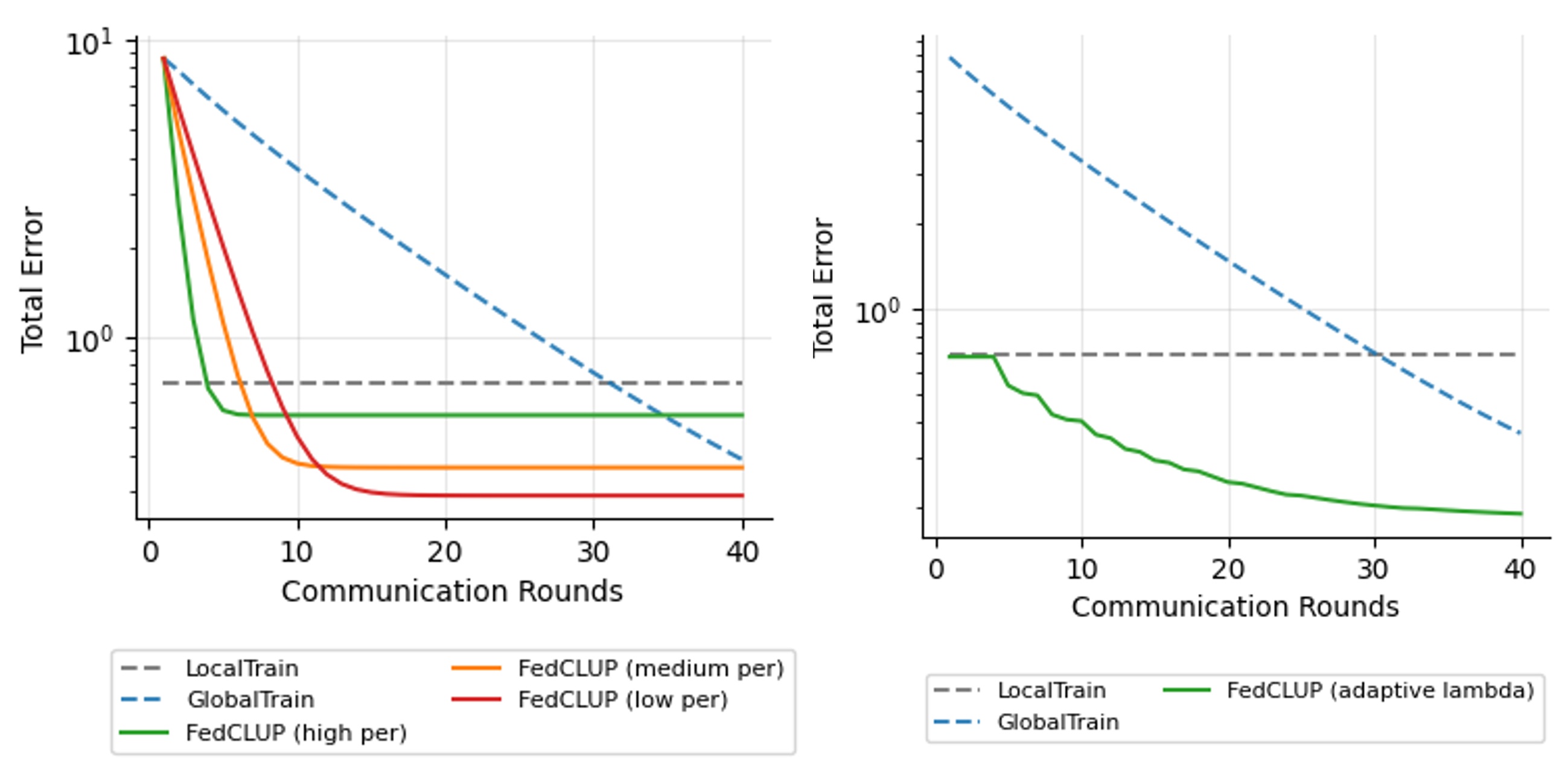}
    \caption{{Total error versus communication rounds.}
    \textbf{Left:} Comparison among LocalTrain, GlobalTrain, and FedCLUP with varying levels of personalization. For a given target total error, an optimal $\lambda$ exists that has the minimum communication rounds.
    \textbf{Right:} A dynamic $\lambda$ strategy: we begin with local training (highest personalization degree), and once the decrease of the validation error slows down, we gradually decrease the personalization degree. Comparing the right figure with the left one shows that the dynamic strategy approximates the optimal personalization level needed to meet a desired error threshold with minimal communication rounds, demonstrating the practical effectiveness of tuning personalization dynamically according to the accuracy-communication trade-off.
    }
    \label{fig:9}
\end{figure}

\section{Conclusion}
In this paper, we provide a precise theoretical characterization of the statistical and optimization convergence of a widely used personalized federated learning problem. Our analysis reveals that when collaborative learning is beneficial, increasing personalization reduces communication complexity but comes at the cost of statistical accuracy due to limited information sharing across clients. We then validate our theoretical findings across convex and non-convex settings, multiple datasets, and different model architectures.

\newpage

\newpage

\section*{Acknowledgment}

This work was supported in part by the U.S. National Science Foundation (NSF) grant CCF-2007823 and National Institutes of Health (NIH) grants R01AI170249 and R01GM152812.

\section*{Impact Statement}
This paper aims to contribute to the advancement of the field of Machine Learning. While our work has the potential for various societal implications, we believe none require specific emphasis in this context.

\bibliography{icml2025}

\begin{thebibliography}{55}
\providecommand{\natexlab}[1]{#1}
\providecommand{\url}[1]{\texttt{#1}}
\expandafter\ifx\csname urlstyle\endcsname\relax
  \providecommand{\doi}[1]{doi: #1}\else
  \providecommand{\doi}{doi: \begingroup \urlstyle{rm}\Url}\fi

\bibitem[Arivazhagan et~al.(2019)Arivazhagan, Aggarwal, Singh, and Choudhary]{arivazhagan2019federated}
Arivazhagan, M.~G., Aggarwal, V., Singh, A.~K., and Choudhary, S.
\newblock Federated learning with personalization layers.
\newblock \emph{arXiv preprint arXiv:1912.00818}, 2019.

\bibitem[Bai et~al.(2024)Bai, Chen, Qian, Yao, and Li]{bai2024federated}
Bai, J., Chen, D., Qian, B., Yao, L., and Li, Y.
\newblock Federated fine-tuning of large language models under heterogeneous tasks and client resources.
\newblock In \emph{The Thirty-eighth Annual Conference on Neural Information Processing Systems}, 2024.

\bibitem[Beltr{\'a}n et~al.(2023)Beltr{\'a}n, P{\'e}rez, S{\'a}nchez, Bernal, Bovet, P{\'e}rez, P{\'e}rez, and Celdr{\'a}n]{beltran2023decentralized}
Beltr{\'a}n, E. T.~M., P{\'e}rez, M.~Q., S{\'a}nchez, P. M.~S., Bernal, S.~L., Bovet, G., P{\'e}rez, M.~G., P{\'e}rez, G.~M., and Celdr{\'a}n, A.~H.
\newblock Decentralized federated learning: Fundamentals, state of the art, frameworks, trends, and challenges.
\newblock \emph{IEEE Communications Surveys \& Tutorials}, 2023.

\bibitem[Bietti et~al.(2022)Bietti, Wei, Dudik, Langford, and Wu]{bietti2022personalization}
Bietti, A., Wei, C.-Y., Dudik, M., Langford, J., and Wu, S.
\newblock Personalization improves privacy-accuracy tradeoffs in federated learning.
\newblock In \emph{International Conference on Machine Learning}, pp.\  1945--1962. PMLR, 2022.

\bibitem[Chen et~al.(2023{\natexlab{a}})Chen, Gao, Xie, Pan, Li, Li, Ding, and Zhou]{chen2023fs}
Chen, D., Gao, D., Xie, Y., Pan, X., Li, Z., Li, Y., Ding, B., and Zhou, J.
\newblock Fs-real: Towards real-world cross-device federated learning.
\newblock In \emph{Proceedings of the 29th ACM SIGKDD Conference on Knowledge Discovery and Data Mining}, pp.\  3829--3841, 2023{\natexlab{a}}.

\bibitem[Chen et~al.(2023{\natexlab{b}})Chen, Yao, Gao, Ding, and Li]{chen2023efficient}
Chen, D., Yao, L., Gao, D., Ding, B., and Li, Y.
\newblock Efficient personalized federated learning via sparse model-adaptation.
\newblock In \emph{International Conference on Machine Learning}, pp.\  5234--5256. PMLR, 2023{\natexlab{b}}.

\bibitem[Chen et~al.(2018)Chen, Luo, Dong, Li, and He]{chen2018federated}
Chen, F., Luo, M., Dong, Z., Li, Z., and He, X.
\newblock Federated meta-learning with fast convergence and efficient communication.
\newblock \emph{arXiv preprint arXiv:1802.07876}, 2018.

\bibitem[Chen et~al.(2023{\natexlab{c}})Chen, Zheng, Long, and Su]{chen2021theorem}
Chen, S., Zheng, Q., Long, Q., and Su, W.~J.
\newblock Minimax estimation for personalized federated learning: An alternative between fedavg and local training?
\newblock \emph{Journal of Machine Learning Research}, 24\penalty0 (262):\penalty0 1--59, 2023{\natexlab{c}}.

\bibitem[Cheng et~al.(2023)Cheng, Chadha, and Duchi]{cheng2023federated}
Cheng, G., Chadha, K., and Duchi, J.
\newblock Federated asymptotics: a model to compare federated learning algorithms.
\newblock In \emph{International Conference on Artificial Intelligence and Statistics}, pp.\  10650--10689. PMLR, 2023.

\bibitem[Collins et~al.(2021)Collins, Hassani, Mokhtari, and Shakkottai]{collins2021exploiting}
Collins, L., Hassani, H., Mokhtari, A., and Shakkottai, S.
\newblock Exploiting shared representations for personalized federated learning.
\newblock In \emph{International Conference on Machine Learning}, pp.\  2089--2099. PMLR, 2021.

\bibitem[Deng et~al.(2020)Deng, Kamani, and Mahdavi]{deng2020adaptive}
Deng, Y., Kamani, M.~M., and Mahdavi, M.
\newblock Adaptive personalized federated learning.
\newblock \emph{arXiv preprint arXiv:2003.13461}, 2020.

\bibitem[Duan et~al.(2021)Duan, Liu, Ji, Liu, Liang, Chen, and Tan]{duan2021fedgroup}
Duan, M., Liu, D., Ji, X., Liu, R., Liang, L., Chen, X., and Tan, Y.
\newblock Fedgroup: Efficient federated learning via decomposed similarity-based clustering.
\newblock In \emph{2021 IEEE Intl Conf on parallel \& distributed processing with applications, big data \& cloud computing, sustainable computing \& communications, social computing \& networking (ISPA/BDCloud/SocialCom/SustainCom)}, pp.\  228--237. IEEE, 2021.

\bibitem[Duan \& Wang(2023)Duan and Wang]{duan2023adaptive}
Duan, Y. and Wang, K.
\newblock Adaptive and robust multi-task learning.
\newblock \emph{The Annals of Statistics}, 51\penalty0 (5):\penalty0 2015--2039, 2023.

\bibitem[Fallah et~al.(2020)Fallah, Mokhtari, and Ozdaglar]{fallah2020personalized}
Fallah, A., Mokhtari, A., and Ozdaglar, A.
\newblock Personalized federated learning with theoretical guarantees: a model-agnostic meta-learning approach.
\newblock In \emph{Proceedings of the 34th International Conference on Neural Information Processing Systems}, pp.\  3557--3568, 2020.

\bibitem[Garrigos \& Gower(2023)Garrigos and Gower]{garrigos2023handbook}
Garrigos, G. and Gower, R.~M.
\newblock Handbook of convergence theorems for (stochastic) gradient methods.
\newblock \emph{arXiv preprint arXiv:2301.11235}, 2023.

\bibitem[Hanzely \& Richt{\'a}rik(2020)Hanzely and Richt{\'a}rik]{hanzely2020federated}
Hanzely, F. and Richt{\'a}rik, P.
\newblock Federated learning of a mixture of global and local models.
\newblock \emph{arXiv preprint arXiv:2002.05516}, 2020.

\bibitem[Hanzely et~al.(2020)Hanzely, Hanzely, Horv{\'a}th, and Richt{\'a}rik]{hanzely2020lower}
Hanzely, F., Hanzely, S., Horv{\'a}th, S., and Richt{\'a}rik, P.
\newblock Lower bounds and optimal algorithms for personalized federated learning.
\newblock \emph{Advances in Neural Information Processing Systems}, 33:\penalty0 2304--2315, 2020.

\bibitem[He et~al.(2024{\natexlab{a}})He, Sun, and Li]{he2024transfusion}
He, Z., Sun, Y., and Li, R.
\newblock Transfusion: Covariate-shift robust transfer learning for high-dimensional regression.
\newblock In \emph{International Conference on Artificial Intelligence and Statistics}, pp.\  703--711. PMLR, 2024{\natexlab{a}}.

\bibitem[He et~al.(2024{\natexlab{b}})He, Sun, Liu, and Li]{he2024adatrans}
He, Z., Sun, Y., Liu, J., and Li, R.
\newblock Adatrans: Feature-wise and sample-wise adaptive transfer learning for high-dimensional regression.
\newblock \emph{arXiv preprint arXiv:2403.13565}, 2024{\natexlab{b}}.

\bibitem[Ji et~al.(2022)Ji, Liu, Liang, and Ying]{ji2022will}
Ji, K., Liu, M., Liang, Y., and Ying, L.
\newblock Will bilevel optimizers benefit from loops.
\newblock \emph{Advances in Neural Information Processing Systems}, 35:\penalty0 3011--3023, 2022.

\bibitem[Jiang et~al.(2019)Jiang, Kone{\v{c}}n{\`y}, Rush, and Kannan]{jiang2019improving}
Jiang, Y., Kone{\v{c}}n{\`y}, J., Rush, K., and Kannan, S.
\newblock Improving federated learning personalization via model agnostic meta learning.
\newblock \emph{arXiv preprint arXiv:1909.12488}, 2019.

\bibitem[Khodak et~al.(2019)Khodak, Balcan, and Talwalkar]{khodak2019adaptive}
Khodak, M., Balcan, M.-F.~F., and Talwalkar, A.~S.
\newblock Adaptive gradient-based meta-learning methods.
\newblock \emph{Advances in Neural Information Processing Systems}, 32, 2019.

\bibitem[Lemar{\'e}chal \& Sagastiz{\'a}bal(1997)Lemar{\'e}chal and Sagastiz{\'a}bal]{lemarechal1997practical}
Lemar{\'e}chal, C. and Sagastiz{\'a}bal, C.
\newblock Practical aspects of the moreau--yosida regularization: Theoretical preliminaries.
\newblock \emph{SIAM Journal on Optimization}, 7\penalty0 (2):\penalty0 367--385, 1997.

\bibitem[Li et~al.(2024{\natexlab{a}})Li, Xie, Wang, Lu, Li, and Fang]{li2024feddiff}
Li, D., Xie, W., Wang, Z., Lu, Y., Li, Y., and Fang, L.
\newblock Feddiff: Diffusion model driven federated learning for multi-modal and multi-clients.
\newblock \emph{IEEE Transactions on Circuits and Systems for Video Technology}, 2024{\natexlab{a}}.

\bibitem[Li \& Richt{\'a}rik(2024)Li and Richt{\'a}rik]{li2024convergence}
Li, H. and Richt{\'a}rik, P.
\newblock On the convergence of fedprox with extrapolation and inexact prox.
\newblock \emph{arXiv preprint arXiv:2410.01410}, 2024.

\bibitem[Li et~al.(2024{\natexlab{b}})Li, Acharya, and Richtarik]{li2024power}
Li, H., Acharya, K., and Richtarik, P.
\newblock The power of extrapolation in federated learning.
\newblock \emph{arXiv preprint arXiv:2405.13766}, 2024{\natexlab{b}}.

\bibitem[Li et~al.(2022)Li, Cai, and Li]{li2022transfer}
Li, S., Cai, T.~T., and Li, H.
\newblock Transfer learning for high-dimensional linear regression: Prediction, estimation and minimax optimality.
\newblock \emph{Journal of the Royal Statistical Society Series B: Statistical Methodology}, 84\penalty0 (1):\penalty0 149--173, 2022.

\bibitem[Li et~al.(2023)Li, Cai, and Duan]{li2023targeting}
Li, S., Cai, T., and Duan, R.
\newblock Targeting underrepresented populations in precision medicine: A federated transfer learning approach.
\newblock \emph{The Annals of Applied Statistics}, 17\penalty0 (4):\penalty0 2970--2992, 2023.

\bibitem[Li et~al.(2020)Li, Sahu, Zaheer, Sanjabi, Talwalkar, and Smith]{li2020federated}
Li, T., Sahu, A.~K., Zaheer, M., Sanjabi, M., Talwalkar, A., and Smith, V.
\newblock Federated optimization in heterogeneous networks.
\newblock \emph{Proceedings of Machine Learning and Systems}, 2:\penalty0 429--450, 2020.

\bibitem[Li et~al.(2021)Li, Hu, Beirami, and Smith]{li2021ditto}
Li, T., Hu, S., Beirami, A., and Smith, V.
\newblock Ditto: Fair and robust federated learning through personalization.
\newblock In \emph{International Conference on Machine Learning}, pp.\  6357--6368. PMLR, 2021.

\bibitem[Liang et~al.(2020)Liang, Liu, Ziyin, Allen, Auerbach, Brent, Salakhutdinov, and Morency]{liang2020think}
Liang, P.~P., Liu, T., Ziyin, L., Allen, N.~B., Auerbach, R.~P., Brent, D., Salakhutdinov, R., and Morency, L.-P.
\newblock Think locally, act globally: Federated learning with local and global representations.
\newblock \emph{arXiv preprint arXiv:2001.01523}, 2020.

\bibitem[Lin et~al.(2022)Lin, Han, Li, and Zhang]{lin2022personalized}
Lin, S., Han, Y., Li, X., and Zhang, Z.
\newblock Personalized federated learning towards communication efficiency, robustness and fairness.
\newblock \emph{Advances in Neural Information Processing Systems}, 35:\penalty0 30471--30485, 2022.

\bibitem[Mammen(2021)]{mammen2021federated}
Mammen, P.~M.
\newblock Federated learning: Opportunities and challenges.
\newblock \emph{arXiv preprint arXiv:2101.05428}, 2021.

\bibitem[McMahan et~al.(2017)McMahan, Moore, Ramage, Hampson, and y~Arcas]{mcmahan2017communication}
McMahan, B., Moore, E., Ramage, D., Hampson, S., and y~Arcas, B.~A.
\newblock Communication-efficient learning of deep networks from decentralized data.
\newblock In \emph{Artificial Intelligence and Statistics}, pp.\  1273--1282. PMLR, 2017.

\bibitem[Mishchenko et~al.(2023)Mishchenko, Hanzely, and Richt{\'a}rik]{mishchenko2023convergence}
Mishchenko, K., Hanzely, S., and Richt{\'a}rik, P.
\newblock Convergence of first-order algorithms for meta-learning with moreau envelopes.
\newblock In \emph{Federated Learning and Analytics in Practice: Algorithms, Systems, Applications, and Opportunities}, 2023.

\bibitem[Moreau(1965)]{moreau1965proximite}
Moreau, J.-J.
\newblock Proximit{\'e} et dualit{\'e} dans un espace hilbertien.
\newblock \emph{Bulletin de la Soci{\'e}t{\'e} Math{\'e}matique de France}, 93:\penalty0 273--299, 1965.

\bibitem[Paragliola(2022)]{paragliola2022evaluation}
Paragliola, G.
\newblock Evaluation of the trade-off between performance and communication costs in federated learning scenario.
\newblock \emph{Future Generation Computer Systems}, 136:\penalty0 282--293, 2022.

\bibitem[Sak et~al.(2014)Sak, Senior, and Beaufays]{sak2014long}
Sak, H., Senior, A., and Beaufays, F.
\newblock Long short-term memory based recurrent neural network architectures for large vocabulary speech recognition.
\newblock \emph{arXiv preprint arXiv:1402.1128}, 2014.

\bibitem[Singhal et~al.(2021)Singhal, Sidahmed, Garrett, Wu, Rush, and Prakash]{singhal2021federated}
Singhal, K., Sidahmed, H., Garrett, Z., Wu, S., Rush, J., and Prakash, S.
\newblock Federated reconstruction: Partially local federated learning.
\newblock \emph{Advances in Neural Information Processing Systems}, 34:\penalty0 11220--11232, 2021.

\bibitem[Smith et~al.(2017)Smith, Chiang, Sanjabi, and Talwalkar]{smith2017federated}
Smith, V., Chiang, C.-K., Sanjabi, M., and Talwalkar, A.~S.
\newblock Federated multi-task learning.
\newblock \emph{Advances in Neural Information Processing Systems}, 30, 2017.

\bibitem[T~Dinh et~al.(2020)T~Dinh, Tran, and Nguyen]{t2020personalized}
T~Dinh, C., Tran, N., and Nguyen, J.
\newblock Personalized federated learning with moreau envelopes.
\newblock \emph{Advances in Neural Information Processing Systems}, 33:\penalty0 21394--21405, 2020.

\bibitem[Wang et~al.(2024{\natexlab{a}})Wang, Zhao, Suh, Yuan, Chen, and Min]{wang2024personalization}
Wang, R., Zhao, D., Suh, D., Yuan, Z., Chen, G., and Min, B.-C.
\newblock Personalization in human-robot interaction through preference-based action representation learning.
\newblock \emph{arXiv preprint arXiv:2409.13822}, 2024{\natexlab{a}}.

\bibitem[Wang et~al.(2022)Wang, Lin, and Chen]{wang2022communication}
Wang, Y., Lin, L., and Chen, J.
\newblock Communication-efficient adaptive federated learning.
\newblock In \emph{International conference on machine learning}, pp.\  22802--22838. PMLR, 2022.

\bibitem[Wang et~al.(2024{\natexlab{b}})Wang, Wang, Lu, and Chen]{wang2024fadas}
Wang, Y., Wang, S., Lu, S., and Chen, J.
\newblock Fadas: Towards federated adaptive asynchronous optimization.
\newblock In \emph{International Conference on Machine Learning}, pp.\  51701--51733. PMLR, 2024{\natexlab{b}}.

\bibitem[Wen et~al.(2023)Wen, Zhang, Lan, Cui, Cai, and Zhang]{wen2023survey}
Wen, J., Zhang, Z., Lan, Y., Cui, Z., Cai, J., and Zhang, W.
\newblock A survey on federated learning: challenges and applications.
\newblock \emph{International Journal of Machine Learning and Cybernetics}, 14\penalty0 (2):\penalty0 513--535, 2023.

\bibitem[Wu et~al.(2024)Wu, Li, Li, Ding, and Gao]{wu2024fedbiot}
Wu, F., Li, Z., Li, Y., Ding, B., and Gao, J.
\newblock Fedbiot: Llm local fine-tuning in federated learning without full model.
\newblock In \emph{Proceedings of the 30th ACM SIGKDD Conference on Knowledge Discovery and Data Mining}, pp.\  3345--3355, 2024.

\bibitem[Ye et~al.(2023)Ye, Fang, Du, Yuen, and Tao]{ye2023heterogeneous}
Ye, M., Fang, X., Du, B., Yuen, P.~C., and Tao, D.
\newblock Heterogeneous federated learning: State-of-the-art and research challenges.
\newblock \emph{ACM Computing Surveys}, 56\penalty0 (3):\penalty0 1--44, 2023.

\bibitem[Yosida(1964)]{yosida2012functional}
Yosida, K.
\newblock \emph{Functional Analysis}.
\newblock Springer, 1964.

\bibitem[Yuan et~al.(2023)Yuan, Su, and Wang]{yuan2023federated}
Yuan, L., Su, L., and Wang, Z.
\newblock Federated transfer--ordered--personalized learning for driver monitoring application.
\newblock \emph{IEEE Internet of Things Journal}, 10\penalty0 (20):\penalty0 18292--18301, 2023.

\bibitem[Zhang et~al.(2025)Zhang, Zheng, and Xue]{zhang2025gap}
Zhang, H., Zheng, Z., and Xue, L.
\newblock Gap-dependent bounds for federated $ q $-learning.
\newblock \emph{arXiv preprint arXiv:2502.02859}, 2025.

\bibitem[Zhang et~al.(2024)Zhang, Wang, Wang, Wang, Zhou, and Wang]{zhang2024modeling}
Zhang, Y., Wang, X., Wang, P., Wang, B., Zhou, Z., and Wang, Y.
\newblock Modeling spatio-temporal mobility across data silos via personalized federated learning.
\newblock \emph{IEEE Transactions on Mobile Computing}, 2024.

\bibitem[Zhao et~al.(2023)Zhao, Du, Li, Li, and Liu]{zhao2023fedprompt}
Zhao, H., Du, W., Li, F., Li, P., and Liu, G.
\newblock Fedprompt: Communication-efficient and privacy-preserving prompt tuning in federated learning.
\newblock In \emph{ICASSP 2023-2023 IEEE International Conference on Acoustics, Speech and Signal Processing (ICASSP)}, pp.\  1--5. IEEE, 2023.

\bibitem[Zheng et~al.(2024)Zheng, Gao, Xue, and Yang]{zheng2023federated}
Zheng, Z., Gao, F., Xue, L., and Yang, J.
\newblock Federated q-learning: Linear regret speedup with low communication cost.
\newblock In \emph{The Twelfth International Conference on Learning Representations}, 2024.

\bibitem[Zheng et~al.(2025)Zheng, Zhang, and Xue]{zheng2024federated}
Zheng, Z., Zhang, H., and Xue, L.
\newblock Federated q-learning with reference-advantage decomposition: Almost optimal regret and logarithmic communication cost.
\newblock In \emph{The Thirteenth International Conference on Learning Representations}, 2025.

\bibitem[Zhou et~al.(2020)Zhou, Dou, and Wen]{zhou2020encoding}
Zhou, Y., Dou, Z., and Wen, J.-R.
\newblock Encoding history with context-aware representation learning for personalized search.
\newblock In \emph{Proceedings of the 43rd International ACM SIGIR Conference on Research and Development in Information Retrieval}, pp.\  1111--1120, 2020.

\end{thebibliography}
\bibliographystyle{icml2025}
\newpage
\appendix
\onecolumn
\newcommand{\hbwGT}{\tilde{\boldsymbol{w}}_{\text{GT}}}

\section{Proof of statistical converngence}
\label{stat convergence}

To facilitate our analysis, we denote
\begin{align}
\label{hDg}
    \hDg = \sbwg - \bswg,
\end{align}
\begin{align}
\label{hDi}
    \hDi = \sbwi - \bswi \quad \forall i \in [m],
\end{align}
where $\sbwg := \sum_{i\in[m]}p_i\sbwi$, $\hDg$ is the difference between the true global model and the optimal solution of the global model in Problem \dref{obj1:fedprox} and $\hDi$ is the difference between the true local model and the optimal solution of the local model in \dref{obj1:fedprox} for any $i \in [m]$. The statistical error bound is established as $\mathbb{E} \|\hDg\|^2$ and $\mathbb{E}\|\hDi\|^2$, where the expectation is taken over all the data. Notably, we should stress that the definition of $\sbwg$ is just to serve as a bridge to establish the convergence rate of local models, and it is not our focus in the theoretical analysis.

We also denote 
\begin{align}
\label{note:delta}
\sdi := \sbwg - \sbwi, \quad \hdi := \hbwg - \hbwi,
\end{align}
where $\sdi$ measures the difference between the true global model $\sbwg$ and the true local model $\sbwi$, and $\hdi$ is the estimator of such a difference.

\subsection{Discussion on the Parameter Space}
\label{appendix: heter}

When assuming the statistical heterogeneity of each client is different ($R$ is related to the client index $i$), it would provide a more delicate description of statistical heterogeneity, and it also requires different personalization degree per client. Therefore, let's consider the following problem with different heterogeneity in personalized federated learning 

\begin{equation}
\label{obj1:future}
\min_{\bwg,\{\bwi\}_{i\in [m]}}  \sum_{i=1}^{m} p_i\left({L}_{i}(\bwi, S_{i})+\frac{\lambda_i}{2}\|\bwg-\bwi\|^{2}\right),
\end{equation} 

 where the $i$-th client will be shrunk to the global with strength $\lambda_i$. Therefore, solving \dref{obj1:future} will consider different personalization degree for different clients. Establish the minimax statistical error bound for Problem \dref{obj1:future} would be open for future work.

 Under Assumptions \ref{assump:smooth} and \ref{assump:sc} as we assume, statistical heterogeneity in parameter space (\ref{def:hetero_measure}) would be equivalent to many other similar assumptions for statistical heterogeneity, like $B$-dissimilarity \citep{li2020federated}, parameter difference \citep{chen2021theorem} and gradient diversity \citep{t2020personalized,deng2020adaptive}.

\subsection{Useful Lemmas}\label{proof_lemma1}
To facilitate our analysis, we first present two important lemmas and provide their proof.

\begin{lemma} \label{lem:delta}
    Under Assumption \ref{assump:sc}, for the optimal solution $\hbwg$ and $\hbwi$'s in Problem (\ref{obj1:fedprox}), we have
    \begin{align*}
            \|\hbwg - \hbwi\|_2 \leq \frac{2\|\nabla L_i(\hbwg,S_i)\|_2}{\mu + \lambda}.
    \end{align*}
\end{lemma}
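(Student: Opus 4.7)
The plan is to derive the bound directly from the first-order optimality condition for the local variables in Problem~(\ref{obj1:fedprox}) combined with the strong convexity of the regularized local objective. First I would write down the KKT condition for $\hbwi$: since Problem~(\ref{obj1:fedprox}) is separable in the $\bwi$'s once $\bwg$ is fixed, at the optimum we have
\begin{equation*}
\nabla L_i(\hbwi, S_i) + \lambda (\hbwi - \hbwg) = 0,
\end{equation*}
so $\hbwi$ is the unconstrained minimizer of the map $\bwi \mapsto h_i(\bwi, \hbwg) := L_i(\bwi, S_i) + \tfrac{\lambda}{2}\|\hbwg - \bwi\|^2$.

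Next I would exploit the fact that, under Assumption~\ref{assump:sc}, this map is $(\mu+\lambda)$-strongly convex in $\bwi$. The main tool is the standard consequence of strong convexity that for any $\alpha$-strongly convex $f$ with minimizer $x^\star$, one has $\|x - x^\star\| \le \|\nabla f(x)\|/\alpha$ for all $x$. Applying this inequality to $h_i(\cdot, \hbwg)$ evaluated at the point $\hbwg$ itself, and noting that $\nabla_{\bwi} h_i(\hbwg, \hbwg) = \nabla L_i(\hbwg, S_i)$ since the quadratic penalty vanishes there, yields
\begin{equation*}
\|\hbwg - \hbwi\| \le \frac{\|\nabla L_i(\hbwg, S_i)\|}{\mu + \lambda},
\end{equation*}
which is even sharper than the claimed bound; absorbing the additional slack into the factor of $2$ then produces the stated inequality. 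An alternative route that lands on the same estimate is to use the monotonicity form of strong convexity, $\langle \nabla L_i(\hbwg,S_i) - \nabla L_i(\hbwi,S_i), \hbwg - \hbwi\rangle \ge \mu\|\hbwg - \hbwi\|^2$, substitute the optimality relation $\nabla L_i(\hbwi, S_i) = \lambda(\hbwg - \hbwi)$, and finish with Cauchy--Schwarz.

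I do not expect any real obstacle here; the result is essentially a one-line consequence of strong convexity applied to the proximal subproblem. The only small subtlety is to be careful about which side of the optimality condition carries the $\lambda$ and to recognize that evaluating $\nabla h_i$ at $\hbwg$ (rather than at an arbitrary point) kills the regularizer gradient and leaves exactly $\nabla L_i(\hbwg, S_i)$. The looser constant of $2$ in the statement seems to be kept so the lemma slots cleanly into the downstream bounds in the proof of Theorem~\ref{thm:stat_accuracy}, rather than reflecting a genuine obstruction in the derivation.
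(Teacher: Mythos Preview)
Your proposal is correct and follows essentially the same approach as the paper: both exploit that $\hbwi$ minimizes the $(\mu+\lambda)$-strongly convex map $h_i(\cdot,\hbwg)=L_i(\cdot,S_i)+\tfrac{\lambda}{2}\|\cdot-\hbwg\|^2$, evaluate at the point $\hbwg$ where the regularizer gradient vanishes, and finish with Cauchy--Schwarz. The only difference is that the paper uses the function-value form of strong convexity ($f(x)\ge f(y)+\langle\nabla f(y),x-y\rangle+\tfrac{\alpha}{2}\|x-y\|^2$) together with the optimality inequality $h_i(\hbwi,\hbwg)\le h_i(\hbwg,\hbwg)$, which naturally produces the factor $\tfrac{\mu+\lambda}{2}$ and hence the $2$ in the numerator, whereas your gradient-monotonicity route gives the sharper constant $1$ directly; as you note, this is slack, not an obstruction.
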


\begin{proof}
    
By the $\mu$-strongly convexity of $\nabla L_{i}$, we have that for any $\bw \in \mathbb{R}^d$,
\begin{align}
\label{eq:str_cvx_wi}
    L_i(\bw, S_i) + \frac{\lambda}{2} \|\bw - \hbwg\|^2 
    &\geq L_i(\hbwg, S_i) + \left\langle \nabla L_i(\hbwg,S_i), \bw - \hbwg\right\rangle + \frac{\mu + \lambda}{2}\|\bw - \hbwg\|^2\nonumber \\
    &\geq L_i(\hbwg,S_i) + \|\bw - \hbwg\|\left(\frac{\mu + \lambda}{2}\|\bw - \hbwg\| - \|\nabla L_i(\hbwg,S_i)\|\right),
\end{align}
where in the last step, we used the Cauchy inequality.

In addition, notice that $\hbwi$ is the optimal solution for the local model $\bwi$ in Problem (\ref{obj1:fedprox}), thus we have
\begin{align*}
    \hbwi = \underset{\bw}{\operatorname{argmin}} \left\{L_i(\bw,S_i) + \frac{\lambda}{2}\|\bw-\hbwg\|^2\right\}.
\end{align*}
This result together with (\ref{eq:str_cvx_wi}) implies that
\begin{align*}
    L_i(\hbwg,S_i) &= L_i(\hbwg,S_i) + \frac{\lambda}{2} \|\hbwg - \hbwg\|^2 \geq L_i(\hbwi,S_i) + \frac{\lambda}{2} \|\hbwi - \hbwg\|^2\\
    &\geq L_i(\hbwg,S_i) + \|\hbwi - \hbwg\|\left(\frac{\mu + \lambda}{2}\|\hbwi - \hbwg\| - \|\nabla L_i(\hbwg,S_i)\|\right).
\end{align*}
Therefore, since $\mu + \lambda > 0$ and $\|\hbwi - \hbwg\| \geq 0$, rearranging the terms leads to
\begin{align}
    \|\hbwg - \hbwi\| \leq \frac{2\|\nabla L_i(\hbwg,S_i)\|}{\mu + \lambda},
\end{align}
as claimed. 
\end{proof}

\textbf{Remark}: Lemma \ref{lem:delta} is crucial to bridge the solution of the global model with the local model in Problem \ref{obj1:fedprox}. Note that  the difference is bounded by the gradient norm of local loss functions evaluated at the optimal solution of the global model, which would be decomposed further and controlled by the global model's statistical accuracy, as we will show later.

\begin{lemma}\label{lem:global_training}
    Under Assumption \ref{assump:smooth}, \ref{assump:sc}, \ref{assump:bg}, for $\hbwGT$, the solution of the GlobalTrain problem in (\ref{obj:global}), we have 
    \begin{align*}
        \mathbb{E}\|\hbwGT - \sbwg\|  \leq \frac{\rho\sqrt{\sum_{i \in[m]}\frac{p_i^2}{n_i}} + LR}{\mu / 2}.
    \end{align*}
\end{lemma}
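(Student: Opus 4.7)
The plan is to reduce the distance between $\hbwGT$ and $\sbwg$ to the norm of the gradient of the GlobalTrain objective evaluated at $\sbwg$, and then control that gradient norm by splitting off a stochastic noise piece from a statistical heterogeneity piece. Let $G(\bw) := \sum_{i\in[m]} p_i L_i(\bw, S_i)$. By Assumption~\ref{assump:sc}, $G$ is $\mu$-strongly convex with unique minimizer $\hbwGT$, so the standard chain
\begin{align*}
\tfrac{\mu}{2}\|\sbwg - \hbwGT\|^2 \;\leq\; G(\sbwg) - G(\hbwGT) \;\leq\; \langle \nabla G(\sbwg),\, \sbwg - \hbwGT \rangle \;\leq\; \|\nabla G(\sbwg)\|\,\|\sbwg - \hbwGT\|
\end{align*}
yields $\|\hbwGT - \sbwg\| \leq \frac{2}{\mu}\|\nabla G(\sbwg)\| = \frac{\|\nabla G(\sbwg)\|}{\mu/2}$. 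Taking expectations, it therefore suffices to show $\mathbb{E}\|\nabla G(\sbwg)\| \leq \rho\sqrt{\sum_i p_i^2/n_i} + LR$.

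To do this, I would decompose
\begin{align*}
\nabla G(\sbwg) \;=\; \sum_{i\in[m]} p_i \nabla L_i(\sbwi, S_i) \;+\; \sum_{i\in[m]} p_i\bigl(\nabla L_i(\sbwg, S_i) - \nabla L_i(\sbwi, S_i)\bigr),
\end{align*}
and then bound the two pieces separately using the triangle inequality in expectation. The second, deterministic-looking ``heterogeneity'' piece is handled by Assumption~\ref{assump:smooth} together with the parameter space $\mathcal{P}(R)$: for each $i$, $\|\nabla L_i(\sbwg,S_i) - \nabla L_i(\sbwi,S_i)\| \leq L\|\sbwg - \sbwi\| \leq LR$, and the weights $p_i$ sum to one, giving $LR$ overall.

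For the first ``noise'' piece, I would use that $\sbwi$ is the population minimizer of $\mathbb{E}_{z\sim\mathcal{D}_i}\ell(\cdot,z)$, so $\mathbb{E}\nabla \ell(\sbwi, z_{ij}) = \nabla \mathbb{E}\ell(\sbwi, z_{ij}) = 0$. Combined with independence of $S_i$ across clients and across samples within a client, this makes $\sum_i p_i \nabla L_i(\sbwi, S_i)$ a sum of independent mean-zero vectors, so
\begin{align*}
\mathbb{E}\Bigl\|\sum_{i} p_i \nabla L_i(\sbwi, S_i)\Bigr\|^2 \;=\; \sum_{i} p_i^2\,\mathbb{E}\|\nabla L_i(\sbwi, S_i)\|^2 \;\leq\; \sum_{i} \frac{p_i^2\rho^2}{n_i},
\end{align*}
where the within-client variance estimate uses Assumption~\ref{assump:bg} together with the same mean-zero, i.i.d.\ argument on the per-sample gradients. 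Applying Jensen's inequality $\mathbb{E}\|\cdot\| \leq \sqrt{\mathbb{E}\|\cdot\|^2}$ gives the desired $\rho\sqrt{\sum_i p_i^2/n_i}$, and assembling the two bounds completes the proof.

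The only subtle step is justifying that $\mathbb{E}\|\nabla L_i(\sbwi, S_i)\|^2 \leq \rho^2/n_i$ with the right scaling in $n_i$, since this relies on expanding the square, invoking independence of the samples $z_{ij}$ within each client, and killing cross terms via the population-mean-zero property of $\nabla\ell(\sbwi, \cdot)$; it is not immediate from Assumption~\ref{assump:bg} as stated, which only bounds the second moment of a single-sample gradient. Once this variance-of-a-sum reduction is pinned down, the remaining work is routine triangle-inequality bookkeeping.
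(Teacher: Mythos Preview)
Your proposal is correct and follows essentially the same approach as the paper: both reduce to $\|\hbwGT - \sbwg\| \leq \tfrac{2}{\mu}\|\nabla G(\sbwg)\|$ via strong convexity and optimality of $\hbwGT$, then split $\nabla G(\sbwg)$ into the same noise and heterogeneity pieces bounded respectively by $\rho\sqrt{\sum_i p_i^2/n_i}$ (via mean-zero i.i.d.\ per-sample gradients and Assumption~\ref{assump:bg}) and $LR$ (via smoothness and the definition of $\mathcal{P}(R)$). Your explicit flagging of the $\rho^2/n_i$ variance reduction as the one nontrivial step is accurate; the paper handles it the same way but more tersely.
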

\begin{proof}
 By the optimality condition of the GlobalTrain problem in (\ref{obj:global}) and the strong convexity of $L_i$'s, we have
\begin{align*}
    0 &\geq \sum_{i \in [m]} p_i \left( L_i(\hbwGT,S_i) - L_i(\sbwg,S_i) \right) \\
      &\geq \sum_{i \in [m]}p_i\left\langle \nabla L_i (\sbwg,S_i), \hbwGT - \sbwg \right\rangle + \frac{\mu}{2} \|\hbwGT - \sbwg\|^2\\
      &\geq -\Big\|\sum_{i \in [m]} p_i \nabla L_i(\sbwg,S_i)\Big\| \Big\|\hbwGT - \sbwg\Big\| + \frac{\mu}{2} \|\hbwGT - \sbwg\|^2.
\end{align*}

Therefore, we have
\begin{align}
\label{eq:GTdis}
    \|\hbwGT - \sbwg\| &\leq \frac{\Big\|\sum_{i \in [m]} p_i \nabla L_i(\sbwg,S_i)\Big\|}{\mu / 2} \nonumber\\
    &\leq \frac{\Big\|\sum_{i \in [m]}p_i \nabla L_i(\sbwi,S_i)\Big\| + \sum_{i \in [m]}p_i \Big\|\nabla L_i(\sbwg,S_i) - \nabla L_i(\sbwi,S_i)\Big\|}{\mu / 2}.
\end{align}

By the $L$-smoothness property of $L_i$'s in Assumption \ref{assump:smooth} and the bounded gradient property listed in Assumption \ref{assump:bg}, we have 
\begin{align*}
    &\sum_{i \in [m]}p_i \Big\|\nabla L_i(\sbwg,S_i) - \nabla L_i(\sbwi,S_i)\Big\| \leq  \sum_{i \in [m]}p_i L \|\sbwg - \sbwi\| \leq LR,  \\
    &\mathbb{E} \Big\|\sum_{i \in [m]}p_i \nabla L_i(\sbwi,S_i)\Big\|= \mathbb{E} \Big\|\sum_{i \in [m]}\sum_{j \in [n_i]} \frac{p_i}{n_i} \nabla\ell(\sbwi, z_{ij})\Big\| \leq \rho\sqrt{\sum_{i \in[m]}\frac{p_i^2}{n_i}}.
\end{align*}
These results together with (\ref{eq:GTdis}) implies
\begin{align*}
          \mathbb{E}\|\hbwGT - \sbwg\|  \leq \frac{\rho\sqrt{\sum_{i \in[m]}\frac{p_i^2}{n_i}} + LR}{\mu / 2} 
\end{align*}
as claimed.
\end{proof}

\textbf{Remark}: Lemma \ref{lem:global_training} implies that if we solve the \textit{GlobalTrain} problem exactly, the solution will be close to the ground truth of the global model with a certain error, which is independent of $\lambda$.
 After presenting the two useful lemmas in Appendix \ref{proof_lemma1}, we will start to prove the theoretical results in Section \ref{sec: statistical error bound}. In Section \ref{proof_thm1}, we will prove Theorem \ref{thm:stat_accuracy} first, then we will prove the statistical error bound in Problem \ref{obj1:fedprox} as demonstrated in Corollary \ref{cor: minimax}.

\subsection{Proof of Statistical Convergence in Theorem \ref{thm:stat_accuracy}}\label{proof_thm1}

In the discuss below, we always assume $\sbwi$ discussed below comes from the parameter space (\ref{def:hetero_measure}). As the local model is shrunk towards the global model under the influence of personalization degree, how fast the global model $\hbwg$ convergence to $\sbwg$ can also influence the convergence rate of the local model. In Appendix \ref{thm1_proof_global}, we will establish the statistical error bound of the global model first. Then in Appendix \ref{lobal_stat_error_bound}, we will provide the local model error bound, where we explicitly show that the statistical accuracy of the local model depends on the statistical accuracy of the global model. Finally, in Appendix \ref{appendix_thm1_proof}, we combine the results of local models and the global model to establish the one-line rate for the local model explicitly stated in Theorem \ref{thm:stat_accuracy}.

\subsubsection{Global Statistical Error Bound}\label{thm1_proof_global}

To analyze the statistical error bound, we consider two separate cases: 
\( R > \sqrt{\sum_i \frac{p_i}{n_i}} \) and \( R \leq \sqrt{\sum_i \frac{p_i}{n_i}} \). 
The motivation for this distinction lies in whether collaboration among different clients is beneficial.

\textbf{Case 1}: We first consider the case when  $R>\sqrt{\sum_i \frac{p_{i}}{n_{i}}}$.

Recall that $\hbwg$ and $\{\hbwi\}_{i \in [m]}$ are the minimizers of Problem \dref{obj1:fedprox}. According to the first-order condition, we have 
\begin{align}
&\nabla_{\bwi} L_{i}\left(\bwi, S_{i}\right) \big{|}_{\bwi = \hbwi} = \lambda\left(\hbwg - \hbwi\right) \label{globalopt1}, \\
&\hbwg= \sum_{i} p_{i} \hbwi \label{globalopt2}.
\end{align}

We start with the optimality condition of $\hbwi$ and $\hbwg$, which yields
\begin{align}
\label{global: optimality cond0}
0 &\geq \sum_{i \in [m]} p_i \left[ L_i(\hbwi, S_i) + \frac{\lambda}{2} \|\hbwg - \hbwi\|^2 \right] 
- \sum_{i \in [m]} p_i \left[ L_i(\sbwi, S_i) + \frac{\lambda}{2} \|\sbwg - \sbwi\|^2 \right].
\end{align}
Reorganizing the terms, we obtain 
\begin{align}
\label{global: optimality cond1}
0 &\geq \sum_{i \in [m]}\sum_{j\in [n_i]} \frac{p_i}{n_i} \left( \ell(\hbwi, z_{ij}) - \ell(\sbwi, z_{ij}) \right) + \sum_{i \in [m]} p_i \frac{\lambda}{2} \|\hbwg - \hbwi\|^2 
- \sum_{i \in [m]} p_i  \frac{\lambda}{2} \|\sbwg - \sbwi\|^2 .
\end{align}
For the first term on the R.H.S. in \dref{global: optimality cond1}, apply the $\mu$-strongly convexity of the loss function $\ell$ (cf. Assumption \ref{assump:sc}), we then have

\begin{align}
\label{global: optimality cond2}
\begin{split}
     \sum_{i \in [m]}\sum_{j\in [n_i]} \frac{p_i}{n_i} \left( \ell(\hbwi, z_{ij}) - \ell(\sbwi, z_{ij}) \right) & \geq -\sum_{i\in [m]}\sum_{j\in [n_i]} \frac{p_i}{n_i} \left\langle \nabla \ell(\sbwi,z_{ij}),\hDi\right\rangle \\
     & + \frac{\mu}{2} \sum_{i\in [m]}\sum_{j \in [n_i]}\frac{p_i}{n_i} \left\|\hDi \right\|^2, \\
\end{split}
\end{align}

where we use the definition of $\hDi$ in \dref{hDi}.

Plugging \dref{global: optimality cond2} back into \dref{global: optimality cond1}, it yields

\begin{align}
\label{global: opt con3}
\begin{split}
0 &\geq -\sum_{i}\sum_j \frac{p_i}{n_i} \left\langle \nabla \ell(\sbwi,z_{ij}),\hDi\right\rangle + \frac{\mu}{2} \sum_i\sum_j \frac{p_i}{n_i} \left\|\hDi \right\|^2 \\
& + \sum_{i \in [m]} p_i \frac{\lambda}{2} \|\hbwg - \hbwi\|^2  - \sum_{i \in [m]} p_i  \frac{\lambda}{2} \|\sbwg - \sbwi\|^2 .
\end{split}
\end{align}

Recall the definition in \dref{note:delta} and the assumption about statistical heterogeneity in (\ref{def:hetero_measure})

\begin{align}
\label{global: opt con4}
    0 &\geq -\sum_{i}\sum_j \frac{p_i}{n_i} \left\langle \nabla \ell(\sbwi,z_{ij}),\hDi\right\rangle +  \frac{\mu}{2}  \sum_i\sum_j \frac{p_i}{n_i} \left\|\hDi \right\|^2  +\frac{\lambda}{2}\sum_i p_i \left\|\hdi\right\|^2 - \frac{\lambda}{2}R^2  .
\end{align}

Applying Cauchy inequality on the first term of \dref{global: opt con4}, it yields

\begin{align}
\label{global: case1 con1}
\begin{split}
    \sum_{i}\sum_j \frac{p_i}{n_i} \left\langle \nabla \ell(\sbwi,z_{ij}),\hDi\right\rangle  & = \sum_{i} p_i \left\langle \sum_j \frac{1}{n_i}\nabla \ell(\sbwi,z_{ij}),\hDi\right\rangle\\
    & \leq \sum_{i} p_i\left\|\sum_j \frac{1}{n_i}\nabla \ell(\sbwi,z_{ij})\right\|_2 \left\|\hDi\right\|_2.
\end{split}
\end{align}

Consider the Assumption \ref{assump:bg} and $z_{ij}$ are i.i.d data 
\begin{align}
\label{global: case1 con2}
   \left(\mathbb{E}\left\|\sum_j \frac{1}{n_i}\nabla \ell(\sbwi,z_{ij})\right\|_2\right)^2 \leq \mathbb{E}\left\|\sum_j \frac{1}{n_i}\nabla \ell(\sbwi,z_{ij})\right\|^2 \leq \frac{\rho^2}{n_i}.
\end{align}
Combining \dref{global: case1 con1} and \dref{global: case1 con2}, it yields

\begin{align}
\label{global: case1 con3}
\begin{split}
    \sum_{i}\sum_j \frac{p_i}{n_i} \mathbb{E}\left\langle \nabla \ell(\sbwi,z_{ij}),\hDi\right\rangle \leq \sum_{i} \rho\frac{p_i}{\sqrt{n_i}} \mathbb{E}\|\hDi\|_2.  \\
\end{split}
\end{align}

Plugging \dref{global: case1 con3} back to \dref{global: opt con4} yields
\begin{equation}
\label{global: case1 con4}
   0 \geq -\rho\sum_{i}\frac{p_i}{\sqrt{n_i}} \E\left\|\hDi\right\|_2 +  \frac{\mu}{2} \sum_i p_i \E\left\|\hDi \right\|^2 + \frac{\lambda}{2}\sum_i p_i \E\left\|\hdi\right\|^2 - \frac{\lambda}{2}R^2 .
\end{equation}

After dropping the third term and applying Cauchy inequality, we obtain
\begin{align}
\label{global: case1 con5}
    0 \geq-\rho\left( \Big( \sum_{i \in [m]} \frac{p_i}{n_{i}} \Big) \Big(\sum_i p_i \E\left\| \hDi\right\|^2 \Big)\right)^{\frac{1}{2}} + \frac{\mu}{2} \sum_i p_i \E\left\| \hDi\right\|^2-\frac{\lambda}{2} R^{2}.
\end{align}

If we assume $p_i = \frac{1}{m}$ and $n_i = n_j \forall i\neq j$ (assumptions used in Theorem \ref{thm:stat_accuracy}), we have 

\begin{equation}\label{thm1_global_larger}
\E\left\|\hDg\right\|^2 \leq \frac{1}{u^2}\left(\frac{4 \rho^2}{n}+2 \mu \lambda R^2\right).
\end{equation}

\textbf{Case 2:} Then we consider $R \leq \sqrt{\sum_i \frac{p_{i}}{n_i}}$. Recall the definition of the global minimizer of GlobalTraining $\hbwGT = \arg\min_{w}  \sum_i p_i L_i(w; S_i) $, based on the optimality condition of $\widetilde{\boldsymbol{w}}_i, \widetilde{\boldsymbol{w}}_g$, $i \in [m]$, we obtain

\begin{align}
\begin{split}
    \sum_i p_i \left( L_i(\hbwi, S_i) + \frac{\lambda}{2} \|\hbwi - \hbwg\|^2 \right) &\leq \sum_i p_i \left( L_i(\hbwGT, S_i) + \frac{\lambda}{2} \|\ \hbwGT -\hbwGT\|^2 \right)\\
    &= \sum_i p_i L_i(\hbwGT, S_i).
\end{split}
\end{align}
Therefore, we obtain
\begin{align}
\label{optimialitycondition1}
    \sum_i p_i L_i(\hbwGT, S_i) &\geq \sum_i p_i L_i(\hbwi, S_i) .
\end{align}

In addition, applying the smoothness assumption in Assumption \ref{assump:smooth} yields

\begin{align}\label{smooth2_eq}
    \left| L_i(\hbwi, S_i) - L_i(\hbwg, S_i) - \nabla L_i(\hbwg, S_i)^T (\hbwi - \hbwg) \right| &\leq \frac{L}{2} \|\hbwi - \hbwg\|^2 .
\end{align}

Combining (\ref{optimialitycondition1}) and (\ref{smooth2_eq}), we have

\begin{align}
    \sum_i p_i L_i(\hbwGT, S_i) &\geq \sum_i p_i L_i(\hbwi, S_i)  \\
    &\geq \sum_i p_i L_i(\hbwg, S_i) + \sum_i p_i \nabla L_i(\hbwg, S_i)^\top (\hbwi - \hbwg)  - \sum_i p_i \frac{L}{2} \|\hbwi - \hbwg|^2.
\end{align}

Using Cauchy inequality on the second term of the R.H.S.,

\begin{align}\label{cauchysmooth_eq}
    \sum_i p_i L_i(\hbwGT, S_i) &\geq \sum_i p_i L_i(\hbwg, S_i) - \sum_i p_i \|\nabla L_i(\hbwg, S_i)\|_2 \|\hbwi - \hbwg\|_2 \nonumber \\
    &\quad - \sum_i p_i \frac{L}{2} \|\hbwi - \hbwg\|^2.
\end{align}

On the other hand, note that the optimality condition of $\hbwGT$ is

\begin{align}
    \sum_i p_i \nabla L_i(\hbwGT, S_i) = 0.
\end{align}

Using the strong-convexity assumption in Assumption \ref{assump:sc}, we can show that
\begin{align} \label{eq_gtsc}
    &\sum_{i} p_i \left\{ L_i(\hbwg, S_i) - L_i(\hbwGT, S_i) - \nabla L_i(\hbwGT, S_i)^T (\hbwg - \hbwGT) \right\} \nonumber \\
    &= \sum_{i} p_i \left\{ L_i(\hbwg, S_i) - L_i(\hbwGT, S_i) \right\} \nonumber \\
    &\geq \sum_{i} p_i \frac{\mu}{2} \|\hbwg - \hbwGT\|^2 \nonumber \\
    &= \frac{\mu}{2} \|\hbwg - \hbwGT\|^2.
\end{align}

Combining the results (\ref{cauchysmooth_eq}) and (\ref{eq_gtsc}), it yields

\begin{align}
    \sum_{i} p_i L_i(\hbwGT, S_i) & \geq \sum_{i} p_i L_i(\hbwGT, S_i) + \frac{\mu}{2} \|\hbwg - \hbwGT\|^2 \nonumber \\
    & - \sum_{i} p_i \|\nabla L_i(\hbwg, S_i)\|_2 \|\hbwi- \hbwg\|_2  - \sum_{i} p_i \frac{L}{2} \|\hbwi- \hbwg\|^2.
\end{align}

Reorganizing these terms yields

\begin{align}
    &\frac{\mu}{2} \|\hbwg - \hbwGT\|^2 \leq \sum_{i} p_i \left\{ \|\nabla L_i(\hbwg, S_i)\|_2 \|\hbwi - \hbwg\|_2 + \frac{L}{2} \|\hbwi - \hbwg\|^2 \right\}.
\end{align}

Using Lemma \ref{lem:delta} to bound the term $\|\hbwi - \hbwg\|^2$, we have 

\begin{align}\label{g_gt_eq1}
    \|\hbwg - \hbwGT\|^2 &\leq \frac{2}{\mu} \sum_{i} p_i \left[ \frac{2}{\mu + \lambda} + \frac{2L}{(\mu + \lambda)^2} \right] \|\nabla L_i(\hbwg, S_i)\|^2 \nonumber \\
    &= \frac{4}{\mu}\left(\frac{1}{\mu+\lambda}+\frac{L}{(\mu+\lambda)^2}\right)\sum_{i} p_i \|\nabla L_i (\hbwg, S_i)\|^2.
\end{align}

Note that by the smoothness Assumption and triangle inequality,
\begin{align}
   \|\nabla L_i(\hbwg, S_i)\|_2^2 &\leq 2 \|\nabla L_i(\hbwg, S_i) - \nabla L_i(\sbwi, S_i)\|^2 + 2\|\nabla L_i(\sbwi, S_i)\|^2 \nonumber \\
    &\leq 2L^2 \|\hbwg - \sbwi\|^2 + 2 \|\nabla L_i(\sbwi, S_i)\|^2.
\end{align}

Take expectation w.r.t all data and use Assumption \ref{assump:bg} in the parameter space (\ref{def:hetero_measure}), then we have
\begin{align}\label{eq_expection}
    \E[\|\nabla L_i(\hbwg, S_i)\|^2] &\leq 2L^2 \E[\|\hbwg - \sbwi\|^2] + 2 \E\left[\left\|\frac{1}{n_i} \sum_{j=1}^{n_i} \nabla L(\sbwi, z_{ij})\right\|^2\right] \nonumber \\
    &\leq 4L^2 \E[\|\hbwg - \sbwg\|^2] + 4L^2 R^2 + 2 \frac{\rho^2}{n},
\end{align}
where the last term comes from (\ref{global: case1 con2}).

Plugging (\ref{eq_expection}) back into (\ref{g_gt_eq1}), it yields

\begin{align}\label{g_gt_eq2}
    \E[\|\hbwg - \hbwGT\|^2] &\leq \frac{4}{\mu} \left(\frac{1}{\mu + \lambda} + \frac{L}{(\mu + \lambda)^2}\right) \left\{ 4L^2 \E[\|\hbwg - \sbwg\|^2] + 4L^2 R^2 + 2 \frac{\rho^2}{n} \right\} .
\end{align}

Next, we are ready to establish the global error bound. Using the result (\ref{g_gt_eq2}) and Lemma \ref{lem:global_training}, we obtain

\begin{align}\label{globalaerRsmall1}
    \E[\|\hbwg - \sbwg\|^2] 
    &\leq \frac{4}{\mu} \left(\frac{1}{\mu + \lambda} + \frac{L}{(\mu + \lambda)^2}\right) \left\{ 4L^2 \E[\|\hbwg - \sbwg\|^2] + 4L^2 R^2 + 2 \frac{\rho^2}{n} \right\} \nonumber \\
    &\quad + \frac{8\rho^2}{\mu^2} \frac{1}{N} + \frac{8L^2}{\mu^2} R^2.
\end{align}

Let's define $ g(\lambda) = \frac{4}{\mu} \left(\frac{1}{\mu + \lambda} + \frac{L}{(\mu + \lambda)^2}\right)$ and reorganize (\ref{globalaerRsmall1}), which yields

\begin{align}
\label{eq:global_bound_with_l}
    \E[\|\hbwg - \sbwg\|^2] &\leq \frac{g(\lambda) \left( 4L^2 R^2 + 2 \frac{\rho^2}{n} \right) + \frac{8\rho^2}{\mu^2} \frac{1}{N} + \frac{8L^2}{\mu^2} R^2}{1 - 4L^2 g(\lambda)} .
\end{align}

If we assume $p_i = \frac{1}{m}$, $n_i = n_j\; \forall i\neq j$ (assumptions used in Theorem \ref{thm:stat_accuracy}) and $g(\lambda) \leq 1/(8L)^2$, we have
\begin{align}\label{thm1_g_small}
    \E[\|\hDg\|^2] &\leq \frac{g(\lambda)}{1 - 4L^2 g(\lambda)}  \left( 4L^2 R^2 + 2 \frac{\rho^2}{n} \right) +  \frac{16\rho^2}{\mu^2} \frac{1}{N} + \frac{16L^2}{\mu^2} R^2.
\end{align}

\subsubsection{Local Statistical Error Bound}\label{lobal_stat_error_bound}
Analogous to the analysis of the global model, we examine the statistical accuracy of the local model in a similar manner. For simplicity, in the following arguments, we denote the upper bound of  $\E\|\hDg\|_2$ as $U_0$. The expliciate expression would be found in results \dref{thm1_global_larger} for $R>\sqrt{\sum_i \frac{p_{i}}{n_{i}}}$ and \dref{thm1_g_small} for $R\leq\sqrt{\sum_i \frac{p_{i}}{n_{i}}}$.

\textbf{Case 1}: first we consider the case when $R>\sqrt{\sum_i \frac{p_{i}}{n_{i}}}$.

To prove the local statistical error bound, we start with the optimality condition of $\hbwi$ and $\hbwg$ for a single client, which yields that for $i \in [m]$,
\begin{align}
\label{local: optimality cond0}
 0 &\geq L_{i}\left(\hbwi, S_{i}\right)+\frac{\lambda}{2}\left\|\hbwg-\hbwi\right\|^{2} - L_{i}\left(\sbwi, S_{i}\right)-\frac{\lambda}{2}\left\|\hbwg-\sbwi\right\|^{2}.
\end{align}

Reorganized the terms, we obtain

\begin{align}
\label{local: optimality cond1}
0 & \geq \sum_{j \in [n_i] } \frac{1}{n_{i}}\left(\ell\left(\hbwi, z_{ij}\right)-\ell\left(\sbwi, z_{ij}\right)\right)+\frac{\lambda}{2}\left\|\hbwg-\hbwi\right\|^{2}-\frac{\lambda}{2}\left\|\hbwg-\sbwi\right\|^{2}.
\end{align}

For the first term on the R.H.S., applying the $\mu$-strongly convex of the loss function  $\ell$ (cf. Assumption \ref{assump:sc}) yields

\begin{align}
\label{local: optimality cond2}
\begin{split}
     \sum_{j\in [n_i]} \frac{1}{n_i} \left( \ell(\hbwi, z_{ij}) - \ell(\sbwi, z_{ij}) \right) & \geq -\sum_{j\in [n_i]} \frac{1}{n_i} \left\langle \nabla \ell(\sbwi,z_{ij}),\hDi\right\rangle + \frac{\mu}{2} \sum_{j \in [n_i]}\frac{1}{n_i} \left\|\hDi \right\|^2, 
\end{split}
\end{align}

where we use the definition of $\hDi$ in \dref{hDi}.

Plugging \dref{local: optimality cond2} back into \dref{local: optimality cond1}, it yields

\begin{align}
\label{local: opt con3}
\begin{split}
0 &\geq -\sum_j \frac{1}{n_i} \left\langle \nabla \ell(\sbwi,z_{ij}),\hDi\right\rangle + \frac{\mu}{2} \sum_j \frac{1}{n_i} \left\|\hDi \right\|^2 +  \frac{\lambda}{2} \|\hbwg - \hbwi\|^2  -   \frac{\lambda}{2} \|\hbwg - \sbwi\|^2.
\end{split}
\end{align}

Apply Cauchy inequality on the first term of \dref{local: opt con3}

\begin{align}
\label{local: opt con4}
\begin{split}
    \sum_j \frac{1}{n_i} \left\langle \nabla \ell(\sbwi,z_{ij}),\hDi\right\rangle  & =  \left\langle \sum_j \frac{1}{n_i}\nabla \ell(\sbwi,z_{ij}),\hDi\right\rangle\\
    & \leq\left\|\sum_j \frac{1}{n_i}\nabla \ell(\sbwi,z_{ij})\right\|_2 \left\|\hDi\right\|_2.
\end{split}
\end{align}

Consider the Assumption \ref{assump:bg} and $z_{ij}$ are i.i.d data 
\begin{align}
\label{local: opt con5}
   \left(\mathbb{E}\left\|\sum_j \frac{1}{n_i}\nabla \ell(\sbwi,z_{ij})\right\|_2\right)^2 \leq \mathbb{E}\left\|\sum_j \frac{1}{n_i}\nabla \ell(\sbwi,z_{ij})\right\|^2 \leq \frac{\rho^2}{n_i}.
\end{align}

Combining \dref{local: opt con4} and \dref{local: opt con5}, it yields

\begin{align}
\label{local: opt con6}
\begin{split}
\sum_j \frac{1}{n_i} \mathbb{E}\left\langle \nabla \ell(\sbwi,z_{ij}),\hDi\right\rangle \leq  \rho\frac{1}{\sqrt{n_i}} \E\|\hDi\|_2.  \\
\end{split}
\end{align}

Plugging \dref{local: opt con6} back to \dref{local: opt con3} yields

\begin{equation}
\label{local: opt con7}
   0 \geq -\rho\frac{1}{\sqrt{n_i}} \E\left\|\hDi\right\|_2 + \frac{\mu}{2}\E\left\|\hDi \right\|^2 +  \frac{\lambda}{2} \E\|\hbwg - \hbwi\|^2  -   \frac{\lambda}{2}\E\|\hbwg - \sbwi\|^2.
\end{equation}

Note that 
\begin{align}
\begin{split}
\label{local: case1 1}
    -\|\hbwg - \sbwi\|^2 & = -\left\|\hbwg-\sbwg + \sbwg-\sbwi\right\|^2 \\
    &  \geq - \|\hDg\|^2 - \|\sbwg- \sbwi\|^2 - 2\|\sbwg- \sbwi\|_2\left\|\hDg\right\|_2 \\
    & \geq - \|\hDg\|^2 - R^2 - 2R\left\|\hDg\right\|_2 .
\end{split}
\end{align}

Plug \dref{local: case1 1} into \dref{local: opt con7}

\begin{align}
\begin{split}
\label{local: case1 2}
0 \geq & -\frac{\rho}{\sqrt{n_i}} \E\left\|\hDi\right\|_2 + \frac{\mu}{2}\E\left\|\hDi\right\|^2 +\frac{\lambda}{2}\E\left\|\hdi\right\|^2-\frac{\lambda}{2}\E\left\|\hbwg-\sbwg + \sbwg-\sbwi\right\|^2 \\
\geq & -\frac{\rho}{\sqrt{n_i}}\E\left\|\hDi\right\|_2 + \frac{\mu}{2}\E\left\|\hDi\right\|^2 +\frac{\lambda}{2}\E\left\|\hdi\right\|^2-\frac{\lambda}{2}\E\left\|\hDg\right\|^2-\frac{\lambda}{2}R^2-\lambda R\E\left\|\hDg\right\|_2 . \\
\end{split}
\end{align}

Once we denote the upper bound of $\E\left\|\hDg\right\|^2$ as $U_0$, it yields
\begin{align}
\begin{split}
\label{local: case1 3}
 0 \geq & -\frac{\rho}{\sqrt{n_i}}\E\left\|\hDi\right\|_2 + \frac{\mu}{2} \E\left\|\hDi\right\|^2 +\frac{\lambda}{2} \E\left\|\hdi\right\|^2-\frac{\lambda}{2}U_0^2-\frac{\lambda}{2}R^2-\lambda R \; U_0 \\
\geq &  \frac{\mu}{2}\E\left\|\hDi\right\|^{2}-\frac{\rho}{\sqrt{n_{i}}}\E\left\|\hDi\right\|_{2}-\frac{\lambda}{2}\left(U_0+R\right)^{2}.
\end{split}
\end{align}

If we assume $p_i = \frac{1}{m}$, $n_i = n_j\; \forall i\neq j$ (assumptions used in Theorem \ref{thm:stat_accuracy}), we can obtain
\begin{align}\label{thm1_local_large}
\E\left\|\hDi\right\|^2  \leq \frac{1}{u^2}\left(\frac{4 \rho^2}{n}+4 u \lambda\left(U_0^2+R^2\right)\right).
\end{align}

\textbf{Case 2:} we then consider the case when $R \leq \sqrt{\sum_i \frac{p_{i}}{n_i}}$.

If we assume $p_i = \frac{1}{m}$, $n_i = n_j\; \forall i\neq j$ (assumptions used in Theorem \ref{thm:stat_accuracy}), we can obtain

\begin{align}
\label{thm1_local_smallr}
    \quad & \E[\|\hbwi - \sbwi\|^2] \nonumber\\
    & \leq 3 \E[\|\hbwi - \hbwg\|^2] + 3 \E[\|\hbwg - \sbwg\|^2] + 3 \E[\|\sbwg - \sbwi\|^2] \nonumber \\
    &\overset{(a)}{\leq} \frac{12}{(\mu + \lambda)^2} \E[\|\nabla L_i(\hbwg)\|^2] + 3 \E[\|\hbwg - \sbwg\|^2] + 3 R^2 \nonumber \\
    & \overset{(b)}{\leq} \frac{12}{(\mu + \lambda)^2} \left[ 4L^2 \E[\|\hbwg - \sbwg\|^2] + 4L^2 R^2 + 2 \frac{\rho^2}{n} \right] + 3 \E[\|\hbwg - \sbwg\|^2] + 3 R^2 ,
\end{align}

where step (a) comes from the assumption of parameter space in (\ref{def:hetero_measure}) and Lemma \ref{lem:delta} and step (b) comes from result (\ref{eq_expection}).

\subsubsection{Proof the Statistical Error Bound in Theorem \ref{thm:stat_accuracy}}
\label{appendix_thm1_proof}

For $R > \sqrt{\sum_i \frac{p_{i}}{n_i}}$, putting the statistical error of global model \dref{thm1_global_larger} into the statistical error of local models \dref{thm1_local_large}, we have  
\begin{align}
\E\left\|\hDi\right\|^2 \leq \left(\frac{4 \rho^2}{u^2}+\frac{4 \rho^2}{u^3} \lambda\right) \frac{1}{n}+\frac{8}{u^2} \lambda^2 R^2+\frac{4}{u} \lambda R^2,
\end{align}
therefore the second part of (\ref{stat:med_eq2}) in Theorem \ref{thm:stat_accuracy} is established. 
For clarity, we can define 
\begin{align}\label{q_2}
    q_2(\lambda) = \frac{8}{u^2} \lambda^2 R^2+\frac{4}{u} \lambda R^2.
\end{align}

For $R > \sqrt{\sum_i \frac{p_{i}}{n_i}}$, 
plugging the result (\ref{thm1_g_small}) of the global model into the local model's error bound (\ref{thm1_local_smallr}) and reorganizing the terms yields
\begin{align}
\E\left[\|\hDi\|^2\right] \leq & \frac{12}{(\mu+\lambda)^2}\left[4 L^2 R^2+2 \frac{\rho^2}{n}\right]+ {\left[\frac{48 L^2}{(\mu+\lambda)^2}+3\right]U_0^2+3 R^2 } \\
\leq & \frac{48 \rho^2}{\mu^2} \frac{1}{N}+\left[\frac{48 L^2}{\mu^2}+3\right] R^2+\frac{1}{q_1(\lambda)},
\end{align}
where 
\begin{align}\label{qlambda}
\begin{split}
[q_1(\lambda)]^{-1}=& \frac{12}{(\mu+\lambda)^2}\left[4 L^2 R^2+2 \frac{\rho^2}{n}\right]+ \frac{48 L^2}{(\mu+\lambda)^2}\left[\frac{g(\lambda)}{1-4 L^2 g(\lambda)}\left(4 L^2 R^2 +2 \frac{\rho^2}{n^2}\right)\right. \\
& \left.+2\left(\frac{8 \rho^2}{\mu^2} \frac{1}{N}+\frac{8 L^2}{\mu^2} R^2\right)\right]+ \frac{3 g(\lambda)}{1-4 L^2 g(\lambda)}\left(4 L^2 R^2+2 \frac{\rho^2}{n^2}\right).
\end{split}
\end{align}
This combining with the definition of $g(\lambda)$ implies that $q_1(\lambda)$ is a monotonically increasing function of $\lambda$ and $\lim_{\lambda \to \infty} q_1(\lambda) = \infty$, as claimed.

\subsection{The Proof of Corollary \ref{cor: minimax}}

Analogous to the proof of Theorem \ref{thm:stat_accuracy}, we will discuss the two cases of $R^2$ separately as well. For clarity of our analysis, we will assume $p_i = \frac{1}{m}$ and $n_i = n_j \: \forall i \neq j$.

\subsubsection{The Statistical Convergence of the Global Model with a Choice of $\lambda$}
\textbf{Case 1:} For $R > \frac{1}{\sqrt{n}}$, we have derived the global model error bound in \dref{global: case1 con5} as 

\begin{align}
    0 \geq-\rho\left( \Big( \sum_{i \in [m]} \frac{p_i}{n_{i}} \Big) \Big(\sum_i p_i \E\left\| \hDi\right\|^2 \Big)\right)^{\frac{1}{2}} + \frac{\mu}{2} \sum_i p_i \E\left\| \hDi\right\|^2-\frac{\lambda}{2} R^{2}.
\end{align}

If we set the $\lambda$ as 
\begin{align}
\label{global: case1 lambda}
    \lambda \leq\frac{\rho^2}{\mu n R^{2}},
\end{align}
then
solving the inequality in \dref{thm1_global_larger} w.r.t $\sum_i p_i \E\left\| \hDi\right\|^2$ cound yield

\begin{align}
\begin{split}
 \E\left\|\hDg\right\|
   \leq \left(\sum_{i}p_i\E\left\|\hDi\right\|^2\right)^{\frac{1}{2}}
\leq \frac{(1+\sqrt{3})\rho}{\mu\sqrt{n}}.
\end{split}
\end{align}

\textbf{Case 2:} For $R \leq  \frac{1}{\sqrt{n}}$, we have obtain the results in \dref{thm1_g_small} as 

\begin{align}
    \E[\|\hDg\|^2] &\leq \frac{g(\lambda)}{1 - 4L^2 g(\lambda)}  \left( 4L^2 R^2 + 2 \frac{\rho^2}{n} \right) +  \frac{16\rho^2}{\mu^2} \frac{1}{N} + \frac{16L^2}{\mu^2} R^2.
\end{align}

If we assume 
\begin{align}\label{Rsmall_lambda1}
    g(\lambda) &\leq \frac{1}{8 L^2} \land \left(\left( \frac{4 \rho^2}{\mu^2N} + \frac{4 L^2 R^2}{\mu^2} \right)\Big/ \left( 2 L^2 R^2 + \frac{\rho^2}{n} \right) \right),
\end{align}
we have
\begin{align}
     \E\|\hbwg - \sbwg\|^2 &\leq 32 \left( \frac{\rho^2}{\mu^2} \frac{1}{N} + \frac{L^2}{\mu^2} R^2 \right).
\end{align}

Furthermore, we can get a simplified condition for $\lambda$ as 

\begin{align}\label{lambda_1_simple}
    \lambda \geq \frac{8\kappa}{\mu}\left\{ 8 L^2 \vee \left(\left( 2 L^2 R^2 + \frac{\rho^2}{n} \right) \Big/ \left( \frac{4 \rho^2}{\mu^2N} + \frac{4 L^2 R^2}{\mu^2} \right)\right)\right\} - \mu,
\end{align}

where $\kappa = \frac{L}{\mu}\geq 1$ and the condition that $\lambda$ is non-negative holds trivially as $\frac{\kappa L^2}{\mu} =  \kappa^2 L \geq \mu$.

Combine two-case arguments, we have

\begin{align}\label{global_underchoice}
 \E\|\hbwg - \sbwg\|^2  \leq 
 \begin{cases}
 \left(\frac{(1+\sqrt{3})\rho}{\mu\sqrt{n}}\right)^2
 & R >  \frac{1}{\sqrt{n}} \\ 
 32 \left( \frac{\rho^2}{\mu^2} \frac{1}{N} + \frac{L^2}{\mu^2} R^2 \right)
 & R \leq  \frac{1}{\sqrt{n}}.
 \end{cases}
\end{align}

Thus it yields a one-line rate 
\begin{align}
\E\|\hbwg - \sbwg\|^2 \leq  C_1 \frac{1}{N} +  C_2 (\frac{1}{n} \wedge R^2 ) ,
\end{align}
where 
\begin{align}
    \begin{split}
    \label{c1c2}
        C_1 & =  32\frac{\rho^2}{\mu^2},\\
       C_2 & = \left(\frac{(1+\sqrt{3})\rho}{\mu}\right)^2\vee \frac{32 L^2}{\mu^2}.
    \end{split}
\end{align}

\subsubsection{The Statistical Convergence of the Local Model with a Choice of $\lambda$}

\textbf{Case 1:} For $R >  \frac{1}{\sqrt{n}}$, we have derived the result in \dref{thm1_local_large} as

\begin{align}
\E\left\|\hDi\right\|^2  \leq \frac{1}{u^2}\left(\frac{4 \rho^2}{n}+4 u \lambda\left(U_0^2+R^2\right)\right).
\end{align}

In addition, recall that, when $R >  \frac{1}{\sqrt{n}}$, we set $\lambda \leq\frac{\rho^2}{ \mu n R^{2}}$ in the statistical convergence analysis of the global model. Therefore, putting the global model's error bound (See result in \dref{global_underchoice}) into \dref{thm1_local_large}, which yields
\begin{align}
\label{local: case1 4}
\begin{split}
\E \|\hDi\|_{2} & \leq \frac{1}{\mu}\left[\frac{\rho}{\sqrt{n}}+\left(\frac{\rho^{2}}{n}+\frac{2\rho^2}{nR^2}\left(U_0+R\right)^{2}\right)^{\frac{1}{2}}\right]. \\
& = \frac{1}{\sqrt{n}}\frac{1}{\mu}\left(\rho+ \left(\rho^2+\frac{2\rho^2}{R^2}(U_0+ R)^2\right)^{\frac{1}{2}}\right)\\
& \leq \frac{1}{\sqrt{n}} \frac{\rho + (\rho^2 + 4\rho^2(C_2+1))^{\frac{1}{2}}}{\mu}\\
& = \frac{1}{\sqrt{n}} \frac{\rho(1 + (1 + 4(C_2+1))^{\frac{1}{2}})}{\mu},
\end{split}
\end{align}

where $C_2$ has been defined in \dref{c1c2}.

\textbf{Case 2:} For $R \leq  \frac{1}{\sqrt{n}}$, from the result \dref{thm1_local_smallr}, we know 
\begin{align}
     \quad \E[\|\hbwi - \sbwi\|^2]  \leq  \frac{12}{(\mu + \lambda)^2} \left[ 4L^2 U_0^2 + 4L^2 R^2 + 2 \frac{\rho^2}{n} \right] + 3 U_0^2 + 3 R^2  .
\end{align}
Then putting the global model's results \dref{global_underchoice} into \dref{thm1_local_smallr}, it yields
\begin{align}
    & \quad \E[\|\hbwi - \sbwi\|^2] \\
    &\leq \frac{12}{(\mu + \lambda)^2} \left[ 128 L^2 \left( \frac{\rho^2}{\mu^2} \frac{1}{N} + \frac{L^2}{\mu^2} R^2 \right) + 4L^2 R^2 + 2 \frac{\rho^2}{n} \right] + 96 \left( \frac{\rho^2}{\mu^2} \frac{1}{N} + \frac{L^2}{\mu^2} R^2 \right) + 3 R^2 \nonumber \\
    &= \frac{12}{(\mu + \lambda)^2} \left[\frac{128 L^2 \rho^2}{\mu^2} \frac{1}{N} + (128 L^4 / \mu^2 + 4L^2) R^2 + 2 \frac{\rho^2}{n} \right] + \frac{96 \rho^2}{\mu^2} \frac{1}{N} + (96\frac{L^2}{\mu^2}+ 3) R^2 .
\end{align}

If we assume $\lambda$ satisfies the following condition

\begin{align}\label{Rsamll_lambda2}
    \frac{12}{(\mu+\lambda)^2} \leq \frac{96\rho^2/(\mu^2N)+ 99\kappa^2R^2}{128\rho^2/N+ 132L^2R^2 + 2\rho^2/n},
\end{align}

then we will have

\begin{align}
     \E[\|\hbwi - \sbwi\|^2]  \leq \frac{192\rho^2}{\mu^2}\frac{1}{N}+ 198\kappa^2R^2.
\end{align}

Furthermore, we can get a simplified condition for $\lambda$ as 
\begin{align}\label{lambda_2_condtion_simple}
    \lambda \geq \sqrt{\frac{8(64\rho^2/N+ 66L^2R^2 + \rho^2/n)}{32\rho^2/(\mu^2N)+ 33\kappa^2R^2}}-\mu.
\end{align}
% 2\mu
Combining two-case arguments, we have
\begin{align}
    \Rightarrow\E\left\|\hDi\right\|^{2} \leq\begin{cases}\frac{1}{n}\left(\frac{\rho(1 + (1+4(C_2+1))^{\frac{1}{2}})}{\mu}\right)^2 &R > \sqrt{\frac{m}{N}} \\    \frac{192\rho^2}{\mu^2}\frac{1}{N}+ 198\kappa^2R^2 & R\leq\sqrt{\frac{m}{N}}   \end{cases}.
\end{align}

And it yields a one-line rate
\begin{align}\label{one-line}
\E\left\|\hDi\right\|^2 \leq   C_3\frac{1}{N} + C_4 (R^2 \wedge \frac{1}{n}),
\end{align}

where 
\begin{align}\label{const_stat}
\begin{split}
    C_3 & = \frac{192\rho^2}{\mu^2},\\
    C_4 &  = \left(\frac{\rho(1 + (4C_2+5)^{\frac{1}{2}})}{\mu}\right)^2  \vee 198\kappa^2,\\
\end{split}
\end{align}
and $C_2$ are specified in \dref{c1c2}. 

Therefore, combining results from Section \ref{thm1_proof_global} and \ref{lobal_stat_error_bound} we can obtain the upper bound in (\ref{eq:local_stat}).

 To make the global statistical error bound and local statistical error bound hold simultaneously, we should be aware about the condition of the adaptive strategy of $\lambda$ specified in our proof. It could be summarized as
    \begin{align}\label{labmda}
    \begin{cases} \lambda \geq max(a_1,a_2)&R < \sqrt{\frac{m}{N}} \\   \lambda \leq\frac{\rho^2}{n\mu R^{2}}   & R\geq\sqrt{\frac{m}{N}}   \end{cases},
    \end{align}

where $$a_1 = \frac{8\kappa}{\mu}\left\{ 8 L^2 \vee \left(\frac{ 2 L^2 R^2 + \rho^2/n}{ 4 \rho^2/(\mu^2N) + 4 \kappa^2 R^2}\right)\right\} - \mu,$$ and $$a_2 =\sqrt{\frac{8(64\rho^2/N+ 66L^2R^2 + \rho^2/n)}{32\rho^2/(\mu^2N)+ 33\kappa^2R^2}}-\mu.$$

We can induce a sufficient condition of $\lambda$ for $R < \frac{1}{\sqrt{n}}$ as 

\begin{align}
\lambda \geq \text{max}\left\{64\kappa^2L, \left(2\kappa\vee5\right)\mu \frac{2L^2R^2 + \rho^2/n}{L^2R^2+\rho^2/N}\right\} -\mu.
\end{align}

To sum up, we give the one-line rate in \dref{one-line} for Corollary \ref{cor: minimax} and provide the potential solution of $\lambda$ with a 
closed-form solution in \dref{labmda}.

\subsection{Discussion on the Statistical Convergence Rate and Comparison with Existing Results}
\label{stat_comparison}

Combining the results in \ref{thm1_proof_global} and \ref{lobal_stat_error_bound}, we leave a remark below to further interpret the effect of personalization in Problem (\ref{obj1:fedprox}) and discuss how to incorporate the conditions of $\lambda$ when establishing the global and local statistical accuracy.

    To better understand how such an interpolation is achieved through an adaptive personalization degree, we examine the role of $\lambda$ in navigating between the two extreme cases: GlobalTrain and LocalTrain. As the statistical heterogeneity \( R \to \infty \), we have \( \lambda \to 0 \), leading to a high degree of personalization. The statistical error bound becomes \( \mathcal{O}(n^{-1}) \), matching the rate of LocalTrain; as the statistical heterogeneity  \( R \to 0 \), $\lambda$ increases, leading to a low degree of personalization. The statistical error bound eventually converges to \( \mathcal{O}(N^{-1}) \), matching the rate of GlobalTrain under a homogeneous setting. As $R \in (0, \infty)$, $\lambda$ transits between the two extremes, making PFL Problem (\ref{obj1:fedprox}) perform no worse than LocalTrain while benefiting from global training when clients share similarities.

Our contribution can be summarized into three aspects :

$\bullet$ Our results are derived under more realistic assumptions. \citet{chen2021theorem} imposes additional stringent conditions including a bounded parameter space \( D \) and a uniform bound on the loss function \( \ell(\cdot, z) \). Such conditions are difficult to satisfy in practice. For example, even a simple linear regression model with sub-Gaussian covariates would violate these assumptions. 

$\bullet$ Our result is established for both global and local models with the same choice of $\lambda$. In contrast, \citet{chen2021theorem} proposes a two-stage process where the global model is first estimated, followed by an additional local training phase with a different choice of \( \lambda \). This two-step approach is likely the artifact of their analysis, as existing work on Problem (\ref{obj1:fedprox}) also shows that a single-stage optimization with a chosen \( \lambda \) suffices to achieve the desired performance \citep{t2020personalized}.

$\bullet$ We establish a state-of-the-art minimax statistical rate. As shown in Table \ref{table: stat_comparison}, the established upper bound matches the lower bound established in \cite{chen2021theorem}. In a heterogeneous case when $R \geq n^{-1/2}$, \citet{chen2021theorem} establishes a rate that is at least of the order \( \mathcal{O}\left(n^{-1}R^2\right) \) as \( D \geq R \). This bound becomes increasingly loose with larger \( R \). In contrast, our bound is \( \mathcal{O}\left(n^{-1}\right) \), which shows that properly-tuned Problem \ref{obj1:fedprox} is always no worse than LocalTrain, independent of \( R \). In a homogeneous case when \( R 
 \leq m^{-1}n^{-1/2}\), they can only establish a rate slower than \( \mathcal{O}\left(1/(\sqrt{m}n)\right) \), while our result achieves a rate of \( \mathcal{O}\left(1/(mn)\right) \), leveraging all \( mn \) samples and matching the rate of GlobalTrain on the IID data. As the client number could be extremely large ($10^5$ devices in \cite{chen2023fs}), order of $m$ is non-trivial. During the transition period when $ m^{-1}n^{-1/2} < R < n^{-1/2}$, we achieve a rate of $\mathcal{O}(R^2)$, again matching the lower bound, and is strictly faster than the rate $\mathcal{O}(n^{-1/2}R)$ established in \citet{chen2021theorem}.

 \begin{table}[htbp]
\centering
\caption{Results Comparison. $D:=\sup _{\boldsymbol{w}, \boldsymbol{w}^{\prime} \in \mathcal{D}}\|\boldsymbol{w}-\boldsymbol{w}^{\prime}\|$ is the diameter of the parameter space ($D \geq R$) and $\|\ell\|_{\infty} = \inf\{M \in \mathbb{R} : \ell(\cdot, z) \leq M, \text{for any } z\}$ is a uniform upper bound on the loss function. Statistical rate refers to $\mathbb{E}\|\sbwi - \bswi\|^2$, with constants neglected.}
\label{table: stat_comparison}
\renewcommand{\arraystretch}{1.4} 
\resizebox{14cm}{!}{
\begin{tabular}{cccccc}
\toprule % Top thick line
Source & Assumption &  Paradigm & \multicolumn{3}{c}{Statistical Rate} \\
\cmidrule(r){4-6} 
& & & $R > \frac{1}{\sqrt{n}}$ & $\frac{1}{m\sqrt{n}} < R \leq \frac{1}{\sqrt{n}}$ & $R \leq \frac{1}{m\sqrt{n}}$ \\
\midrule 
\cite{chen2021theorem} & A\ref{assump:smooth},\ref{assump:sc},\ref{assump:bg}, $D \vee \|\ell\|_{\infty} \leq C$ & Two-stage & $\frac{D^2 + \|\ell\|_{\infty}}{n}$ \tnote{1}& $\frac{\|\ell\|_{\infty}}{\sqrt{n}}R$ & $\frac{\|\ell\|_{\infty}}{\sqrt{m}n}$ \\
{Ours} & A\ref{assump:smooth},\ref{assump:sc},\ref{assump:bg} & One-stage & $\frac{1}{n}$ & $R^2 \wedge \frac{1}{mn}$ & $\frac{1}{mn}$ \\
Lower Bound & - & - & $\frac{1}{n}$ & $R^2 \wedge \frac{1}{mn}$ & $\frac{1}{mn}$ \\
\bottomrule 
\end{tabular}
}
\end{table}

Next, we discuss our theoretical contributions. Prior work \cite{chen2021theorem} established the rate through algorithmic stability, and to obtain bounded stability, the boundedness of the loss function is needed. This assumption (or bounded gradients) is, in fact, commonly imposed for analysis based on the tool of algorithmic stability. To relax this assumption and obtain a bound independent of the diameter of the domain/gradient norm/loss function norm, we need to jump out of the stability analysis framework and develop new techniques.

Our analysis does not rely on any algorithm but directly tackles the objective function, establishing rates using purely the properties of the loss. Specifically, using the strong convexity of the loss function, we first proved the estimation error is bounded by the gradient variance and an extra statistical heterogeneity term controlled by $\lambda$, as detailed in Lemma 1. This allows us to show for large $R$, a small $\lambda$ can be chosen to yield rate $O(1/n)$ and match one of the worst cases in the lower bound. For the complementary case $R \leq 1/\sqrt{n}$, we leveraged the GlobalTrain solution $\widetilde{\boldsymbol{w}}_{GT}$ as a bridge and proved the solutions of Problem (\ref{obj1:fedprox}) to $\widetilde{\boldsymbol{w}}_{GT}$ are bounded by a term inversely proportional to $\lambda$ (cf. Lemma 
\ref{lem:global_training}), implying that they can be made arbitrarily close to $\widetilde{\boldsymbol{w}}_{GT}$  by setting $\lambda$ small. This, together with the rate of $\widetilde{\boldsymbol{w}}_{GT}$ as proved in Lemma 5, yields a rate of $O(1/N + R^2)$. Combining the two cases, we proved minimax optimality.

\newpage

\section{Algorithmic Convergence Rate}

In this section, we will start to prove the algorithmic convergence rate rigorously. First, we present our Algorithm \ref{al:one stage PFedProx} in Appendix \ref{al:one stage PFedProx}. In Appendix \ref{proof:lemma}, we provide some useful lemmas to facilitate our analysis, including the proof of Lemma \ref{opt_f_p} presented in the main paper and the inner loop and outer loop error contractions in Algorithm \ref{al:one stage PFedProx}. Next, in Appendix \ref{pf:thm3}, we are ready to prove Theorem \ref{thm:aer} with the local and global model convergence. In Appendix \ref{pf:cor1}, to show the communication cost and computation cost of our method, the proof of Corollary~\ref{cor:complexity} is established. Finally, in Appendix \ref{stoc_aer_proof}, as stochastic gradient descent would reduce the computation cost over the full sample, we present Algorithm \ref{al:one stage FedCLUP} for the noise setting and provably show the benefit of collaboration to reduce the noise.

\subsection{Algorithm}\label{appendix:algorithm}

\begin{algorithm}[H]
\caption{Federated Gradient Descent with K-Step Local Optimization}
\label{al:one stage PFedProx}
\KwIn{Initial global model $\bw_{1}^{(g)}$, initial local models $\{\bw_{0,K}^{(i)}\}_{i \in [m]}$, global rounds $T$, global step sizes $\gamma$, local rounds $K$, local step sizes $\eta$}
\KwOut{Local models $\{\bw_{T,K}^{(i)}\}_{i\in [m]}$ and global model $\bw_{T}^{(g)}$}
\BlankLine

\For{$t = 1, \dots, T$}{
     The server sends $\bw_{t}^{(g)}$ to client $i$, $\forall i \in [m]$ \\
            Set $\bw_{t,0}^{(i)} = \bw_{t-1,K}^{(i)}$\;
            \For {$ k = 0, \dots, K-1$}{
              $\bw_{t,k+1}^{(i)} = \bw_{t,k}^{(i)}- \frac{\eta}{n_i}\sum_{j\in [n_i]} \left\{\nabla \ell(\bw_{t,k}^{(i)},z_{i,j}) + \lambda\left(\bw_{t,k}^{(i)} - \bwtg\right)\right\}$}
            Push $\widehat{\nabla} F_i(\bwtg) = \lambda(\bwtg - \bw_{t,K}^{(i)})$ to the server\;
    
    $\bwtpg \leftarrow \bwtg - \frac{\gamma}{m} \sum_{i \in [m]} \widehat{\nabla} F_i(\bwtg)  $\;
}
\end{algorithm}

\subsection{Proof of lemmas}
\label{proof:lemma}

\subsubsection{Proof of Lemma \ref{opt_f_p}}
To facilitate our analysis, we state a claim first.

\textbf{Claim}: If $f$ is $\mu$-strongly-convex, then the proximal operator
\begin{align}
\label{prox}
    \text{prox}_{f/\lambda}(x) = \arg\min_v \left(f(v) + \frac{\lambda}{2} \| x - v \|^2 \right),
\end{align}
is $L_w$- Lipschitz continuous with $L_w = \frac{\lambda}{\lambda+\mu}$.
\begin{proof}
    Based on the definition of $\text{prox}_{f/\lambda}(x)$, it will satisfy the first order condition 
    \begin{align}
        \nabla f(\text{prox}_{\frac{1}{\lambda} f}(x)) - \lambda (x - \text{prox}_{\frac{1}{\lambda} f}(x)) = 0.
    \end{align}
    Then we have 
    \begin{align}
    \label{x_1}
         \nabla f(x_1) - \lambda (v_1 - x_1) = 0 \quad \text{where } x_1 = \text{prox}_{\frac{1}{\lambda} f}(v),
    \end{align}
    \begin{align}
    \label{x_2}
        \nabla f(x_2) - \lambda (v_2 - x_2) = 0 \quad \text{where } x_2 = \text{prox}_{\frac{1}{\lambda} f}(v_2).
    \end{align}
    Based on the $\mu$-strongly convexity of $f$, we have
    \begin{align}
        \langle \nabla f(x_1) - \nabla f(x_2), x_1 - x_2 \rangle &\geq \mu \| x_1 - x_2 \|^2\\
        \Rightarrow \| \text{prox}_{\frac{1}{\lambda} f}(x_1) - \text{prox}_{\frac{1}{\lambda} f}(x_2) \| &\leq \frac{1}{1 + \frac{\mu}{\lambda}} \| x_1 - x_2 \|.
    \end{align}
\end{proof}

Now, we are ready to prove Lemma \ref{opt_f_p}.

First, from \cite{mishchenko2023convergence}, we can directly obtain the analytic properties of $F_i$. Next, the analytic properties of $h_i$ are direct results from Assumption \ref{assump:sc} and \ref{assump:smooth}. Finally, to analyze the property of the mapping $w_{\star}^{(i)}$, we just need to use the result from the above claim as we know the definition of $\bw_{\star}^{(i)}(\cdot)$ in \dref{p:client-subprob}.

\subsubsection{Inner Loop Convergence}
\label{lamma:inner}
\begin{lemma}[\citep{ji2022will}, Inner Loop Error Contraction]\label{lem:count-w-star}%[\citepp{ji2022will}]\
Suppose Assumption \ref{assump:sc} and \ref{assump:smooth} hold, then if $\eta \leq 1/L_\ell$, for all $t \geq 0$ and $\epsilon > 0$, the inner loop error bound of all clients can be formulated as:
\begin{equation}
    \begin{aligned}
    \label{lem:inner_gd}
        & e_{t+1}^{(i)}:=\| \bw_{t+1,K}^{(i)} -  \bw_{t+1,\star}^{(i)} \|^2 \leq  (1 + \epsilon) (1 - \eta \mu_\ell)^{K} e_{t}^{(i)} + \left( 1 + \frac{1}{\epsilon}\right) (1 - \eta \mu_\ell)^{K}  \|\bwlocalopttp - \bwlocaloptt\|^2.\\
    \end{aligned}
\end{equation}

\end{lemma}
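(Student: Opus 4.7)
The plan is to decompose the one-round error into a contraction piece from $K$-step gradient descent on a fixed strongly convex target, and a drift piece that accounts for the change of subproblem minimizer when the global model is updated. These two pieces are then balanced through Young's inequality with a free parameter $\epsilon$.

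First I would invoke Lemma~\ref{opt_f_p}, which asserts that $h_i(\cdot, \bw_{t+1}^{(g)})$ is $\mu_\ell$-strongly convex and $L_\ell$-smooth. Combined with the stepsize choice $\eta \leq 1/L_\ell$, the textbook one-step contraction $\|y - \eta \nabla_{\bw^{(i)}} h_i(y, \bw_{t+1}^{(g)}) - \bw_{t+1,\star}^{(i)}\|^2 \leq (1-\eta\mu_\ell)\|y - \bw_{t+1,\star}^{(i)}\|^2$ applies; iterating it $K$ times starting from the warm start $\bw_{t+1,0}^{(i)} = \bw_{t,K}^{(i)}$ gives
\begin{equation}
\|\bw_{t+1,K}^{(i)} - \bw_{t+1,\star}^{(i)}\|^2 \;\leq\; (1-\eta\mu_\ell)^{K}\,\|\bw_{t,K}^{(i)} - \bw_{t+1,\star}^{(i)}\|^2.
\end{equation}
A quick self-contained derivation of the per-step inequality is: expand $\|y - \eta \nabla h_i - \bw_{t+1,\star}^{(i)}\|^2$, use $\mu_\ell$-strong convexity to lower-bound $\langle \nabla h_i(y), y-\bw_{t+1,\star}^{(i)}\rangle$, and use $L_\ell$-smoothness plus $\eta \leq 1/L_\ell$ to absorb the $\eta^2\|\nabla h_i\|^2$ term.

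Next, the right-hand side compares the previous round's final iterate with the \emph{new} subproblem minimizer, whereas the induction hypothesis $e_{t}^{(i)}$ involves the \emph{old} minimizer. To reconcile them I would insert and subtract $\bw_{t,\star}^{(i)}$ and apply the Young-type inequality $\|a+b\|^2 \leq (1+\epsilon)\|a\|^2 + (1+1/\epsilon)\|b\|^2$ with $a=\bw_{t,K}^{(i)}-\bw_{t,\star}^{(i)}$ and $b = \bw_{t,\star}^{(i)}-\bw_{t+1,\star}^{(i)}$, obtaining
\begin{equation}
\|\bw_{t,K}^{(i)} - \bw_{t+1,\star}^{(i)}\|^2 \;\leq\; (1+\epsilon)\,e_{t}^{(i)} + \Big(1+\tfrac{1}{\epsilon}\Big)\,\|\bw_{t+1,\star}^{(i)} - \bw_{t,\star}^{(i)}\|^2.
\end{equation}
Substituting this into the preceding display yields the claimed bound, with the consecutive-round drift of the subproblem minimizer playing the role of $\|\bwlocalopttp - \bwlocaloptt\|^2$.

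No step here is deep: the argument is just the strongly convex gradient descent contraction followed by one application of Young's inequality to separate the drift of the subproblem minimizer between communication rounds. The only real care is to keep $\epsilon$ as a free parameter so that the outer-loop analysis of Theorem~\ref{thm:aer} can later choose it jointly with $K$ to enforce $(1+\epsilon)(1-\eta\mu_\ell)^{K}<1$. That coupled tuning, together with bounding the drift term via the $L_w$-Lipschitzness of $\bw^{(i)}_{\star}(\cdot)$ from Lemma~\ref{opt_f_p} and the global-model update, is where the real work of the algorithmic convergence theorem lives; the present lemma is merely the clean inner-loop building block.
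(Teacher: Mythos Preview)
Your proposal is correct and is precisely the standard argument: the paper itself does not supply a proof for this lemma but defers to Lemma~2 of \citet{ji2022will}, whose proof is exactly the two-step scheme you describe---$K$-fold strongly-convex GD contraction on $h_i(\cdot,\bw_{t+1}^{(g)})$ from the warm start $\bw_{t+1,0}^{(i)}=\bw_{t,K}^{(i)}$, followed by Young's inequality with parameter $\epsilon$ to split off the minimizer drift. Your identification of the drift term as $\|\bw_{t+1,\star}^{(i)}-\bw_{t,\star}^{(i)}\|^2$ also matches how the lemma is actually invoked downstream in (\ref{eq:ineq-syst-2}).
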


\begin{proof}
The proof can be found in Lemma 2 \citep{ji2022will}.
\end{proof}

\subsubsection{Outer Loop convergence}

\begin{lemma}[Outer Loop Error Contraction]\label{lem:outer-loop}
Suppose Assumption \ref{assump:sc} and \ref{assump:smooth} hold, then if $\gamma \leq L_g$, we have for all $t >0$ and $\epsilon_g >0$:
  \begin{align}
      \begin{split}
          \|\bw_{t+1}^{(g)} - \bswg\|^2 & \leq (1 - \gamma \mu_g + \gamma\epsilon_g)\|\bw_{t}^{(g)} - \bswg \|^2 - \gamma^2 \|\nabla F (\bw_{t}^{(g)}) \|^2 \\
          & + (\gamma^3 L_g)\| \widehat{\nabla} F (\bw_{t}^{(g)}) \|^2  +  \left(\frac{\gamma}{\epsilon_g} + \gamma^2 \right)  \| \nabla F (\bw_{t}^{(g)}) - \widehat{\nabla} F (\bw_{t}^{(g)})\|^2.
      \end{split}
  \end{align}  
\end{lemma}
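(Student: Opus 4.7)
The plan is to expand the squared outer-loop error directly from the gradient update, split the inexact gradient into the true gradient plus its error, and then combine strong convexity and smoothness of $F$ (from Lemma~\ref{opt_f_p}) with Young's inequality to redistribute the resulting cross and quadratic terms into the five pieces on the right-hand side of the claim.

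First, I would apply the update $\bw_{t+1}^{(g)} = \bw_t^{(g)} - \gamma \widehat{\nabla} F(\bw_t^{(g)})$ from Algorithm~\ref{al:one stage PFedProx} to obtain
\begin{align*}
\|\bw_{t+1}^{(g)} - \bswg\|^2 = \|\bw_t^{(g)} - \bswg\|^2 - 2\gamma \langle \widehat{\nabla} F(\bw_t^{(g)}), \bw_t^{(g)} - \bswg \rangle + \gamma^2 \|\widehat{\nabla} F(\bw_t^{(g)})\|^2,
\end{align*}
and decompose $\widehat{\nabla} F = \nabla F + (\widehat{\nabla} F - \nabla F)$ inside the inner product, treating the true-gradient and error pieces separately.

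Second, I would handle the true-gradient cross term using the fact that $F$ is $\mu_g$-strongly convex and $L_g$-smooth. Strong convexity gives $\langle \nabla F, \bw_t^{(g)} - \bswg\rangle \geq \mu_g \|\bw_t^{(g)} - \bswg\|^2$, while smoothness combined with convexity gives $\langle \nabla F, \bw_t^{(g)} - \bswg\rangle \geq L_g^{-1} \|\nabla F\|^2$; summing these with appropriate weights produces
\begin{align*}
-2\gamma \langle \nabla F, \bw_t^{(g)} - \bswg\rangle \leq -\gamma \mu_g \|\bw_t^{(g)} - \bswg\|^2 - \gamma L_g^{-1} \|\nabla F\|^2.
\end{align*}
For the error cross term, Young's inequality with parameter $\epsilon_g > 0$ contributes $\gamma \epsilon_g \|\bw_t^{(g)} - \bswg\|^2 + \gamma \epsilon_g^{-1} \|\widehat{\nabla} F - \nabla F\|^2$, yielding the $\gamma \epsilon_g$ summand inside $(1 - \gamma \mu_g + \gamma \epsilon_g)$ and the $\gamma/\epsilon_g$ part of the last coefficient.

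Third, I would dispatch $\gamma^2 \|\widehat{\nabla} F\|^2$ via the algebraic split $\gamma^2 = \gamma^3 L_g + \gamma^2(1 - \gamma L_g)$, which is a valid decomposition under the step-size condition $\gamma \leq 1/L_g$. The first piece is retained verbatim as $\gamma^3 L_g \|\widehat{\nabla} F\|^2$; the second piece is expanded via $\|\widehat{\nabla} F\|^2 \leq (1+a)\|\nabla F\|^2 + (1+a^{-1}) \|\widehat{\nabla} F - \nabla F\|^2$ with a carefully chosen $a > 0$, so that the resulting $\|\nabla F\|^2$ contribution combines with $-\gamma L_g^{-1} \|\nabla F\|^2$ from Step~2 to collapse to exactly $-\gamma^2 \|\nabla F\|^2$, and the residual $\|\widehat{\nabla} F - \nabla F\|^2$ contribution absorbs into the declared $\gamma/\epsilon_g + \gamma^2$ coefficient. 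The main obstacle is this last bookkeeping step: the parameter $a$ must be tuned so that the three contributions to $\|\nabla F\|^2$ (the $-\gamma L_g^{-1}$ from convexity/smoothness, the $(1+a)\gamma^2(1-\gamma L_g)$ from Young's expansion, and the target $-\gamma^2$) balance consistently, and the induced $\|\widehat{\nabla} F - \nabla F\|^2$ coefficient simultaneously fits the stated $\gamma/\epsilon_g + \gamma^2$, using nothing stronger than $\gamma \leq 1/L_g$; everything else is a mechanical combination of standard convexity/smoothness inequalities and Young's inequality.
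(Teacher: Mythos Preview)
Your proposal is correct and the bookkeeping in Step~3 does close: choosing $a = (1-\gamma L_g)/(\gamma L_g)$ makes the $\|\nabla F\|^2$ coefficient collapse to exactly $-\gamma^2$ and simultaneously gives $(1-\gamma L_g)(1+a^{-1}) = 1$, so the error coefficient becomes precisely $\gamma/\epsilon_g + \gamma^2$ as claimed. However, your route differs from the paper's. You work entirely at the level of gradients and iterates, invoking co-coercivity $\langle \nabla F(\bw_t^{(g)}), \bw_t^{(g)}-\bswg\rangle \geq L_g^{-1}\|\nabla F(\bw_t^{(g)})\|^2$ directly alongside strong convexity, then splitting $\gamma^2\|\widehat{\nabla} F\|^2$ algebraically and applying Young's inequality. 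The paper instead passes through function values: it bounds $-\gamma\langle \bw_t^{(g)}-\bswg,\nabla F\rangle$ via strong convexity as $-\gamma(F(\bw_t^{(g)})-\widehat F) - \tfrac{\gamma\mu_g}{2}\|\bw_t^{(g)}-\bswg\|^2$, then adds and subtracts $F(\bw_{t+1}^{(g)})$, controls the descent $F(\bw_{t+1}^{(g)})-F(\bw_t^{(g)})$ by $L_g$-smoothness together with the polarization identity $-\langle a,b\rangle = \tfrac12(\|a-b\|^2-\|a\|^2-\|b\|^2)$, and finally drops the nonnegative term $2\gamma(F(\bw_{t+1}^{(g)})-\widehat F)$. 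Your argument is more elementary in that it never introduces function values and the coefficients fall out from a single Young parameter; the paper's descent-lemma route is more structured and makes the origin of the $\gamma^3 L_g$ term (the smoothness residual) more transparent.
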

\begin{proof}
Note 
\begin{align}\label{hatnfi}
    \widehat{\nabla} F_i(\bwtg) = \lambda(\bwtg - \bw_{t,K}^{(i)}).
\end{align}
We define 
\begin{align}
\label{def: full_gradient}
    \widehat{\nabla} F(\bwtg)  := \frac{1}{m}\sum_{i\in [m]}\widehat{\nabla} F_i(\bwtg) = \frac{1}{m}\sum_{i\in [m]}\lambda(\bwtg - \bw_{t,K}^{(i)}).
\end{align}
Thus the global update rule can be written as  $\bw_{t+1}^{(g)} =   \bw_{t}^{(g)}  - \gamma\widehat{\nabla} F(\bw_{t}^{(g)})$. For the outer loop,
\begin{align}\label{eq:square-err}
    \begin{split}
       & \|\bw_{t+1}^{(g)} - \bswg\|^2\\
       = &  \| \bw_{t}^{(g)} - \bswg - \gamma \widehat{\nabla} F(\bw_{t}^{(g)})  \|^2\\
       = & \|\bw_{t}^{(g)} - \bswg \|^2 - 2 \gamma \inp{\bw_{t}^{(g)} - \bswg}{\widehat{\nabla} F(\bw_{t}^{(g)})} + \gamma^2\|\widehat{\nabla} F(\bw_{t}^{(g)}) \|^2\\
       \leq &  (1 + \epsilon_g \gamma)\|\bw_{t}^{(g)} - \bswg \|^2  - 2 \gamma
       \inp{\bw_{t}^{(g)} - \bswg}{{\nabla} F(\bw_{t}^{(g)}) } + \gamma^2 \| \widehat{\nabla} F (\bw_t^{(g)}) \|^2  \\
       & + \left(\frac{\gamma}{\epsilon_g} \right) \|\widehat{\nabla}F (\bw_t^{(g)}) - \nabla F (\bw_t^{(g)})\|^2,
    \end{split}
\end{align}
where we apply the Young's inequality with $\epsilon_g > 0$ in the last line.

Next, we bound the inner product term in the last line using the $\mu_g$-strong convexity of $F$:
\begin{align}
\begin{split}\label{eq:inp-bound}
        - \gamma\inp{\bw_{t}^{(g)} - \bswg}{{\nabla} F(\bw_{t}^{(g)}) } 
        & \leq - \gamma\left( F(\bw_{t}^{(g)}) - \widehat{F} + \frac{\mu_g}{2} \|\bw_{t}^{(g)} - \bswg\|^2 \right)\\
        & = - \gamma(F(\bw_{t+1}^{(g)}) - \widehat{F})  - \frac{\gamma\mu_g}{2} \|\bw_{t}^{(g)} - \bswg\|^2 + \gamma \big(F(\bw_{t+1}^{(g)}) - F(\bw_{t}^{(g)}) \big).
\end{split}
\end{align}
where $\hat{F}:=F(\bswg)$ and the last term $F(\bw_{t+1}^{(g)}) - F(\bw_{t}^{(g)})$ can be upper bounded using the $L_g$-smoothness of $F$:
\begin{align}
    \begin{split}\label{eq:descent-lemm}
        F(\bw_{t+1}^{(g)}) 
         \leq & F (\bw_{t}^{(g)}) - \gamma \inp{\nabla F (\bw_{t}^{(g)})}{ \widehat{\nabla} F (\bw_{t}^{(g)})} + \frac{L_g \gamma^2}{2} \|\widehat{\nabla} F (\bw_{t}^{(g)})\|^2\\
         \overset{(a)}{=} & F (\bw_{t}^{(g)}) + \frac{\gamma}{2} \left( \| \nabla F (\bw_{t}^{(g)}) - \widehat{\nabla} F (\bw_{t}^{(g)})\|^2 - \| \nabla F (\bw_{t}^{(g)})\|^2 - \| \widehat{\nabla} F (\bw_{t}^{(g)})\|^2\right) \\
         & \quad + \frac{L_g \gamma^2}{2} \|\widehat{\nabla} F (\bw_{t}^{(g)})\|^2\\
         = & F (\bw_{t}^{(g)}) - \frac{\gamma}{2} \| \nabla F (\bw_{t}^{(g)})\|^2  - \left( \frac{\gamma}{2} - \frac{L_g \gamma^2}{2}\right) \|\widehat{\nabla} F (\bw_{t}^{(g)})\|^2 + \frac{\gamma}{2} \| \nabla F (\bw_{t}^{(g)}) - \widehat{\nabla} F (\bw_{t}^{(g)})\|^2,
    \end{split}
\end{align}
where the step (a) uses the fact $- \inp{a}{b} = \frac{1}{2} \|a - b\|^2 - \frac{1}{2} \|a\|^2 - \frac{1}{2} \|b\|^2 $.

Substituting (\ref{eq:inp-bound}) and~(\ref{eq:descent-lemm}) into (\ref{eq:square-err}) leads to
\begin{align}
    \begin{split}
        &\|\bw_{t+1}^{(g)} - \bswg\|^2 \\
        \leq &  (1 + \epsilon_g \gamma)\|\bw_{t}^{(g)} - \bswg \|^2   + \gamma^2 \| \widehat{\nabla} F (\bw_t^{(g)}) \|^2   + \left(\frac{\gamma}{\epsilon_g} \right) \|\widehat{\nabla}F (\bw_t^{(g)}) - \nabla F (\bw_t^{(g)})\|^2\\
        &  - 2 \gamma(F(\bw_{t+1}^{(g)}) - \widehat{F})  - \gamma \mu_g \|\bw_{t}^{(g)} - \bswg\|^2 \\
        & - \gamma^2 \| \nabla F (\bw_{t}^{(g)})\|^2  - \gamma\left( \gamma - L_g \gamma^2\right) \|\widehat{\nabla} F (\bw_{t}^{(g)})\|^2 + \gamma^2 \| \nabla F (\bw_{t}^{(g)}) - \widehat{\nabla} F (\bw_{t}^{(g)})\|^2\\
        \leq & (1 - \gamma \mu_g + \gamma\epsilon_g)\|\bw_{t}^{(g)} - \bswg \|^2 - \gamma^2 \|\nabla F (\bw_{t}^{(g)}) \|^2 + (\gamma^3 L_g)\| \widehat{\nabla} F (\bw_{t}^{(g)}) \|^2 \\
        & +  \left(\frac{\gamma}{\epsilon_g} + \gamma^2 \right)  \| \nabla F (\bw_{t}^{(g)}) - \widehat{\nabla} F (\bw_{t}^{(g)})\|^2.
    \end{split}
\end{align}  

\end{proof}

\subsection{Proof of Theorem \ref{thm:aer}}\label{pf:thm3}
\subsubsection{Proof of The Global Model Convergence Rate in Theorem \ref{thm:aer}}

Recall the definition of $e_{t}^{(i)}$ in Lemma \ref{lem:count-w-star}, we define 
\begin{align}
    e_t = \frac{1}{m}\sum_{i\in [m]}e_{t}^{(i)}.
\end{align}
Recall the definition $\nabla F(\bw_{t}^{(g)}),\widehat{\nabla} F (\bw_{t}^{(g)})$ in Lemma \ref{lem:gradient-prox} and eq \dref{def: full_gradient}, the gradient approximation error is
\begin{align}
    \begin{split}
        \| \nabla F (\bw_{t}^{(g)}) - \widehat{\nabla} F (\bw_{t}^{(g)}) \|^2 = \lambda^2 \Big\| \frac{1}{m} \sum_{i = 1}^m \big(\bw_{t,K}^{(i)} -  \bw_{t,\star}^{(i)} \big)\Big\|^2 \leq \lambda^2 e_t. \\
    \end{split}
\end{align}
Therefore, the recursion of optimization error given by Lemma~\ref{lem:outer-loop} 
can be further bounded as:
\begin{align}\label{eq:ineq-syst-1}
    \begin{split}
     & \|\bw_{t+1}^{(g)}  - \bswg\|^2 \\
      \overset{(a)}{\leq} & (1 - \gamma \mu_g + \gamma\epsilon_g)\|\bw_{t}^{(g)} - \bswg \|^2 - \left(\gamma^2 - \gamma^3 L_g(1+ \zeta)  \right) \|\nabla F (\bw_{t}^{(g)}) \|^2  \\
        & +  \left(\frac{\gamma}{\epsilon_g} + \gamma^2 +  L_g \gamma^3 (1 + \zeta^{-1}) \right)  \| \nabla F (\bw_{t}^{(g)}) - \widehat{\nabla} F (\bw_{t}^{(g)})\|^2\\
        \overset{(b)}{\leq} & (1 - \gamma \mu_g + \gamma\epsilon_g)\|\bw_{t}^{(g)} - \bswg \|^2 - \gamma^2\Big(1 - \gamma L_g (1+ \zeta) \Big) \|\nabla F (\bw_{t}^{(g)}) \|^2  \\
        & +  \left(\frac{\gamma}{\epsilon_g}  + \gamma^2 (2 + \zeta^{-1}) \right)  \lambda^2 e_t,
    \end{split}
\end{align}
where the step (a) comes from the Young's inequality with $\zeta>0$ to be chosen and the step (b) follows from the step size condition $\gamma \leq 1/L_g$.

Invoking Lemma~\ref{opt_f_p} and Lemma~\ref{lem:count-w-star}, the inner loop optimization error is updated as
\begin{align}\label{eq:ineq-syst-2}
    \begin{split}
       e_{t+1} & \leq (1 + \epsilon) (1 - \eta \mu_\ell)^{K}  e_{t} + \left( 1 + \frac{1}{\epsilon}\right) (1 - \eta \mu_\ell)^{K} \frac{1}{m} \sum_{i = 1}^m \|\bw_{t+1,\star}^{(i)} - \bw_{t,\star}^{(i)}\|^2\\
        & \stackrel{(a)}{\leq} (1 + \epsilon) (1 - \eta \mu_\ell)^{K}  e_{t} + \left( 1 + \frac{1}{\epsilon}\right) (1 - \eta \mu_\ell)^{K} L_w^2 \|\bw_{t}^{(g)} - \bw_{t+1}^{g}\|^2\\
          & \stackrel{(b)}{\leq} (1 + \epsilon) (1 - \eta \mu_\ell)^{K}  e_{t} + \left( 1 + \frac{1}{\epsilon}\right) (1 - \eta \mu_\ell)^{K} L_w^2\gamma^2\| \widehat{\nabla} F(\bw_{t}^{(g)}) \|^2\\
       % & \leq (1 + \epsilon) (1 - \eta \mu_\ell)^{K}  e_{t} + \left( 1 + \frac{1}{\epsilon}\right) (1 - \eta \mu_\ell)^{K} L_w^2 \|\bw_{t+1}^{(g)} - \bw_{t}^{(g)}\|^2\\
        & \stackrel{(c)}{\leq} (1 + \epsilon) (1 - \eta \mu_\ell)^{K}  e_{t} + \left( 1 + \frac{1}{\epsilon}\right) (1 - \eta \mu_\ell)^{K} L_w^2 \gamma^2  \big(2 \|{\nabla} F (\bw_t^{(g)})\|^2 + 2 \lambda^2 e_t \big)\\
        & = \underbrace{\Big(1 + \epsilon + 2(1 + \epsilon^{-1}) L_w^2 \gamma^2 \lambda^2 \Big)(1 - \eta \mu_\ell)^{K}}_{:= q}  e_{t}  + 2\left( 1 + \frac{1}{\epsilon}\right) (1 - \eta \mu_\ell)^{K} L_w^2 \gamma^2   \|{\nabla} F (\bw_t^{(g)})\|^2,
    \end{split}
\end{align}
where step (a) is due to Lemma~\ref{opt_f_p}, step (b) uses the global update:
$\bw_{t+1}^{(g)} -   \bw_{t}^{(g)}  = - \gamma\widehat{\nabla} F(\bw_{t}^{(g)})$ and in step (c), we just apply Cauchy inequality instead of Young's inequality. 

Under $\eta \leq 1/L_\ell$, we have $1 - \eta \mu_\ell < 1$. Therefore, by choosing a large $K$ we can always drive $q$ arbitrarily small. Let $K$ be such that 
\begin{align}\label{eq:q-upperbound}
    q \leq 1 - \gamma \mu_g  \leq 1 - \gamma \mu_g + \gamma \epsilon_g.
\end{align}
Then, we can rewrite (\ref{eq:ineq-syst-2}) as:

\begin{align}
\label{eq: et_final}
    \begin{split}
        e_{t+1}\leq & \frac{1 - \gamma \mu_g + \gamma \epsilon_g +q}{2} e_t - \frac{1 - \gamma \mu_g + \gamma \epsilon_g - q}{2} e_t \\
        & + 2\left( 1 + \frac{1}{\epsilon}\right) (1 - \eta \mu_\ell)^{K} L_w^2 \gamma^2   \|{\nabla} F (\bw_t^{(g)})\|^2.
    \end{split}
\end{align}

Letting $c_1 = \frac{2}{1 - \gamma\mu_g + \gamma \epsilon_g - q} \cdot \left(\frac{\gamma}{\epsilon_g}  + \gamma^2 (2 + \zeta^{-1}) \right)  \lambda^2$, we can combine \dref{eq: et_final} with~(\ref{eq:ineq-syst-1}) and obtain
\begin{align}\label{eq:rate-combined}
    \begin{split}
       & \|\bw_{t+1}^{(g)}  - \bswg\|^2 + c_1 \cdot e_{t+1}  \\
       \leq & (1 - \gamma \mu_g + \gamma\epsilon_g) \left(  \|\bw_{t}^{(g)}  - \bswg\|^2 + c_1 \cdot e_{t} \right) - \gamma^2\Big(1 - \gamma L_g (1+ \zeta) \Big) \|\nabla F (\bw_{t}^{(g)}) \|^2   \\
        & +  \left(  \frac{2}{1 - \gamma\mu_g + \gamma \epsilon_g - q} \cdot \left(\frac{\gamma}{\epsilon_g}  + \gamma^2 (2 + \zeta^{-1}) \right)  \lambda^2 \right) \cdot 2\left( 1 + \frac{1}{\epsilon}\right) (1 - \eta \mu_\ell)^{K} L_w^2 \gamma^2   \|{\nabla} F (\bw_t^{(g)})\|^2.
    \end{split}
\end{align}

Therefore, if the condition \dref{eq:q-upperbound} and  
\begin{align}\label{eq:gamma-condition}
    \begin{split}
        \frac{4\lambda^2\gamma}{1 - \gamma\mu_g + \gamma \epsilon_g - q}\left(\frac{1}{\epsilon_g}  + \gamma (2 + \zeta^{-1}) \right)  \left( 1 + \frac{1}{\epsilon}\right) (1 - \eta \mu_\ell)^{K} L_w^2   \leq 1 - \gamma L_g (1+ \zeta)
    \end{split}
\end{align}
hold then (\ref{eq:rate-combined}) implies both $\|\bw_{t}^{(g)}  - \bswg\|^2$ and $e_t$ converge to zero at rate $1 - \gamma \mu_g + \gamma \epsilon_g$. % \textcolor{red}{$\epsilon_g \to 0$ as $\lambda \to 0$}

It remains to specify the free parameters $(\epsilon_g,  \epsilon, \zeta$) in  (\ref{eq:q-upperbound}), (\ref{eq:gamma-condition}) and others listed in Lemma~\ref{lem:outer-loop} and Lemma~\ref{lem:count-w-star}.

$\bullet$ \emph{Convergence Rate.}
For the connection term in \dref{eq:rate-combined}, if we set 
\begin{align}\label{eq:ss-1}
    \epsilon_g = \frac{\mu_g}{2} \cdot (1 - \gamma \mu_g)  >0,
\end{align}
which, substituting into $1 - \gamma \mu_g + \gamma \epsilon_g$, gives the convergence rate
\begin{align}
    r : = 1 - \frac{\gamma \mu_g}{2} - \frac{(\gamma \mu_g)^2}{2}.
\end{align}
Under condition $\gamma \leq 1/L_g$, one can verify that $r \in (0,1)$.

$\bullet$ \emph{Step size conditions.} 
Under the requirement (\ref{eq:q-upperbound}), we have 
\begin{align}
    1 - \gamma \mu_g + \gamma \epsilon_g - q \geq \gamma \epsilon_g,
\end{align}
which gives the following sufficient condition for (\ref{eq:gamma-condition}): 
\begin{align}\label{eq:gamma-condition-rlx1}
    \begin{split}
        \frac{2}{\epsilon_g} \cdot \left(\frac{1}{\epsilon_g} + \gamma (2 + \zeta^{-1}) \right)  \lambda^2 \cdot 2\left( 1 + \frac{1}{\epsilon}\right) (1 - \eta \mu_\ell)^{K} L_w^2   \leq 1 - \gamma L_g (1+ \zeta). 
    \end{split}
\end{align}
Further restricting $\gamma$ such that
\begin{align}
\label{res:gamma}
   \gamma (2 + \zeta^{-1}) \frac{\mu_g}{2}  \leq 1 \quad \implies \gamma (2 + \zeta^{-1}) \leq \frac{1}{\epsilon_g},
\end{align}
it suffices to require the following for (\ref{eq:gamma-condition-rlx1}) to hold:
\begin{align}\label{eq:ss-2}
    \begin{split}
         \left(\frac{2}{\epsilon_g}\right)^2  \lambda^2 \cdot 2\left( 1 + \frac{1}{\epsilon}\right) (1 - \eta \mu_\ell)^{K} L_w^2   \leq 1 - \gamma L_g (1+ \zeta).
    \end{split}
\end{align}

Letting $\epsilon = 1$ and $\zeta = 1 - \gamma L_g > 0$, we collect all the conditions (\ref{eq:ss-1}), (\ref{eq:ss-2}) and (\ref{eq:q-upperbound}) on $\gamma$ respectively as follows:
\begin{align}
    & 1 - \gamma L_g > 0, \quad 1 - \gamma \mu_g > 0,\\
    & \left(1 - \gamma \mu_g\right)^2 (1 - \gamma L_g)^2 \geq 4  \lambda^2 \cdot 4 (1 - \eta \mu_\ell)^{K} L_w^2 \cdot \left( \frac{2}{\mu_g}\right)^2,\\
    &   1 - \gamma \mu_g   \geq 
    \Big(2 + 4 L_w^2 \gamma^2 \lambda^2 \Big)(1 - \eta \mu_\ell)^{K}.
\end{align}

Using the fact that $\mu_g \leq L_g$, the above conditions simply to

\begin{align}
    \label{cond:simpligy}
    \gamma < 1/L_g,  \quad \Big(2 + 64 L_w^2 (1/\mu_g)^2 \lambda^2 \Big)(1 - \eta \mu_\ell)^{K} \leq (1 - \gamma L_g)^4.
\end{align}

\subsubsection{Proof of The Local Model Convergence Rate in Theorem \ref{thm:aer}}
\label{ier_gd}
\begin{proof}
    \label{proof:opt_ier}
    Note that
    \begin{align}
        \label{eq:optier1}
         \|\bw_{t,K}^{(i)} - \bswi\|^2 
            \leq 2\|\bw_{t,K}^{(i)} -\bwlocalopttp\|^2 + 2 \|\bwlocalopttp - \bswi\|^2.
    \end{align}
For the first part on the right-hand side, recalling the definition of $e_t^{(i)}$ in Lemma \ref{lem:count-w-star}
\begin{align}
     2\|\bw_{t,K}^{(i)} -\bwlocalopttp\|^2  = 2\|e_{t}^{(i)}\|^2.
\end{align}

For the second term on the right-hand side, first, note the property in \dref{p:client-subprob} 
\begin{align}
\label{local optimal condition}
    \bwlocalopttp = \prox_{L_i/\lambda} (\bw_t^{(g)}).
\end{align}
Based on the first order condition of \dref{obj1:fedprox} 
\begin{align}
    \label{model optimal condition}
    \bswi = \prox_{L_i/\lambda} (\bswg).
\end{align}

Next, plugging \dref{local optimal condition} and \dref{model optimal condition} into \dref{eq:optier1}, it yields
    \begin{align}
    \begin{split}
    \label{eq:split}
        \|\bw_{t,K}^{(i)} - \bswi\|^2 
            & \leq 2\|e_{t}^{(i)}\|^2 + 2 \|\prox_{L_i/\lambda} (\bw_t^{(g)})- \prox_{L_i/\lambda} (\widetilde{\bw}^{(g)})\|^2\\
            & \leq 2\|e_{t}^{(i)}\|^2 + 2L_w^2\|\bwlocalopttp - \bswi\|^2 ,
    \end{split}
    \end{align}
where the last we use Lemma \ref{opt_f_p}.

    In the proof of Theorem~\ref{thm:aer}, we have established that both $\|\bwtg - \bswg\|^2$ and $e_t$ converges to zero linearly at rate $1-\gamma\mu_g/2 -(\gamma\mu_g)^2/2$, combining with \dref{eq:split}, the proof is finished.
\end{proof}

\subsection{Proof of Corollary~\ref{cor:complexity}}\label{pf:cor1}

With $\eta = 1/L_\ell = (\lambda + L)^{-1}$, $\gamma = 1/(2 L_g) = \frac{\lambda + L}{2\lambda L}$, and using the fact that $L_w = \frac{\lambda}{\lambda + \mu}$ and $\mu_g = \frac{\lambda \mu}{\lambda + \mu}$ (cf. Lemma~\ref{opt_f_p}),
 condition (\ref{cond:simpligy}) for $K$ becomes 
\begin{align}
\begin{split}
    & \Big(2 + 64 L_w^2 (1/\mu_g)^2 \lambda^2 \Big)(1 - \eta \mu_\ell)^{K}\\
    = & \left( 2 + 64 \frac{\lambda^2}{(\lambda + \mu)^2} \frac{(\lambda + \mu)^2}{(\lambda \mu)^2} \lambda^2
\right)\left(1 - \frac{\lambda + \mu}{\lambda + L} \right)^{K} \\
= & \left( 2 + 64  \frac{\lambda^2}{\mu^2} 
\right)\left(\frac{L-\mu}{\lambda + L} \right)^{K} \leq \frac{1}{16}. 
\end{split}
\end{align}

If  $L = \mu$, then the above condition trivially holds. Otherwise, a sufficient condition for it is 
\begin{align}
    66 \kappa^2 \left(\frac{L-\mu}{\lambda + L} \right)^{K-2}  \leq \frac{1}{16},
\end{align}
where we have used the fact that 
\begin{align}
    \frac{\lambda^2}{\mu^2} \left(\frac{L-\mu}{\lambda + L} \right)^{K}  = \left(\frac{\lambda}{\lambda + L} \right)^2 \cdot \left( \frac{L - \mu}{\mu}\right)^2 \left(\frac{L-\mu}{\lambda + L} \right)^{K-2} \leq \kappa^2 \left(\frac{L-\mu}{\lambda + L} \right)^{K-2}
\end{align}
and $\kappa : = L/\mu \geq 1$. Finally, using the inequality $\log (1/x) \geq 1 - x$ for $0 < x \leq 1$ we obtain 
\begin{align}
    K \geq 2 + \frac{\lambda + L}{\lambda + \mu}\cdot \log (1056 \kappa^2).
\end{align}

\subsection{Global Convergence Rate with Stochastic Noise}\label{stoc_aer_proof}

Similar to the noiseless case, we propose \texttt{FedCLUP} below which uses stochastic gradient descent to solve Problem (\ref{obj1:fedprox}).

\begin{algorithm}[H]
\caption{\texttt{FedCLUP}: Federated Learning with Constant Local Update Personalization}
\label{al:one stage FedCLUP}
\KwIn{Initial global model $\bw_{1}^{(g)}$, initial local models $\{\bw_{0,K}^{(i)}\}_{i \in [m]}$, global rounds $T$, global step sizes $\gamma$, local rounds $K$, local step sizes $\eta$, stochastic batch size $B$}
\KwOut{Local models $\{\bw_{T,K}^{(i)}\}_{i\in [m]}$ and global model $\bw_{T}^{(g)}$}
\BlankLine

\For{$t = 1, \dots, T$}{
     The server sends $\bw_{t}^{(g)}$ to client $i$, $\forall i \in [m]$; \\
     Each client randomly draw $B$ data without replacement;\\
     Set $\bw_{t,0}^{(i)} = \bw_{t-1,K}^{(i)}$\;
            \For {$ k = 0, \dots, K-1$}{
              $\bw_{t,k+1}^{(i)} = \bw_{t,k}^{(i)}- \frac{\eta}{B}\sum_{j\in [B]} \left\{\nabla \ell(\bw_{t,k}^{(i)},z_{i,j}) + \lambda\left(\bw_{t,k}^{(i)} - \bwtg\right)\right\}$}
            Push $\widehat{\nabla} F_i(\bwtg) = \lambda(\bwtg - \bw_{t,K}^{(i)})$ to the server\;
    
    $\bwtpg \leftarrow \bwtg - \frac{\gamma}{m} \sum_{i \in [m]} \widehat{\nabla} F_i(\bwtg)  $\;
}
\end{algorithm}

We use the following assumption to characterize the stochastic noise when sampling new data in each iteration over all clients without replacement.

\begin{assumption}[Stochastic noise]\label{assumption:stochastic}
For any $i \in [m], j \in [n_i], k\in [K], t>0$, we assume
\begin{align}
\mathbb{E}\left[\left\|\nabla_{\boldsymbol{w}_i} \hat{h}_i (\boldsymbol{w}_{t, k}^{(i)}, \boldsymbol{w}_{t}^{(i)}, s_{ij}) - \nabla_{\boldsymbol{w}_i}h_i (\boldsymbol{w}_{t, \star}^{(i)}, \boldsymbol{w}_{t}^{(i)}) \right\|^2 \right] \leq \sigma^2
\end{align}    
\end{assumption}

Let we define $\mathcal{F}_{t, k}$ to be the sigma algebra generated by the randomness by Algorithm \ref{al:one stage FedCLUP} up to $\bw_{t,k}^{(i)}$.

\begin{lemma}[Inner Loop Error Contraction with stochastic]
\label{lem:count-w-star-stochastic}
Suppose Assumption \ref{assump:smooth} and \ref{assump:sc} hold, then if $\eta \leq 1/L_\ell$, for all $t \geq 0$ and $\epsilon > 0$, the inner loop error bound of all clients can be formulated as:
\begin{align}
    \begin{split}
    g_{t+1} = & \; \mathbb{E} \left( \left\| \frac{1}{m} \sum_{i \in [m]} \boldsymbol{w}_{t+1, k}^{(i)} - \boldsymbol{w}_{t+1, *}^{(i)} \right\|^2 \Big| \mathcal{F}_{t+1, k} \right) \\
    & \leq \left[ 1 + \epsilon + 2 \left(1 + \frac{1}{\epsilon}\right) L_w^2 \gamma^2 \lambda^2 \right] \left(1 - \eta_t \mu_{\ell} \right)^k g_t  \\
    & \quad + 2 \left(1 + \frac{1}{\epsilon}\right) \left(1 - \mu_{\ell} \eta_t \right)^k L_w^2 \gamma^2 \|\nabla F(\boldsymbol{w}_t^{(g)})\|^2  \\
    & \quad + \frac{2 \eta_t}{\mu_{\ell}} \frac{\sigma^2}{Bm}.
    \end{split}
\end{align}
\end{lemma}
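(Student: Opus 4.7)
The proof largely mirrors the deterministic inner-loop contraction in Lemma~\ref{lem:count-w-star}, with the new ingredient being that each local gradient step now carries a mean-zero batch-noise term whose variance is controlled by Assumption~\ref{assumption:stochastic}. My plan is to (i) derive a one-step conditional variance recursion for a single client, (ii) unroll it across the $K$ inner iterations, (iii) average over clients to exploit independence and gain an extra $1/m$ factor on the noise, and (iv) close the loop in $t$ via the warm-start $\bw_{t+1,0}^{(i)}=\bw_{t,K}^{(i)}$ together with the $L_w$-Lipschitz property of $\bw_\star^{(i)}(\cdot)$ from Lemma~\ref{opt_f_p}, recycling the deterministic manipulations as much as possible.

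Fix a client $i$ and an outer round $t+1$, and write $\xi_{t+1,k}^{(i)} := \nabla_{\bw}\hat h_i(\bw_{t+1,k}^{(i)},\bw_{t+1}^{(g)};s_{ij}) - \nabla_{\bw} h_i(\bw_{t+1,k}^{(i)},\bw_{t+1}^{(g)})$ for the stochastic innovation at inner iteration $k$. Decomposing the local update as a deterministic gradient step on $h_i(\cdot,\bw_{t+1}^{(g)})$ plus the zero-mean kick $-\eta\xi_{t+1,k}^{(i)}$, taking conditional expectation with respect to $\mathcal{F}_{t+1,k}$, expanding the square, and invoking the $\mu_\ell$-strong convexity and $L_\ell$-smoothness of $h_i(\cdot,\bw_{t+1}^{(g)})$ guaranteed by Lemma~\ref{opt_f_p}, the step-size condition $\eta\leq 1/L_\ell$ yields the standard one-step bound
\begin{align*}
\mathbb{E}\bigl[\|\bw_{t+1,k+1}^{(i)}-\bw_{t+1,\star}^{(i)}\|^2 \mid \mathcal{F}_{t+1,k}\bigr] \leq (1-\eta\mu_\ell)\|\bw_{t+1,k}^{(i)}-\bw_{t+1,\star}^{(i)}\|^2 + \eta^2\,\frac{\sigma^2}{B},
\end{align*}
where the factor $1/B$ comes from averaging over $B$ independent sample gradients within the mini-batch under Assumption~\ref{assumption:stochastic}. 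Iterating this recursion for $k=0,\dots,K-1$ and bounding the resulting geometric sum by $\eta\sigma^2/(\mu_\ell B)$ gives the per-client estimate $\mathbb{E}\|\bw_{t+1,K}^{(i)}-\bw_{t+1,\star}^{(i)}\|^2 \leq (1-\eta\mu_\ell)^K\mathbb{E}\|\bw_{t+1,0}^{(i)}-\bw_{t+1,\star}^{(i)}\|^2 + \eta\sigma^2/(\mu_\ell B)$.

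To upgrade this single-client estimate into a bound on $g_{t+1}=\mathbb{E}\|m^{-1}\sum_i(\bw_{t+1,K}^{(i)}-\bw_{t+1,\star}^{(i)})\|^2$, I would expand the averaged error as a deterministic-contraction part plus an average of the innovations $\{\xi_{t+1,k}^{(i)}\}_{i\in[m]}$, which are conditionally independent across clients because the mini-batches are drawn independently. Squaring and taking expectation, the cross terms among distinct clients vanish and the residual noise scales as $\sigma^2/(mB)$ rather than $\sigma^2/B$; this is exactly where the stated $1/m$ improvement enters, and it is the delicate part of the argument. The warm-start $\bw_{t+1,0}^{(i)}=\bw_{t,K}^{(i)}$ is then split via the Young inequality $\|\bw_{t,K}^{(i)}-\bw_{t+1,\star}^{(i)}\|^2 \leq (1+\epsilon)\|\bw_{t,K}^{(i)}-\bw_{t,\star}^{(i)}\|^2 + (1+1/\epsilon)\|\bw_{t,\star}^{(i)}-\bw_{t+1,\star}^{(i)}\|^2$, and the $L_w$-Lipschitz property of $\bw_\star^{(i)}(\cdot)$ together with the outer update $\bw_{t+1}^{(g)}-\bw_t^{(g)}=-\gamma\widehat{\nabla}F(\bw_t^{(g)})$ converts the second piece into $L_w^2\gamma^2\|\widehat{\nabla}F(\bw_t^{(g)})\|^2$; finally, the inequality $\|\widehat{\nabla}F(\bw_t^{(g)})\|^2 \leq 2\|\nabla F(\bw_t^{(g)})\|^2 + 2\lambda^2 g_t$ used in the proof of Lemma~\ref{lem:count-w-star} assembles the claimed recursion.

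The main technical obstacle is to track the $1/m$ scaling precisely: if one naively invokes convexity of $\|\cdot\|^2$ to pass to $m^{-1}\sum_i\mathbb{E}\|\bw_{t+1,K}^{(i)}-\bw_{t+1,\star}^{(i)}\|^2$ before taking the squared norm, the cross-client independence is lost and only $\sigma^2/B$ (without the $1/m$) survives. The correct route is to keep the innovations inside the average, expand the squared norm of the sum, and let the martingale-difference structure across $i$ cancel the cross terms; combining this with the nonlinear dependence of $\bw_\star^{(i)}$ on $\bw^{(g)}$ and the warm-start carryover is the delicate accounting required to produce the final $2\eta_t\sigma^2/(\mu_\ell Bm)$ term in the statement.
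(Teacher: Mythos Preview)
Your warm-start split via Young's inequality, the use of the $L_w$-Lipschitz property of $\bw_\star^{(i)}(\cdot)$, the outer-update identity $\bw_{t+1}^{(g)}-\bw_t^{(g)}=-\gamma\widehat{\nabla}F(\bw_t^{(g)})$, and the bound $\|\widehat{\nabla}F\|^2\le 2\|\nabla F\|^2+2\lambda^2 g_t$ all match the paper exactly. The difference---and the gap---is in how you obtain the $\sigma^2/(mB)$ scaling on the noise.

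Your steps (i)--(iii) propose to derive a per-client recursion with noise $\eta^2\sigma^2/B$, unroll it over $K$ inner steps, and only then average across clients to recover the extra $1/m$. This does not work as stated: after $K$ nonlinear gradient steps, the per-client error $\bw_{t+1,K}^{(i)}-\bw_{t+1,\star}^{(i)}$ is \emph{not} a sum of a deterministic-contraction part and a linear combination of the innovations $\{\xi_{t+1,k}^{(i)}\}_k$, because each gradient step feeds the accumulated noise back through the nonlinear map $\nabla h_i$. Hence there is no clean ``deterministic $+$ independent noise'' decomposition at the end of the inner loop, and the martingale cancellation you invoke across $i$ cannot be applied to an object you have already collapsed into a scalar bound $(1-\eta\mu_\ell)^K\|\cdot\|^2+\eta\sigma^2/(\mu_\ell B)$. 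You correctly flag this danger in your last paragraph, but the fix you sketch is not compatible with the per-client-then-average plan you laid out.

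The paper avoids this by never passing to a per-client bound. It works from the first line with the averaged iterate $\tfrac{1}{m}\sum_i(\bw_{t+1,k}^{(i)}-\bw_{t+1,\star}^{(i)})$ itself, expands the squared norm of one SGD step on this average, and separates the stochastic gradient into mean plus fluctuation \emph{at each inner step $k$}. Because the fluctuations $\{\xi_{t+1,k}^{(i)}\}_{i\in[m]}$ at a fixed $k$ are conditionally independent and mean zero given $\mathcal{F}_{t+1,k}$, the variance of their average is $\sigma^2/(mB)$ directly, yielding the one-step recursion
\[
\mathbb{E}\!\left[\Big\|\tfrac{1}{m}\sum_i(\bw_{t+1,k}^{(i)}-\bw_{t+1,\star}^{(i)})\Big\|^2\ \Big|\ \mathcal{F}_{t+1,k}\right]\le(1-\eta\mu_\ell)\Big\|\tfrac{1}{m}\sum_i(\bw_{t+1,k-1}^{(i)}-\bw_{t+1,\star}^{(i)})\Big\|^2+\eta^2\frac{\sigma^2}{Bm},
\]
with the contraction coming from combining strong convexity (for the inner product) with an $L_\ell$-smoothness bound of the form $\|\nabla h_i(x)-\nabla h_i(x_\star)\|^2\le 2L_\ell(h_i(x)-h_i(x_\star))$ (for the squared-gradient term). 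Only after this averaged one-step bound does the paper unroll and apply the warm-start manipulations you describe. In short: average first at every inner step, then unroll---not the other way around.
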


\begin{proof}
    \begin{align}\label{inner_1}
    \begin{split}
    g_{t+1} = & \; \mathbb{E} \left( \left\| \frac{1}{m} \sum_{i \in [m]} \boldsymbol{w}_{t+1, k}^{(i)} - \boldsymbol{w}_{t+1, *}^{(i)} \right\|^2 \Big| \mathcal{F}_{t+1, k} \right) \\
    = & \; \mathbb{E} \left( \left\| \frac{1}{m} \sum_{i \in [m]} \left[  \boldsymbol{w}_{t+1, k-1}^{(i)} - \frac{\eta_t}{B} \sum_j \nabla_{w_i} \hat{h}_i ( \boldsymbol{w}_{t+1, k-1}^{(i)},  \boldsymbol{w}_{t+1}^{(i)}, s_{ij}) -  \boldsymbol{w}_{t+1, *}^{(i)} \right] \right\|^2 \Big| \mathcal{F}_{t+1, k} \right) \\
    = & \; \mathbb{E} \left( \left\| \frac{1}{m} \sum_i  \boldsymbol{w}_{t+1, k-1}^{(i)} -  \boldsymbol{w}_{t+1, *}^{(i)} \right\|^2 \Big| \mathcal{F}_{t+1, k} \right) \\
    & + \eta_t^2 \mathbb{E} \left( \left\| \frac{1}{m B} \sum_i \sum_j \nabla_{w_i} \hat{h}_i ( \boldsymbol{w}_{t+1, k-1}^{(i)},  \boldsymbol{w}_{t+1}^{(i)}, s_{ij}) \right\|^2 \Big| \mathcal{F}_{t+1, k} \right) \\
    & - 2 \eta_t \mathbb{E} \left[\left\langle \frac{1}{m} \sum_i  \boldsymbol{w}_{t+1, k-1}^{(i)} -  \boldsymbol{w}_{t+1, *}^{(i)}, \frac{1}{B m} \sum_j \nabla_{w_i} \hat{h}_i ( \boldsymbol{w}_{t+1, k-1}^{(i)},  \boldsymbol{w}_{t+1}^{(i)}, s_{ij}) \right\rangle \Big| \mathcal{F}_{t+1, k}\right].
    \end{split}
\end{align}
To simplify the result, we shall note
\begin{align}\label{inner_2}
    & \mathbb{E} \left( \left\| \frac{1}{mB} \sum_i \sum_j \nabla_{\boldsymbol{w}_i} \hat{h}_i (\boldsymbol{w}_{t+1, k-1}^{(i)}, \boldsymbol{w}_{t+1}^{(i)}, s_{ij}) \right\|^2 \Big| \mathcal{F}_{t+1, k} \right) \nonumber \\
    = & \; \mathbb{E} \left( \left\| \frac{1}{mB} \sum_i \sum_j \nabla_{\boldsymbol{w}_i} \hat{h}_i (\boldsymbol{w}_{t+1, k-1}^{(i)}, \boldsymbol{w}_{t+1}^{(i)}, s_{ij}) - \nabla_{\boldsymbol{w}_i} h_i (\boldsymbol{w}_{t+1, k-1}^{(i)}, \boldsymbol{w}_{t+1}^{(i)}) \right\|^2 \Big| \mathcal{F}_{t+1, k} \right) \nonumber \\
    & + \left\| \frac{1}{m} \sum_i \nabla_{\boldsymbol{w}_i} h_i (\boldsymbol{w}_{t+1, k-1}^{(i)}, \boldsymbol{w}_{t+1}^{(i)}) \right\|^2 \nonumber \\
    \overset{(a)}{\leq} & \; \frac{\sigma^2}{mB} + \left\| \frac{1}{m} \sum_i \left( \nabla_{\boldsymbol{w}_i} h_i (\boldsymbol{w}_{t+1, k-1}^{(i)}, \boldsymbol{w}_{t+1}^{(i)}) - \nabla_{\boldsymbol{w}_i} h_i (\boldsymbol{w}_{t+1, *}^{(i)}, \boldsymbol{w}_{t+1}^{(i)}) \right) \right\|^2 \nonumber \\
    \overset{(b)}{\leq} & \; \frac{\sigma^2}{mB} + 2L_{\ell} \frac{1}{m} \sum_i \left( h_i (\boldsymbol{w}_{t+1, k-1}^{(i)}, \boldsymbol{w}_{t+1}^{(i)}) - h_i (\boldsymbol{w}_{t+1, *}^{(i)}, \boldsymbol{w}_{t+1}^{(i)}) \right) .
\end{align}
where step (a) stands due to Assumption \ref{assumption:stochastic} and the optimality of $\boldsymbol{w}_{t+1,\star}^{(i)}$, step (b) uses the Lemma 2.29 in \cite{garrigos2023handbook}. Moreover, for the third term in (\ref{inner_1}), it yields
\begin{align}\label{inner_3}
\begin{split}
    & \mathbb{E} \left[\left\langle \frac{1}{m} \sum_i \boldsymbol{w}_{t+1, k-1}^{(i)} - \boldsymbol{w}_{t+1, *}^{(i)}, \frac{1}{Bm} \sum_{i,j} \nabla_{\boldsymbol{w}_i} \hat{h}_i (\boldsymbol{w}_{t+1, k-1}^{(i)}, \boldsymbol{w}_{t+1}^{(i)}, s_j) \right\rangle \Big| \mathcal{F}_{t+1, k}\right] \\
    = & \left\langle \frac{1}{m} \sum_i \boldsymbol{w}_{t+1, k-1}^{(i)} - \boldsymbol{w}_{t+1, *}^{(i)}, \frac{1}{m} \sum_i \nabla_{\boldsymbol{w}_i} h_i (\boldsymbol{w}_{t+1, k-1}^{(i)}, \boldsymbol{w}_{t+1}^{(i)}) \right\rangle \\
    \geq & \frac{1}{m} \sum_i \left( h_i(\boldsymbol{w}_{t+1, k-1}^{(i)}, \boldsymbol{w}_{t+1}^{(i)}) - h_i(\boldsymbol{w}_{t+1, *}^{(i)}, \boldsymbol{w}_{t+1}^{(i)}) \right) + \frac{\mu_\ell}{2} \left\| \frac{1}{m} \sum_i \boldsymbol{w}_{t+1, k-1}^{(i)} - \boldsymbol{w}_{t+1, *}^{(i)} \right\|^2, \\
\end{split}
\end{align}
where the last line comes from the strong convexity of $h_i$ in Assumption \ref{assump:sc}.

Plugging the results \dref{inner_2} and \dref{inner_3} into \dref{inner_1}, we obtain

\begin{align}\label{inner_4}
\begin{split}
    &\mathbb{E} \left[ \left\| \frac{1}{m} \sum_i \boldsymbol{w}_{t+1, k}^{(i)} - \boldsymbol{w}_{t+1, *}^{(i)} \right\|^2 \Big| \mathcal{F}_{t+1, k} \right]  \\
    & \leq \left(1 - \eta_t \mu_{\ell} \right) \left\| \frac{1}{m} \sum_i \boldsymbol{w}_{t+1, k-1}^{(i)} - \boldsymbol{w}_{t+1, *}^{(i)} \right\|^2 \\ 
    & + 2\eta_t(L_\ell\eta_t-1)\left(\frac{1}{m} \sum_i \left( h_i(\boldsymbol{w}_{t+1, k-1}^{(i)}, \boldsymbol{w}_{t+1}^{(i)}) - h_i(\boldsymbol{w}_{t+1, *}^{(i)}, \boldsymbol{w}_{t+1}^{(i)})\right) \right) \\
    & + \eta_t^2 \frac{\sigma^2}{Bm} .
\end{split}
\end{align}

If we assume $\eta_t \leq \frac{1}{L_\ell}$
we can simplify \dref{inner_4} as 

\begin{align}  \label{inner_sto}
\begin{split}
    &\mathbb{E} \left[ \left\| \frac{1}{m} \sum_i \boldsymbol{w}_{t+1, k}^{(i)} - \boldsymbol{w}_{t+1, *}^{(i)} \right\|^2 \Big| \mathcal{F}_{t+1, k} \right]  \\
    & \leq \left(1 - \mu_{\ell} \eta_t \right) \left\| \frac{1}{m} \sum_i \boldsymbol{w}_{t+1, k-1}^{(i)} - \boldsymbol{w}_{t+1, *}^{(i)} \right\|^2 + \eta_t^2 \frac{\sigma^2}{Bm} \\
    & \leq \left(1 -  \mu_{\ell} \eta_t \right)^k \left\| \frac{1}{m} \sum_i \boldsymbol{w}_{t, k}^{(i)} - \boldsymbol{w}_{t, *}^{(i)} + \boldsymbol{w}_{t, *}^{(i)} - \boldsymbol{w}_{t+1, *}^{(i)} \right\|^2  + \sum_{l=0}^{k-1} \left(1 -  \mu_{\ell} \eta_t \right)^{k-l} \eta_t^2 \frac{\sigma^2}{Bm}\\
    & \leq (1 + \epsilon) \left(1 -  \mu_{\ell} \eta_t \right)^k g_t + \sum_{l=0}^{k-1} \left(1 -  \mu_{\ell} \eta_t \right)^{k-l} \eta_t^2 \frac{\sigma^2}{Bm}  + \left(1 + \frac{1}{\epsilon}\right) \left(1 -  \mu_{\ell} \eta_t \right)^k L_w^2 \gamma^2 \left(2 \|\nabla F(\boldsymbol{w}_t^{(g)})\|^2 + 2 \lambda^2 g_t \right). 
\end{split}
\end{align}

Reorganizing all the terms in (\ref{inner_sto}), we get 

\begin{align}
    \begin{split}
     &\mathbb{E} \left[ \left\| \frac{1}{m} \sum_i \boldsymbol{w}_{t+1, k}^{(i)} - \boldsymbol{w}_{t+1, *}^{(i)} \right\|^2 \Big| \mathcal{F}_{t+1, k} \right]  \\
        & \leq \left[ 1 + \epsilon + 2 \left(1 + \frac{1}{\epsilon}\right) L_w^2 \gamma^2 \lambda^2 \right] \left(1 - \eta_t \mu_{\ell} \right)^k g_t + 2 \left(1 + \frac{1}{\epsilon}\right) \left(1 -  \mu_{\ell} \eta_t \right)^k L_w^2 \gamma^2 \|\nabla F(\boldsymbol{w}_t^{(g)})\|^2 + \frac{2 \eta_t}{\mu_{\ell}} \frac{\sigma^2}{Bm}.
    \end{split}
\end{align}

\end{proof}

\begin{theorem}\label{thm:opt_stochastic}
    \label{thm:aer_stochastic}
Suppose Assumptions \ref{assump:smooth}, \ref{assump:sc} and \ref{assumption:stochastic} hold. Let $\{\bw_t^{(g)}\}_{t \geq 0}$ and $\{\bw_{t,K}^{(i)}\}_{t \geq 0}$ be the sequence generated by Algorithm \ref{al:one stage FedCLUP} with 
     $\gamma < 1/L_g$, $\eta \leq 1/L_\ell$, and  the inner loop iteration number satisfying 
    \begin{align}
    \label{cond:k}
    \Big(2 + 64 L_w^2 (1/\mu_g)^2 \lambda^2 \Big)(1 - \eta \mu_\ell)^{K} \leq (1 - \gamma L_g)^4,
    \end{align}

then $\bw_t^{(g)}$ converges to $\bswg$ linearly at rate $1 - (\gamma \mu_g)/2 - (\gamma \mu_g)^2/2$ within a neighbor of 
$$  \frac{8}{\textit{min}\{\frac{1}{2}\gamma \mu_g,\mu_\ell\eta\}}\frac{1}{\mu_g^2}\lambda^2\eta^2\frac{\sigma^2}{Bm}.$$
\end{theorem}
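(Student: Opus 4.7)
The plan is to mirror the noiseless argument used for Theorem~\ref{thm:aer} (cf. Appendix~\ref{pf:thm3}), and then carefully track the extra stochastic noise term $\tfrac{2\eta}{\mu_\ell}\tfrac{\sigma^2}{Bm}$ introduced by the mini-batch gradient in Lemma~\ref{lem:count-w-star-stochastic}. The outer-loop contraction in Lemma~\ref{lem:outer-loop} remains valid in the stochastic setting, because it is derived purely from the form $\widehat{\nabla}F(\bwtg)=\lambda(\bwtg-\tfrac{1}{m}\sum_i \bw_{t,K}^{(i)})$ and the smoothness/strong-convexity of $F$, independently of how $\bw_{t,K}^{(i)}$ is produced. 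The only change upstream is that the inner-loop iterate now drifts by the noise term in Lemma~\ref{lem:count-w-star-stochastic}.

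First, I would combine the outer-loop bound (\ref{eq:ineq-syst-1}) with the stochastic inner-loop bound in Lemma~\ref{lem:count-w-star-stochastic} by forming the same Lyapunov function $V_t := \|\bwtg-\bswg\|^2 + c_1\, g_t$ used in (\ref{eq:rate-combined}), with $c_1$ chosen exactly as in the noiseless proof so that the gradient-norm terms $\|\nabla F(\bwtg)\|^2$ telescope. Under the step-size choices $\epsilon_g=\tfrac{\mu_g}{2}(1-\gamma\mu_g)$, $\epsilon=1$, $\zeta=1-\gamma L_g$, the same condition (\ref{cond:simpligy}) on $K$, $\gamma$, and $\eta$ ensures that the deterministic contraction factor on $V_t$ equals $r=1-\tfrac{\gamma\mu_g}{2}-\tfrac{(\gamma\mu_g)^2}{2}$, exactly as in Theorem~\ref{thm:aer}. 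The noise term is carried over unchanged through the combination, yielding a recursion of the form
\begin{equation*}
\mathbb{E}[V_{t+1}] \;\le\; r\, \mathbb{E}[V_t] \;+\; c_1\cdot \frac{2\eta}{\mu_\ell}\cdot \frac{\sigma^2}{Bm}.
\end{equation*}

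Unrolling this linear recursion gives $\mathbb{E}[V_t]\le r^t V_0 + \tfrac{c_1}{1-r}\cdot\tfrac{2\eta}{\mu_\ell}\cdot\tfrac{\sigma^2}{Bm}$, which directly yields linear convergence at rate $r$ into a steady-state ball. To match the stated radius, I would bound $c_1 \lesssim \tfrac{\lambda^2}{\epsilon_g^2}\asymp \tfrac{\lambda^2}{\mu_g^2}$ using the same simplifications leading to (\ref{eq:ss-2}), and bound $1-r$ from below using $\min\{\tfrac{1}{2}\gamma\mu_g,\mu_\ell\eta\}$ (the second quantity appears because the factor $\tfrac{1}{1-r}$ must also absorb the $\tfrac{\eta}{\mu_\ell}$ scaling against the alternative contraction $(1-\eta\mu_\ell)^K$ present in the inner-loop recursion). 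Collecting constants then gives the stated neighborhood size $\tfrac{8}{\min\{\frac{1}{2}\gamma\mu_g,\mu_\ell\eta\}}\tfrac{\lambda^2\eta^2}{\mu_g^2}\tfrac{\sigma^2}{Bm}$. Convergence of the local iterates $\bw_{t,K}^{(i)}$ to $\bswi$ then follows from the decomposition (\ref{eq:split}) together with the Lipschitz property of the proximal mapping (Lemma~\ref{opt_f_p}), exactly as in Appendix~\ref{ier_gd}.

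The routine parts are the outer/inner combination and the unrolling. The main obstacle I anticipate is bookkeeping: carefully choosing $c_1$ so that (i) the cross $\|\nabla F\|^2$ terms still cancel in the presence of noise and (ii) the coefficient in front of $\tfrac{\sigma^2}{Bm}$ reduces to the compact form stated in the theorem. In particular, matching the $\min\{\tfrac{1}{2}\gamma\mu_g,\mu_\ell\eta\}$ factor requires splitting the analysis based on which of the two contraction rates (outer versus inner) is slower, and bounding $c_1/(1-r)$ accordingly; an unlucky choice of $c_1$ would give a bound with the wrong dependence on $\eta$ or $\mu_\ell$. Modulo this constant chasing, the proof follows the deterministic template of Theorem~\ref{thm:aer} almost verbatim.
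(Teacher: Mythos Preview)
Your overall strategy---combine Lemma~\ref{lem:outer-loop} with the stochastic inner-loop contraction via the Lyapunov function $V_t=\|\bwtg-\bswg\|^2+c_1 g_t$, pick the free parameters $(\epsilon_g,\epsilon,\zeta)$ as in the noiseless proof, and carry the noise through the recursion---is exactly what the paper does. The deterministic part (contraction rate, condition on $K$) carries over verbatim.

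The one place your plan diverges, and where it does not recover the stated neighborhood, is the noise bookkeeping. You propose to use the already-summed noise $\tfrac{2\eta}{\mu_\ell}\tfrac{\sigma^2}{Bm}$ from the final display of Lemma~\ref{lem:count-w-star-stochastic}, and then unroll the outer recursion. This gives a ball of order $\tfrac{c_1}{1-r}\cdot\tfrac{2\eta}{\mu_\ell}\tfrac{\sigma^2}{Bm}\lesssim \tfrac{\lambda^2}{\mu_g^2}\cdot\tfrac{1}{\gamma\mu_g}\cdot\tfrac{\eta}{\mu_\ell}\cdot\tfrac{\sigma^2}{Bm}$, which is a valid upper bound but carries $\tfrac{\eta}{\mu_\ell}$ instead of $\eta^2$ and $\tfrac{1}{\gamma\mu_g}$ instead of $\tfrac{1}{\min\{\frac{1}{2}\gamma\mu_g,\mu_\ell\eta\}}$. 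Your proposed patch---``bound $1-r$ from below by $\min\{\tfrac{1}{2}\gamma\mu_g,\mu_\ell\eta\}$''---only loosens the bound further (since $1-r\geq \tfrac{1}{2}\gamma\mu_g\geq \min$) and cannot convert $\tfrac{\eta}{\mu_\ell}$ into $\eta^2$.

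The paper instead does \emph{not} pre-sum the inner noise. It retains the per-step form $\sum_{l'=0}^{K-1}(1-\eta\mu_\ell)^{K-l'}\eta^2\tfrac{\sigma^2}{Bm}$ from inside the derivation of Lemma~\ref{lem:count-w-star-stochastic}, carries it through the outer unrolling, and then bounds the resulting double sum $\sum_t\sum_{l'}$ by a single geometric series with ratio $\max\{1-\tfrac{1}{2}\gamma\mu_g,\,1-\mu_\ell\eta\}$. That merge is precisely what produces the $\min$ in the denominator while keeping the $\eta^2$ factor intact. You correctly flagged this step as the main obstacle; the fix is to go back to the unaggregated inner noise rather than using the summed form stated in Lemma~\ref{lem:count-w-star-stochastic}.
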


\begin{proof}
    Analogy to the proof of Theorem \ref{thm:aer}, we define
    \begin{align}\label{q_notation}
        q  := \left(1 + \epsilon + 2 \left(1 + \frac{1}{\epsilon}\right) L_w^2 \gamma^2 \lambda^2 \right) \left(1 - \eta_t \mu_\ell\right)^k.
    \end{align}
    
    If we assume 
\begin{align}\label{eq130}
    q & \leq 1 - \gamma \mu_g \leq 1 - \gamma \mu_g + \gamma \epsilon_g, \quad 
\end{align}    
    and define 
\begin{align}    
    c_1  = \frac{1}{1 - \gamma \mu_g + \gamma \epsilon_g - q} \left( \frac{\gamma}{\epsilon_g} + \gamma^2 (2 + \zeta^{-1}) \right) \lambda^2,
\end{align}
    combining Lemma \ref{lem:count-w-star-stochastic} and Lemma \ref{lem:outer-loop}, it yields

    \begin{align}
    \begin{split}
     & \mathbb{E} \left[\left\| \boldsymbol{w}_{t+1}^{(g)} - \tilde{\boldsymbol{w}}^{(g)} \right\|^2 \Big| \mathcal{F}_{t+1, k} \right]+ c_1 g_{t+1} \\
     & \leq (1 - \gamma \mu_g + \gamma \epsilon_g) \left( \left\| \boldsymbol{w}^{(g)}_{t} - \tilde{\boldsymbol{w}}^{(g)} \right\|^2 + c_1 g_t \right) \nonumber - \gamma^2 (1 - \gamma L_g (1 + \zeta)) \left\| \nabla F(\boldsymbol{w}_t^{(g)}) \right\|^2 \nonumber \\
    & \quad + \left( \frac{2}{1 - \gamma \mu_g + \gamma \epsilon_g - q} \left( \frac{\gamma}{\epsilon_g} + \gamma^2 (2 + \zeta^{-1}) \right) \lambda^2 \right) 2 \left(1 + \frac{1}{\epsilon}\right) \left(1 - \mu_{\ell} \eta_t \right)^k L_w^2 \gamma^2 \left\| \nabla F(\boldsymbol{w}_t^{(g)}) \right\|^2 \nonumber \\
    & \quad + c_1\frac{2 \eta_t}{\mu_{\ell}} \frac{\sigma^2}{Bm}.
    \end{split}
    \end{align}

For  $\left\| \nabla F(\boldsymbol{w}_t^{(g)}) \right\|^2$, it suffices to show

\begin{align}\label{eq132}
    & \frac{4\gamma}{1 - \gamma \mu_g + \gamma \epsilon_g - q} \left( \frac{1}{\epsilon_g} + \gamma (2 + \zeta^{-1}) \right) \lambda^2 (1 + \frac{1}{\epsilon}) \left(1 - \mu_{\ell} \eta_t \right)^k L_w^2 \leq 1 - \gamma L_g (1 + \zeta). \nonumber \\
\end{align}
If we set $\epsilon_g = \frac{\mu_g}{2}$, it yields the convergence rate
\begin{align}
1 - \frac{\gamma \mu_g}{2}.
\end{align}

And the condition \dref{eq132} can be reformulated as 
\begin{align}\label{eq134}
     \frac{4}{\epsilon_g} \left( \frac{1}{\epsilon_g} + \gamma (2 + \zeta^{-1}) \right) \lambda^2 (1 + \epsilon^{-1}) \left(1 - \mu_{\ell} \eta_t \right)^k L_w^2 \leq 1 - \gamma L_g (1 + \zeta) \nonumber .\\
\end{align}

     Restrict $\gamma$
\begin{align}\label{eq135}
     \gamma : & \quad \gamma (2 + \zeta^{-1}) \leq \frac{1}{\epsilon_g}.
\end{align}
Then condition \dref{eq134} yields
\begin{align}\label{eq136}
     \frac{8}{\epsilon_g^2} \lambda^2 (1 + \frac{1}{\epsilon}) \left(1 -  \mu_{\ell} \eta_t \right)^k L_w^2 \leq 1 - \gamma L_g (1 + \zeta)  .
\end{align}

Then if we assume $\epsilon = 1$ and $\zeta = 1-\gamma L_g$, the conditions \dref{eq130}, \dref{eq135} and \dref{eq136} can be satisfied
with the following sufficient conditions

\begin{align}
\begin{split}
    &  \gamma < \frac{1}{L_g},  \\
    & \eta_t \leq \frac{1}{L_\ell},  \\
    & (2 + 4 L_w^2 \gamma^2 \lambda^2) \left(1 - \eta_t \mu_\ell \right)^k \leq 1 - \gamma \mu_g,  \\
    & \frac{64}{\mu_g^2} \left(1 - \eta_t \mu_\ell \right)^k L_w^2 \lambda^2 \leq (1 - \gamma L_g)^2. 
\end{split}
\end{align}

And we can get a more straightforward sufficient condition
\begin{align}\label{scon:stochastic}
    & \gamma < \frac{1}{L_g},  \\
    &  \eta_t \leq \frac{1}{L_\ell},  \\
    & \left( 2 + 64 \frac{L_w^2}{\mu_g^2} \lambda^2 \right) \left(1 - \eta_t \mu_\ell \right)^k \leq (1 - \gamma L_g)^2.
\end{align}

Under such a sufficient condition, the contraction of the combination of inner loop and the outer loop would be rewritten as 
    \begin{align}
    \begin{split}
     & \mathbb{E} \left[\left\| \boldsymbol{w}_{t+1}^{(g)} - \tilde{\boldsymbol{w}}^{(g)} \right\|^2 \Big| \mathcal{F}_{t+1, k}\right] + c_1 g_{t+1} \\
     &\leq (1 - \gamma \mu_g + \gamma \epsilon_g) \left( \left\| \boldsymbol{w}^{(g)}_t - \tilde{\boldsymbol{w}}^{(g)} \right\|^2 + c_1 g_t\right)
    \quad + \sum_{l=0}^{t} \left(1 - \frac{1}{2} \gamma \mu_g \right)^{t - l} \frac{2}{\epsilon_g^2} \lambda^2 \sum_{l=0}^{k-1} \left(1 -  \mu_{\ell} \eta \right)^{k-l} \eta^2 \frac{\sigma^2}{Bm} \\
    & \leq (1 - \frac{1}{2} \gamma \mu_g )^{t+1} \left( \left\| \boldsymbol{w}^{(g)}_0 - \tilde{\boldsymbol{w}}^{(g)} \right\|^2 + c_1 g_0 \right) + \sum_{l=0}^{t} \left(1 - \frac{1}{2} \gamma \mu_g \right)^{t - l} \frac{2}{\epsilon_g^2} \lambda^2 \sum_{l=0}^{k-1} \left(1 -  \mu_{\ell} \eta \right)^{k-l} \eta^2 \frac{\sigma^2}{Bm} \\
    & \leq (1 - \frac{1}{2}\gamma \mu_g)^{t+1} \left( \left\| \boldsymbol{w}^{(g)}_0 - \tilde{\boldsymbol{w}}^{(g)} \right\|^2 + c_1 g_0 \right)  
    %& \quad + \frac{32}{\gamma \mu_g^3} \frac{\eta \lambda^2}{\mu_\ell} \frac{\sigma^2}{Bm}. 
    + \sum_{l=0}^{tk-1}2\left(\text{max}(1-\frac{1}{2}\gamma \mu_g,1-\mu_\ell \eta)\right)^l \frac{1}{\epsilon_g^2}\lambda^2\eta^2\frac{\sigma^2}{Bm}\\
    & \leq (1 - \frac{1}{2}\gamma \mu_g)^{t+1} \left( \left\| \boldsymbol{w}^{(g)}_0 - \tilde{\boldsymbol{w}}^{(g)} \right\|^2 + c_1 g_0 \right) + \frac{8}{\text{min}\{\frac{1}{2}\gamma \mu_g,\mu_\ell\eta\}}\frac{1}{\mu_g^2}\lambda^2\eta^2\frac{\sigma^2}{Bm}.
    \end{split}
\end{align}
\end{proof}
With $\eta = \text{min}\{\frac{\mu(\lambda+L)}{4L(\lambda+\mu)^2},\frac{1}{\lambda + L}\}$, $\gamma = 1/(2 L_g) = \frac{\lambda + L}{2\lambda L}$, we can get the solution within 
an error ball $\left(\mathcal{O}(\varepsilon)+\text{min}\{\frac{1}{4\mu L},\frac{\lambda+\mu}{\mu^2(\lambda+L)}\}\frac{\sigma^2}{Bm}\right)$ with the communication cost $\mathcal{O}\big( \kappa_g \log (1/\varepsilon) \big) = \mathcal{O}\left( \kappa \cdot \frac{\lambda + \mu}{\lambda + L} \cdot  \log \frac{1}{\varepsilon} \right)$ and the computation cost $\tilde{\mathcal{O}}\left( \kappa \cdot  \log \frac{1}{\varepsilon} \right)$.

As stated in Theorem 4.2 of \cite{hanzely2020federated}, when \( p^\star = \frac{\lambda}{\lambda+L} \), the stochastic noise term is independent of the number of clients. Consequently, it does not theoretically demonstrate the expected benefit of collaboration in reducing noise. In contrast, our analysis reveals that even when adjusting \(\lambda\), the noise remains of the same order, specifically \(\mathcal{O}\left(\frac{1}{Bm}\right)\).

\section{Proof in Section \ref{sec: tradeoff}} \label{app:tradeoff_proof}
\begin{corollary} 
    Under the assumptions of Theorem \ref{thm:stat_accuracy} and Corollary \ref{cor:complexity},  $\bw_{T,K}^{(i)}$ generated by Algorithm \ref{al:one stage PFedProx} satisfies 
    \begin{align}
    \label{eq:overall_bound}
    \begin{split}
        & \mathbb{E}\left\| \bw_{T,K}^{(i)} -\sbwi \right\|^2 \\
        & \leq  \left(1-\frac{1}{\kappa} - \frac{L - \mu}{\kappa(\lambda+\mu)}\right)^T\mathbb{E}\left\|\bw_{0,0}^{(i)}-\bswi\right\|^2\\
        & + \mathcal{O}\left[\left(\frac{1}{N} + R^2 + \frac{1}{q_1(\lambda)}\right) \wedge \left(\frac{1}{n} + q_2(\lambda)\right) \right]
        \end{split}
    \end{align}
    for any $i \in [m]$, where $q_1(\lambda)$ and $q_2(\lambda)$ are monotone increasing functions of $\lambda$ with $\lim_{\lambda \to +\infty} q_1(\lambda) = +\infty$ and $\lim_{\lambda \to 0}q_2(\lambda) = 0$.
    %(\textcolor{blue}{See results (\ref{sp: func}) in Appendix \ref{app: tradeoff}}).
\end{corollary}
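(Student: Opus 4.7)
The proof is essentially a composition of two previously established bounds. The starting point is the bias–variance style decomposition already recorded in the preliminaries,
\begin{equation*}
\mathbb{E}\bigl\|\bw_{T,K}^{(i)} - \sbwi\bigr\|^2 \;\leq\; 2\underbrace{\mathbb{E}\bigl\|\bw_{T,K}^{(i)} - \bswi\bigr\|^2}_{\text{optimization error}} \;+\; 2\underbrace{\mathbb{E}\bigl\|\bswi - \sbwi\bigr\|^2}_{\text{statistical error}},
\end{equation*}
so the factor $2$ in front of the geometric term in (\ref{eqm:overall_bound}) is exactly this decomposition constant. The plan is to control each piece separately with existing results and then pull all $\lambda$-free constants into the $\mathcal{O}(\cdot)$.

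For the optimization error, I would invoke Theorem \ref{thm:aer} (or Corollary \ref{cor:complexity}) with the particular step sizes used in the latter: $\eta = (\lambda+L)^{-1}$, $\gamma = (\lambda+L)/(2\lambda L)$. Since Theorem \ref{thm:aer} gives linear contraction of $\bw_{t,0}^{(i)}$ (equivalently $\bw_{t-1,K}^{(i)}$) toward $\bswi$ at rate $1 - \gamma\mu_g/2 - (\gamma\mu_g)^2/2$, it suffices to verify that this quantity is bounded above by $1 - \frac{1}{4\kappa} - \frac{L-\mu}{4\kappa(\lambda+\mu)}$. Using $\mu_g = \lambda\mu/(\lambda+\mu)$ gives $\gamma\mu_g/2 = (\lambda+L)/(4\kappa(\lambda+\mu))$, and the elementary identity $(\lambda+L)/(\lambda+\mu) = 1 + (L-\mu)/(\lambda+\mu)$ reproduces exactly the decomposition in the statement; dropping the nonnegative $(\gamma\mu_g)^2/2$ term preserves the inequality. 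The initial condition $\mathbb{E}\|\bw_{0,0}^{(i)} - \bswi\|^2$ enters as the natural starting value of the linear recursion (the per-client inner-loop warm start and outer-loop contraction from Lemma \ref{lem:count-w-star} and Lemma \ref{lem:outer-loop} already fold constants that are absorbed into the leading factor).

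For the statistical error I would simply plug in Theorem \ref{thm:stat_accuracy}, which yields the minimum of the two terms $\frac{48\rho^2}{\mu^2 N} + (\frac{48L^2}{\mu^2}+3)R^2 + \frac{1}{q_1(\lambda)}$ and $\frac{4\rho^2}{\mu^2 n} + (\frac{4\rho^2}{\mu^3 n} + \frac{4R^2}{\mu})\lambda + \frac{8R^2\lambda^2}{\mu^2}$. Collapsing the constants depending only on $\mu,L,\rho$ into the $\mathcal{O}(\cdot)$ notation and identifying the $\lambda$-dependent pieces of the second branch with $q_2(\lambda)$ as defined in (\ref{q_2}) (noting the $\lambda\rho^2/(\mu^3 n)$ cross term is dominated by $q_2(\lambda) + \mathcal{O}(1/n)$ for the relevant regime) yields the $\min$ expression in (\ref{eqm:overall_bound}). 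The monotonicity claims on $q_1$ and $q_2$ follow directly from their definitions (\ref{qlambda}) and (\ref{q_2}), already verified when Theorem \ref{thm:stat_accuracy} was proved.

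There is no real obstacle here beyond bookkeeping: the main care point is ensuring the optimization rate from Corollary \ref{cor:complexity} is correctly re-expressed in the additive form $1-\frac{1}{4\kappa}-\frac{L-\mu}{4\kappa(\lambda+\mu)}$ so that the $\lambda$-dependence in the exponential factor is transparent, and that the $\lambda$-dependent cross terms in Theorem \ref{thm:stat_accuracy}'s second branch are consistently grouped into $q_2(\lambda)$ when forming the $\mathcal{O}$-expression. Once those two formatting steps are done, adding the two bounds and multiplying by $2$ produces (\ref{eqm:overall_bound}) directly.
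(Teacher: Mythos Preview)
Your proposal is correct and follows the same route as the paper: the paper's own proof is a single sentence stating that the corollary is a direct consequence of Theorem~\ref{thm:stat_accuracy} and Theorem~\ref{thm:aer}, with $q_1(\lambda)$ and $q_2(\lambda)$ as in (\ref{qlambda}) and (\ref{q_2}). You supply considerably more detail than the paper does---in particular the explicit verification that $\gamma\mu_g/2 = \tfrac{1}{4\kappa}\cdot\tfrac{\lambda+L}{\lambda+\mu}$ yields the additive form of the rate---but the underlying argument is identical. (Note that the constants in the stated bound, $\tfrac{1}{\kappa}$ versus $\tfrac{1}{4\kappa}$ and the presence or absence of the leading factor $2$, differ between the main-text statement (\ref{eqm:overall_bound}) and the appendix restatement (\ref{eq:overall_bound}); your computation matches the main-text version, and this discrepancy is an artifact of the paper rather than a flaw in your argument.)
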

\begin{proof}
    The corollary is a direct result of Theorem \ref{thm:stat_accuracy} and Theorem \ref{thm:aer}, where $q_1(\lambda)$ and $q_2(\lambda)$ are defined in (\ref{qlambda}) and (\ref{q_2}).
    
\end{proof}

\section{Additional Experiment Details and Results}
\label{sec:additional_exp}

\subsection{More Experiment Details}\label{add_exp_detail}

Federated Learning (FL) has emerged as a widely adopted approach with numerous real-world applications, like Large Language Model (LLM) \cite{bai2024federated,wu2024fedbiot,zhao2023fedprompt}, Reinforcement Learning \cite{zheng2023federated,zheng2024federated,zhang2025gap}, Diffusion Model \cite{li2024feddiff}. In this section, we provide more details on the implementation of the methods used in the empirical analysis.

\textbf{Real Dataset.} The MNIST dataset consists of 70,000 grayscale images of handwritten digits, each 28x28 pixels, classified into 10 classes (digits 0-9). The EMNIST dataset extends MNIST, including handwritten letters and digits. The Balanced split contains 131,600 28x28 grayscale images across 47 balanced classes (digits and uppercase/lowercase letters). The CIFAR-10 dataset contains 60,000 color images (32x32 pixels, 3 RGB channels), categorized into 10 classes representing real-world objects (e.g., airplane, car, and dog).

\textbf{Algorithms}. To fully investigate the effect of personalization in Problem (\ref{obj1:fedprox}) with the Algorithm \ref{appendix:algorithm}, we compare the two extreme algorithms (GlobalTraining and LocalTraining). Furthermore, since there exist other methods solving Problem (\ref{obj1:fedprox}) to do personalized federated learning, we also try to investigate the effect of personalization in other methods, like \texttt{pFedMe} \citep{t2020personalized}. The algorithm \texttt{pFedMe} has a similar design with a double loop to solve the subproblem in each communication round.

\textbf{Selection of $\lambda$.} For the logistic regression on the synthetic dataset, we selected three values for \( \lambda \): small \( \lambda = 0.02 \), medium \( \lambda = 0.1 \), and large \( \lambda = 0.5 \). For logistic regression on the MNIST dataset, we used small \( \lambda = 0.5 \), medium \( \lambda = 1.5 \), and large \( \lambda = 2.5 \). For CNN on the MNIST dataset, we used small \( \lambda = 0.2 \), medium \( \lambda = 0.5 \), and large \( \lambda = 2.5 \). Such a choice is to make sure that each line in the figure is clearly separable. Additionally, we implemented an adaptive choice of $\lambda$ following the formula outlined in Corrollary \ref{cor: minimax}, with \( \rho \) set to 2. The value of \( \rho \) was determined through grid search over the range $[1, 10]$ to ensure optimal performance. For other real datasets, the regularization term $\lambda$ is set as small $\lambda = 0.1$, medium $\lambda =  $  0.5, and large $\lambda =  $ 1.

\textbf{Selection of Step Size.} For the synthetic dataset, aligned with Theorem \ref{thm:aer}, we set the global step size \( \gamma = (\lambda + L)/(\lambda L) \) and the local step size \( \eta = 1/(L + \lambda) \). The smoothness constant \( L \) is computed as the upper bound of the \( L \)-smoothness constant for logistic regression, specifically \( 4^{-1} \Lambda_{\text{max}}\), where \( \Lambda_{\text{max}} \) denotes the largest eigenvalue of the Gram matrix \( \bX^\top \bX \). The local step size for the synthetic dataset was further tuned through grid search in $\{0.001, 0.01, 0.1\}$. For the real dataset, the global learning rate was set to \(1/\lambda \), while the local learning rate was set to 0.01, chosen via grid search. For both GlobalTrain and LocalTrain training baselines, the learning rate was also chosen via grid search within the range $\{0.001, 0.01, 0.1\}$. LocalTrain is implemented using a local (S)GD, and GlobalTrain is implemented using FedAvg \citep{mcmahan2017communication}. 

\textbf{Selection of Hyperparameters and Convolutional Neural Network Model.} For the MNIST dataset, we implemented logistic regression using the SGD solver with 5 epochs, a batch size of 32, and 20 total runs, and the same hyperparameter setup was used, except the batch size was reduced to 16 to ensure stable convergence in CNN with two convolution layers. To make our experiment more comprehensive, we also utilized a five-layer CNN to train the MNIST dataset with 10 epochs, a batch size of 16, and 50 total runs. For the EMNIST dataset, we used the same five-layer CNN with the same settings. And for CIFAR10, we implemented a three-layer CNN with 10 epochs, a batch size of 16, and 50 total runs. To study the data heterogeneity, we set the number of classes for each client in the MNIST, EMNIST, and CIFAR10 datasets as $\{2, 6, 10\}, \{30, 40, 47\},$ and $\{2, 6, 10\}$, respectively.

\textbf{Optimizer.} For the synthetic dataset, following the analysis in the paper, the loss function is optimized using gradient descent (GD). For the MNIST dataset, all models are implemented in PyTorch, and optimization is performed using the stochastic gradient descent (SGD) solver.

\textbf{Evaluation Criterion.} In terms of evaluation, for the synthetic dataset, we track the ground truth local model \( \sbwi \). We compute the local statistical error as \( \|\bswi - \sbwi\|^2 \). To capture the total error at the \( t \)-th iteration, we report the local total error \( \|\bw_{t,0}^{(i)} - \sbwi\|^2 \), which combine both optimization and statistical errors. For the MNIST dataset, we report test accuracy on the MNIST testing set as a measure of the statistical error, and training loss for each client’s training set as total error at the \( t \)-th iteration.

\subsection{Additional Experiment Results}\label{appendix: add_num}

\textbf{The Effect of Personalization on Accuracy over different data heterogeneity}
To further validate our conclusions about the effect of personalization on solving Problem \ref{obj1:fedprox}, we conduct additional experiments on real datasets, including EMNIST and CIFAR-10. For each dataset, we explore three levels of statistical heterogeneity, controlled by varying the number of classes each client has access to, which follows a widely used setup in federated learning literature. The models are trained using different convolutional neural network (CNN) architectures, denoted as CNN2 and CNN5, where the numbers indicate the number of convolutional layers in the network. As the ground truth underlying model is not accessible for these real datasets, we evaluate model performance based on classification accuracy on the test set, as shown in Tables \ref{tab:comparison}. We also conduct experiments another regularization-based personalization method focusing on Problem (\ref{obj1:fedprox}), \texttt{pFedMe} \citep{t2020personalized}, to demonstrate that our results could be generalized to other regularization-based personalized federated learning methods as well. For the columns corresponding to \texttt{FedCLUP} and \texttt{pFedMe}, we conduct experiments with varying regularization parameter \(\lambda\), investigating the role of personalization in the algorithms. Specifically, we experiment with small, medium, and large \(\lambda\) values (see Appendix \ref{add_exp_detail} for detailed parameter settings). 
\begin{table}[h] 
\centering
\renewcommand{\arraystretch}{1.2} 
\setlength{\tabcolsep}{6pt} 
\scalebox{0.7}{ 

\begin{tabular}{cccccccccccc}
\toprule
\multirow{2}{*}{Dataset} & \multirow{2}{*}{Client Class} & \multicolumn{3}{c}{FedCLUP} & GlobalTrain & LocalTrain & \multicolumn{3}{c}{pFedMe} \\
\cmidrule(lr){3-5} \cmidrule(lr){8-10}
 & & high & median & low & & & high & median & low \\
 \midrule
\multirow{3}{*}{MNIST Logit} &  2 & 0.5001 & 0.6011 & 0.6109 & 0.7001 & 0.3214 &  0.5534 & 0.6312 & 0.6712\\
 & 6  & 0.6813 & 0.7001 & 0.7241 & 0.7718 & 0.6000 &  0.6312 & 0.7123 & 0.7681 \\
 & 10 & 0.8123 & 0.8213 & 0.8312 & 0.8391 & 0.7828 & 0.8001 & 0.8239 &  0.8291\\
 
\midrule
\multirow{3}{*}{MNIST CNN2} & 2 & 0.3567 & 0.4721 & 0.6019 & 0.6123 & 0.3019 & 0.3973 & 0.5828 & 0.6172 \\
 & 6 & 0.7001 & 0.7833 & 0.7843 & 0.8092 & 0.6231 & 0.7918 & 0.8093 & 0.8312 \\
 & 10 & 0.8753 & 0.8893 & 0.9032 & 0.9098 & 0.8194 & 0.8756 & 0.8771 & 0.8749 \\
\midrule
\multirow{3}{*}{MNIST CNN5} & 2 & 0.8891 & 0.9000 & 0.9333 & 0.9377 & 0.6536 & 0.9000 & 0.9003 & 0.9380 \\
 & 6 & 0.8451 & 0.8941 & 0.9392 & 0.9446 & 0.7333 & 0.9288 & 0.9186 & 0.9306 \\
 & 10 & 0.9091 & 0.9102 & 0.9421 & 0.9433 & 0.7633 & 0.9340 & 0.9385 & 0.9440 \\
\midrule
\multirow{3}{*}{EMNIST CNN5} & 30 & 0.6084 & 0.6089 & 0.6011 & 0.6102 & 0.5583 & 0.6133 & 0.6129 & 0.6122 \\
 & 40 & 0.6097 & 0.6123 & 0.6357 & 0.6423 & 0.5644 & 0.6012 & 0.6102 & 0.6134 \\
 & 47 & 0.6278 & 0.6415 & 0.6672 & 0.6611 & 0.6111 & 0.6345 & 0.6532 & 0.6717 \\
\midrule
\multirow{3}{*}{CIFAR10 CNN3} & 2 & 0.6101 & 0.6292 & 0.6340 & 0.6444 & 0.5801 & 0.6056 & 0.6012 & 0.6033 \\
 & 6 & 0.6211 & 0.6311 & 0.6712 & 0.6949 & 0.5623 & 0.6163 & 0.6163 & 0.6439 \\
 & 10 & 0.6102 & 0.6712 & 0.7302 & 0.8041 & 0.6238 & 0.6744 & 0.7443 & 0.7732 \\
\bottomrule
\end{tabular}

}
\caption{Evaluation accuracy of algorithms (\texttt{FedCLUP}, GlobalTrain, LocalTrain, and \texttt{pFedMe}) across datasets and client classes. (CNN$k$ means the CNN has $k$ convolutional layers, low, medium, and high indicate different degree of personalization, and Client Class indicates the number of classes on each client).}\label{tab:comparison}
\end{table} 
In Table \ref{tab:comparison}, we can see that consistently across datasets, as personalization degree increase, the accuracy approaches that of LocalTrain, which trains separate models for each client. Conversely, as personalization degree decreases, the accuracy approaches that of GlobalTrain, which represents purely global training. This interpolation behavior between LocalTrain and GlobalTrain, enabled by Problem (\ref{obj1:fedprox}), aligns well with our theoretical results in Theorem \ref{thm:stat_accuracy}. 

Moreover, as dataset homogeneity increases (i.e., clients share more similar data distributions), we observe a general improvement in accuracy across all methods, regardless of the level of personalization. This result demonstrates that while personalization offers significant benefits in highly heterogeneous settings, its necessity diminishes as heterogeneity decreases. Additionally, experiments with different CNN architectures confirm the robustness of these findings across model designs. Although we did not provide a theoretical analysis for \texttt{pFedMe} in this paper, our results suggest that the empirical behavior of \texttt{pFedMe} is consistent with the theoretical insights derived for \texttt{FedCLUP}. In particular, we observe that \texttt{pFedMe} also exhibits interpolation between local and global learning under varying levels of data heterogeneity. We observe in Table \ref{tab:comparison} that \textit{LocalTrain consistently underperforms across all levels of statistical heterogeneity}. This is because, while statistical heterogeneity is introduced by restricting each client’s exposure to only a subset of classes in the training process, evaluation is conducted on the entire dataset, including classes not seen during training. As a result, LocalTrain's performance remains lower than that of other methods.

\textbf{The Effect of Personalization on Statistical Error over different data heterogeneity} Figure \ref{app1:stat} demonstrates the effect of personalization on statistical error across varying levels of statistical heterogeneity $R$. As demonstrated in the theory, increasing personalization (moving from \texttt{FedCLUP} (low per) to \texttt{FedCLUP} (high per)) shifts the solution closer to LocalTrain, while decreasing personalization makes the model behave more like GlobalTrain. For low heterogeneity, methods with lower personalization (i.e., greater collaboration) achieve lower statistical error, as information can be effectively shared across clients to improve learning. GlobalTrain achieves the lowest error in this regime, while \texttt{FedCLUP} with less personalization also performs well. However, as statistical heterogeneity increases, collaborative learning becomes less effective. In this setting, LocalTrain outperforms other methods. Highly personalized \texttt{FedCLUP} also performs better than less personalized variants, aligning with the theory we established in Theorem \ref{thm:stat_accuracy}.
\begin{figure}[h]  
    \centering
    \includegraphics[width=0.5\linewidth]{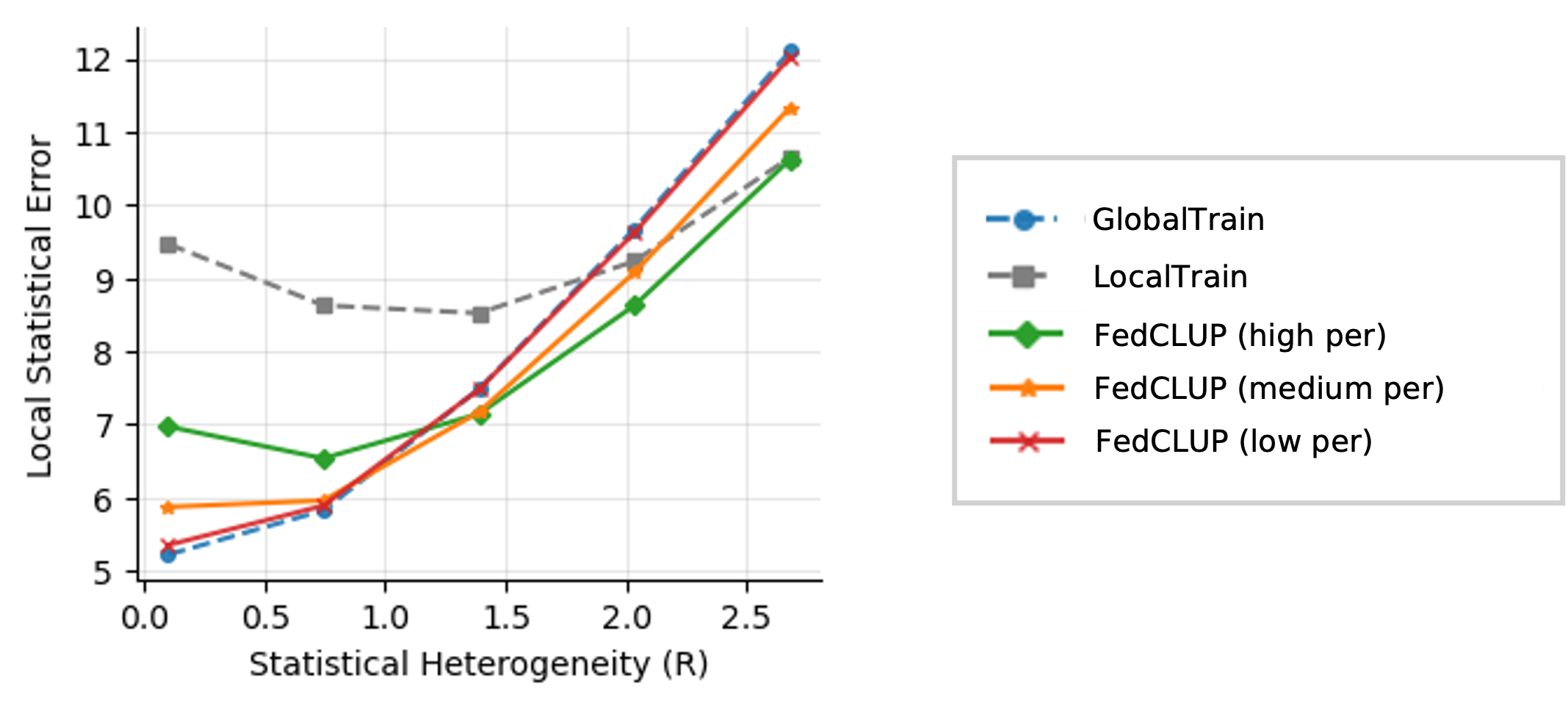}
    \caption{Impact of personalization on statistical error of \texttt{FedCLUP} solution across different levels of heterogeneity (high per means training \texttt{FedCLUP} with high personalization degree).}
     \label{app1:stat}
\end{figure}
In Figure \ref{fig:local_mnist}, we track testing loss as a function of communication rounds. Across all sub-figures, \texttt{FedCLUP} with smaller  $\lambda$  (higher personalization) demonstrates faster convergence, especially in the early stages of training. Thus, it shows that the trade-off between statistical accuracy and computation cost under the influence of personalization widely exists over statistical heterogeneity.
\begin{figure}[h] 
    \centering
    \includegraphics[width=0.8\linewidth]{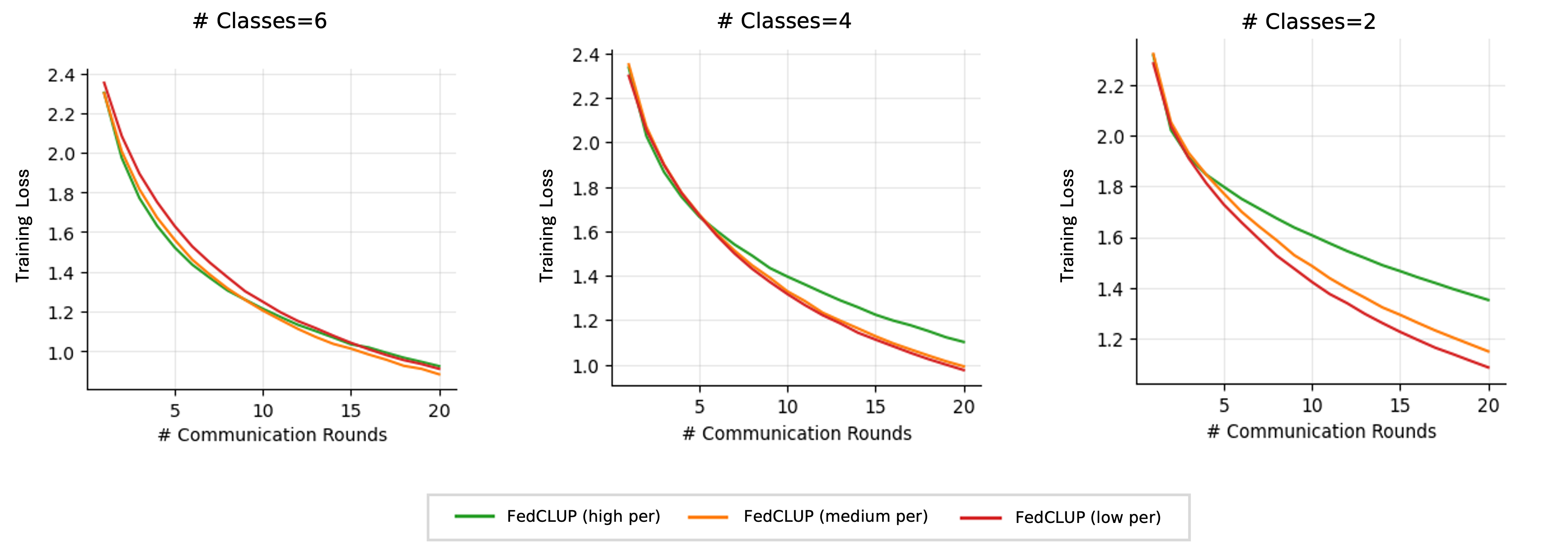}
    \caption{Logistic Regression training loss versus communication rounds for the \texttt{FedCLUP} methods on MNIST dataset (high per refers to training \texttt{FedCLUP} with high personalization degree).}
    \label{fig:local_mnist}
\end{figure}
\begin{figure}[h] 
    \centering
    \includegraphics[width=0.8\linewidth]{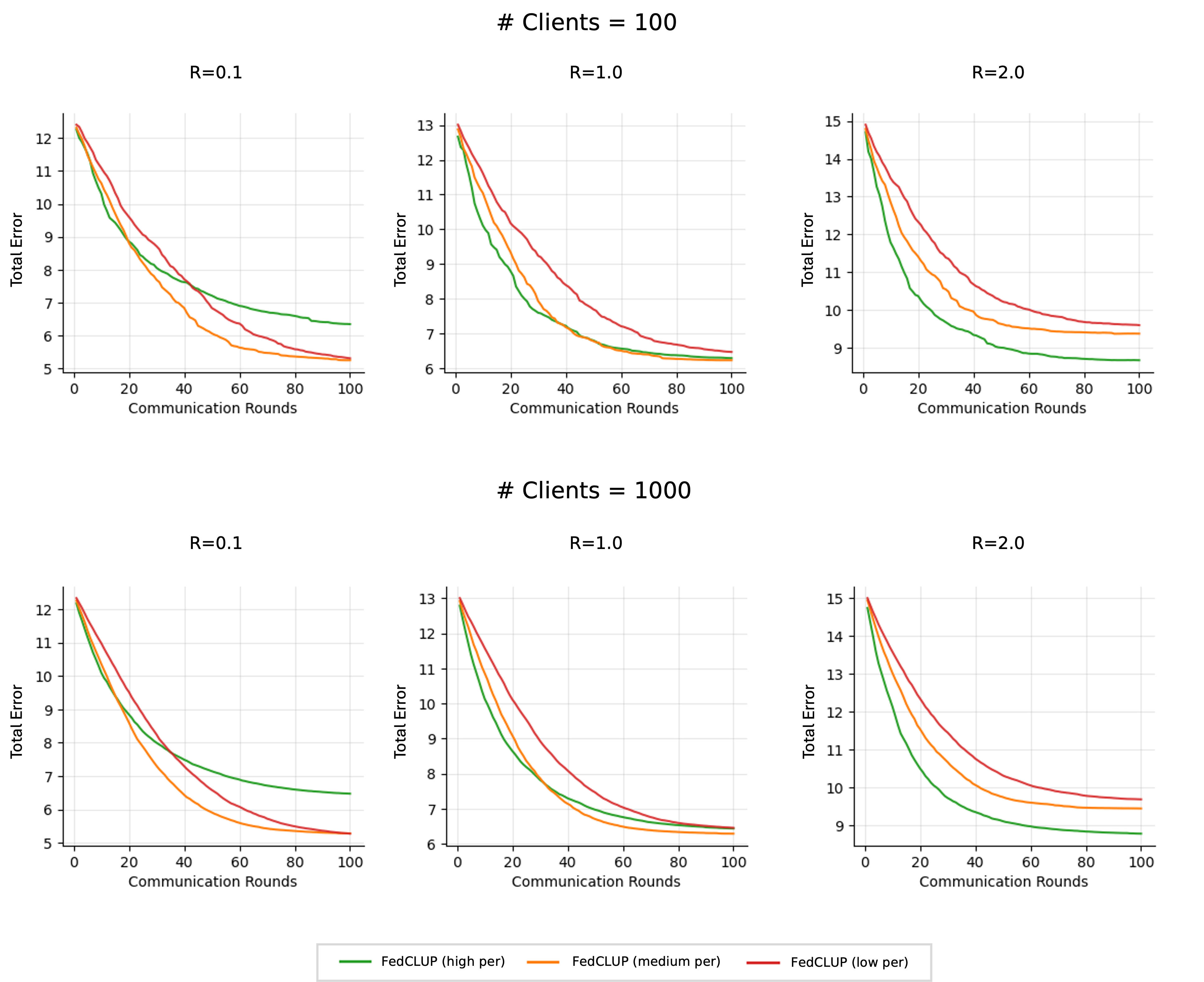}
    \caption{Performance of \texttt{FedCLUP} on the Synthetic dataset with 100 clients (top row) and 1,000 clients (bottom row) with 10\% client sampling under different levels of statistical heterogeneity ($R$).}
    \label{fig:scale}
\end{figure}
\textbf{Different Number of Clients}
In Figure \ref{fig:scale}, the top row shows results for 100 clients, and the bottom row shows results for 1,000 clients, and in both results we adopt a client sampling of 10\% in each communication round. This tests the scalability and generalizability of the our results under typical federated learning setups. Consistent with the findings from Figure \ref{fig:stat}, \texttt{FedCLUP} with smaller $ \lambda$  (higher personalization) consistently achieves faster convergence at the early stages of training, while \texttt{FedCLUP} with larger  $\lambda$  (lower personalization) takes more communication rounds to converge. As statistical heterogeneity increases (measured by $R$), \texttt{FedCLUP} with smaller  $\lambda $ gradually outperforms models with larger  $\lambda $, showcasing the adaptability of solving Problem \ref{obj1:fedprox} to different levels of heterogeneity. These results further validate the theoretical insights from Theorem \ref{thm:stat_accuracy} and Theorem \ref{thm:aer}, highlighting the influence of personalization on statistical and optimization performance in federated setups with larger numbers of clients.

\textbf{The Effect of Personalization on Communication Cost} Figure \ref{fig:opt_comp} investigates how personalization impacts communication efficiency in \texttt{FedCLUP} by analyzing the benefit of increasing local updates under different personalization levels. In the low-personalization setting (left column), more local updates significantly accelerate convergence, reducing the reliance on frequent communication. However, in the high-personalization setting (right column), increasing local updates has a limited effect on convergence, indicating that frequent communication is essential for effective learning. This demonstrates that higher personalization requires more communication rounds to achieve comparable performance, aligning with our theoretical findings.

\subsection{Additional Experiment for Rebuttal}

\subsubsection{Strongly Convex Setting with Synthetic Data}

As the theoretical results is established under strongly-convex settings, hence some expiremnts under strongly convex setting is expected with discussion related to the conditional number $\kappa$. We conduct an experiment on strongly convex problem: an overdeterminded linear regression task. We strictly follow the choice of local step size, local computation rounds, and global step size as specified in Corollary \ref{cor:complexity}. As shown in Table \ref{fig:7} and \ref{fig:8}, for a fixed personalization parameter $\lambda$, we observe that longer value of $\kappa$ result in slower convergence rates with respect tot he number of communication rounds. This expirical trend is consistent with our theoretical prediction in Corollary \ref{cor:complexity},  where the number of communication rounds required to achieve a given target error $\epsilon$ scales with $\mathcal{O}(\kappa \frac{\lambda + L}{\lambda + \mu} \log(1/\varepsilon))$. Moreover, we observe that the impact of increasing $\kappa$ becomes stronger as $\lambda$ increases. This phenomenon aligns with our theoretical analysis, which shows that the sensitivity of communication complexity to $\kappa$ is amplified for larger values of $\lambda$. Similarly, we observe that increasing the condition number $\kappa$ also leads to a higher total number of gradient evaluations required to reach a target error. This observation is in agreement with our theoretical results where the total number of local gradient evaluations scales as $\mathcal{O}(\kappa\log(1/\varepsilon))$.

\begin{figure}[h]
    \centering
    \includegraphics[width=1\linewidth]{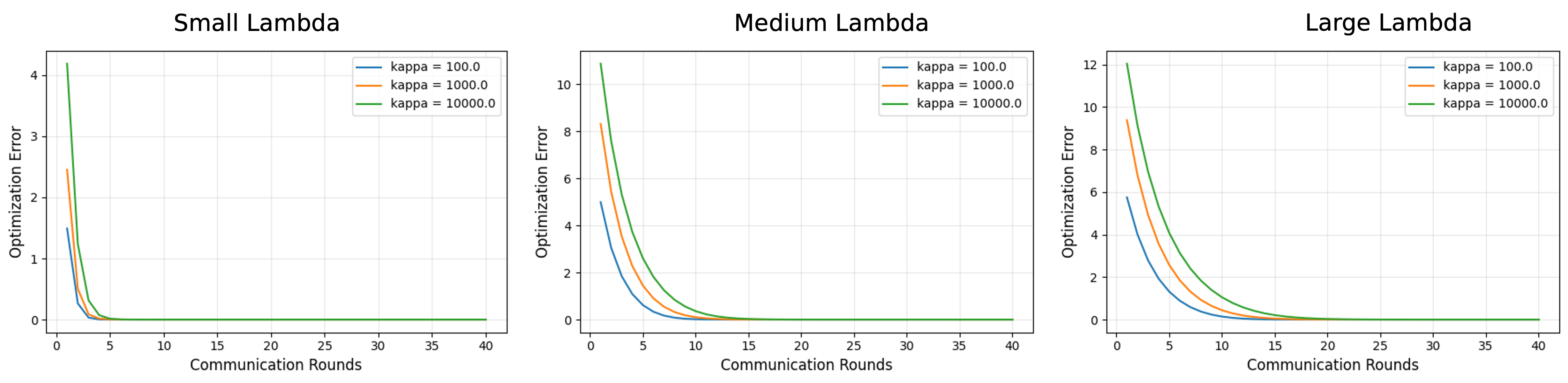}
    \caption{Optimization error versus communication rounds under varying condition numbers $\kappa$. The left, middle, and right panels correspond to FedCLUP with small, medium, and large values of $\lambda$, respectively.
Across all settings, we observe that larger $\kappa$ leads to slower convergence, with the effect becoming more pronounced as $\lambda$ increases.}
    \label{fig:7}
\end{figure}

\begin{figure}[h]
    \centering
    \includegraphics[width=0.3\linewidth]{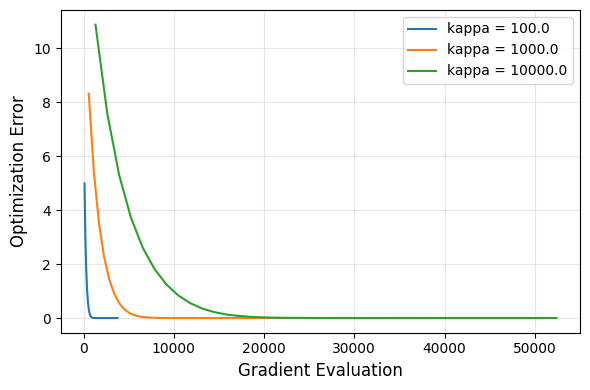}
    \caption{Optimization error versus total number of local gradient evaluations per client under varying condition numbers $\kappa$. As $\kappa$ increases, the number of gradient evaluations required to achieve a given optimization error also increases.}
    \label{fig:8}
\end{figure}

\subsubsection{Datasets with Natural Partitions}

\begin{figure}[H]
    \centering
    \begin{minipage}{0.45\textwidth}
        \centering
        \includegraphics[width=\linewidth]{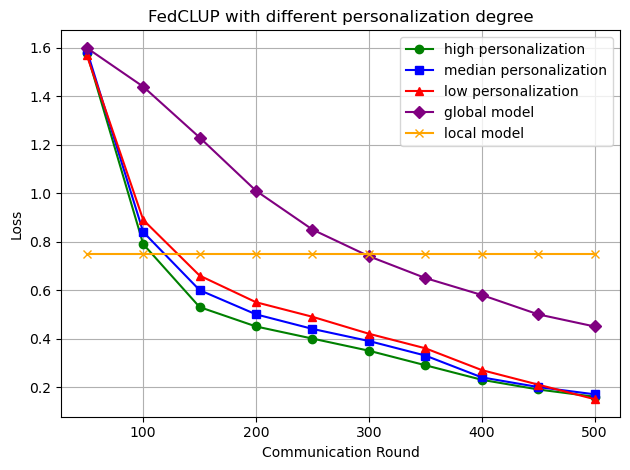}
        \vspace{2mm}
        (a) The loss of FedCLUP with CelebA
    \end{minipage}
    \hfill
    \begin{minipage}{0.45\textwidth}
        \centering
        \includegraphics[width=\linewidth]{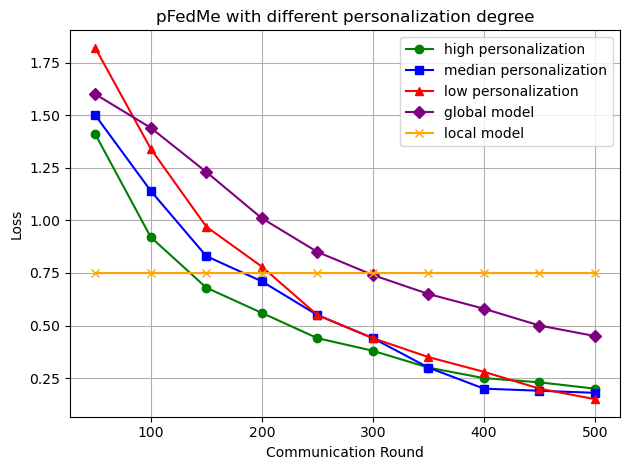}
        \vspace{2mm}
        (b) The loss of pFedMe with CelebA
    \end{minipage}
    \caption{
    Loss of \textbf{FedCLUP} (a) and \textbf{pFedMe} (b) with different personalization degrees on the CelebA dataset. Low, median, and high personalization correspond to regularization terms 0.5, 0.1, and 0.001, respectively. For FedCLUP, the test accuracy is highest with low personalization (0.919), followed by median (0.915) and high (0.888). For pFedMe, the test accuracy is highest for low personalization (0.910), followed by median (0.901) and high (0.890). The test accuracy for GlobalTrain and LocalTrain is 0.911 and 0.561, respectively.
    }
    \label{fig:10}
\end{figure}

\begin{figure}[H]
    \centering

    \begin{minipage}{0.45\textwidth}
        \centering
        \includegraphics[width=\linewidth]{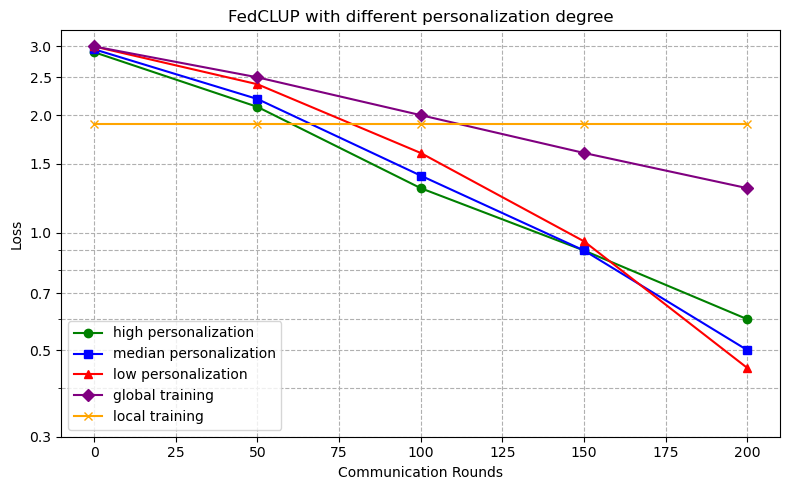}
        \vspace{2mm}        
        (a) The loss of FedCLUP with Sent140
    \end{minipage}
    \hfill
    \begin{minipage}{0.45\textwidth}
        \centering
        \includegraphics[width=\linewidth]{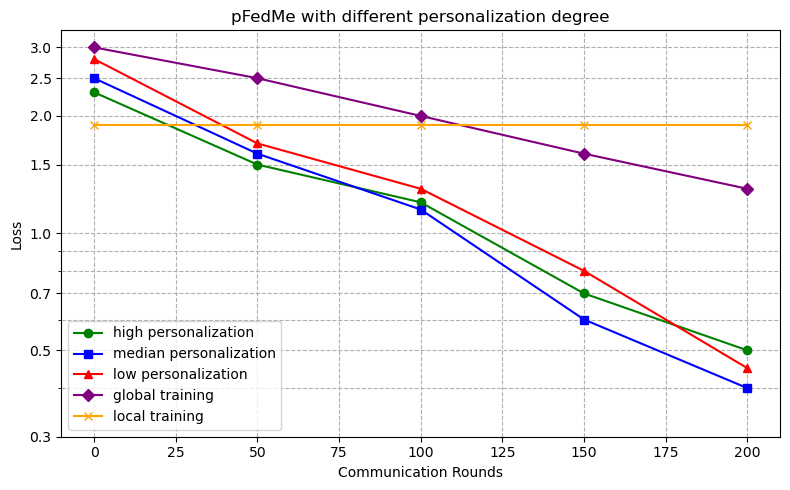}
        \vspace{2mm}
        (b) The loss of pFedMe with Sent140
    \end{minipage}
    \caption{
    Loss of \textbf{FedCLUP} (a) and \textbf{pFedMe} (b) with different personalization degrees on the Sent140 dataset. Low, median, and high personalization correspond to regularization terms 0.1, 0.005, and 0.001, respectively. For FedCLUP, the test accuracy is highest for low personalization (0.734), followed by median (0.702) and high (0.696). For pFedMe, the test accuracy is highest for low personalization (0.727), followed by median (0.703) and high (0.701). The test accuracy for GlobalTrain and LocalTrain is 0.730 and 0.353, respectively.
    }
    \label{fig:11}
\end{figure}

\end{document}